\providecommand{\customgenericname}{}
\newtheorem{thm}{Theorem}
\newtheorem{cor}{Corollary}
\newtheorem{defi}{Definition}
\newtheorem{lem}{Lemma}
\newtheorem{rmk}{Remark}
\newtheorem{assump}{Assumption}
\begin{document}

\title{On the exact computation of linear frequency principle dynamics and its generalization\thanks{Authors are listed alphabetically.}}

\author{%
    \name Tao\ Luo \email luotao41@sjtu.edu.cn \\
    \name Zheng Ma \email zhengma@sjtu.edu.cn \\
    \name Zhi-Qin John Xu \email xuzhiqin@sjtu.edu.cn \\
    \name Yaoyu Zhang \email zhyy.sjtu@sjtu.edu.cn \\
    \addr School of Mathematical Sciences, Institute of Natural Sciences, MOE-LSC and Qing Yuan Research Institute,
    Shanghai Jiao Tong University, Shanghai, 200240, P.R. China}

\editor{}
\date{\today}

% \footnotetext[1]{The first two authors contributed equally.}
% \footnotetext[2]{Corresponding author.}

% \jmlrheading{1}{1993}{1-15}{8/93; Revised 10/93}{12/93}{14-115}{Jane Q. Public and A. U. Thor}
\maketitle

\begin{abstract}
    Recent works show an intriguing phenomenon of Frequency Principle (F-Principle) that deep neural networks (DNNs) fit the target function from low to high frequency during the training, which provides insight into the training and generalization behavior of DNNs in complex tasks. In this paper, through analysis of an infinite-width two-layer NN in the neural tangent kernel (NTK) regime, we derive the exact differential equation, namely Linear Frequency-Principle (LFP) model, governing the evolution of NN output function in the frequency domain during the training. Our exact computation applies for general activation functions with no assumption on size and distribution of training data. This LFP model unravels that higher frequencies evolve polynomially or exponentially slower than lower frequencies depending on the smoothness/regularity of the activation function. We further bridge the gap between training dynamics and generalization by proving that LFP model implicitly minimizes a Frequency-Principle norm (FP-norm) of the learned function, by which higher frequencies are more severely penalized depending on the inverse of their evolution rate. Finally, we derive an \textit{a priori} generalization error bound controlled by the FP-norm of the target function, which provides a theoretical justification for the empirical results that DNNs often generalize well for low frequency functions.
\end{abstract}
\begin{keywords}
two-layer neural network, neural tangent kernel, frequency principle, generalization, optimization
\end{keywords}
\section{Introduction}
% Deep neural networks (DNNs) have achieved success in many areas of applications. However, why and how they work remains a mystery. Conventional statistical learning theory have difficulty in providing a satisfactory theoretical explanation. For example, DNNs often generalizes well even when the number of parameters are much larger than the number of parameters \citep{breiman1995reflections,zhang2016understanding}. Meanwhile, it has been demonstrated that DNNs have difficulty in learning specific types of functions \citep{nye2018efficient,abbe2018provable}, such as the parity function. 

Recently, an intriguing phenomenon of Frequency Principle (F-Principle) sheds light on understanding the success and failure of DNNs. It is discovered that, in various settings, deep neural networks (DNNs) fit the target function from low to high frequency during the training \citep{xu_training_2018,rahaman2018spectral,xu2019frequency}. The F-Principle implies that DNNs are biased toward a low-frequency fitting of the training data, which provides hints to the generalization of DNNs in practice \citep{xu2019frequency,ma2020slow}. The F-Principle provided valuable guidance in designing DNN-based algorithms \citep{cai_phase_2019,biland2019frequency,jagtap_adaptive_2020,liu2020multi}. The convergence behavior from low to high frequency is also consistent with other empirical studies showing that DNNs increase the complexity of the output function during the training process quantified by various complexity measures \citep{arpit2017closer,valle2018deep,mingard2019neural,nakkiran2019sgd}. 

Despite of the rich practical implications of the F-Principle, the gap between F-Principle training dynamics and success or failure of DNNs (i.e., generalization performance) remains a key theoretical challenge. Bridging this gap requires an exact characterization of the F-Principle accounting for the conditions of overparameterization and finite training data in practice, which is not provided by existing theories \citep{basri2019convergence,bordelon2020spectrum,cao_towards_2020,e2019machine} % {\color{red}[cite other analytical work about frequency principle]}. %based on idealized DNNs, e.g., deep linear network \citep{saxe_exact_2013,saxe_information_2019,lampinen_analytic_2019}, committee machine \citep{engel_statistical_2001,aubin_committee_2018}, spin glass model \citep{choromanska_loss_2015}, mean-field model \citep{mei_mean_2018,rotskoff_parameters_2018,chizat_global_2018,sirignano_mean_2020}, neural tangent kernel (NTK) \citep{jacot_neural_2018,lee_wide_2019}. 

In this work, based on mean-field analysis of an infinite-width two-layer NN in the NTK regime, we derive the exact differential equation, namely Linear Frequency-Principle (LFP) model, governing the evolution of NN output function in the frequency domain during the training. Our exact computation applies for general activation functions with no assumption on size and distribution of training data. Our LFP model rigorously characterizes the F-Principle and unravels that higher frequencies evolves polynomially or exponentially slower than lower frequencies depending on the smoothness/regularity of the activation function. We further prove that LFP dynamics implicitly minimizes a Frequency-Principle norm (FP-norm), by which higher frequencies are more severely penalized depending on the inverse of their evolution rate. Specifically, for 1-d regression problems, this optimization yields linear spline, cubic spline or their combination depending on parameter initialization for ReLU activation. Finally, we derive an \textit{a priori} generalization error bound controlled by the FP-norm of the target function, which provides a unified qualitative explanation to the success and failure of DNNs. These three results are demonstrated by Theorems \ref{mainthm}, \ref{thm..EquivalenceDynamicsMinimization} and \ref{thm:priorierror}, respectively. For a better understanding of how we arrive three theorems, we depict the sketch of proofs for each theorem in Fig. \ref{sketch}.

The structure of the paper is organized as follows. We review related works in Section~\ref{sec:relatedworks}. Before we present our results, we introduce some preliminaries in Section~\ref{sec:Preliminaries}. Then, we show the exact computation of the LFP model in Section~\ref{sec:lfpmodel}. In Section \ref{sec:Explicitizing-the-implicit}, we explicitize the implicit bias of the F-Principle by proving the equivalence between the LFP model and an optimization problem. Further, we estimate an \textit{a priori} generalization error bound for the LFP model in Section~\ref{FPapriori}. In Section~\ref{sec:exps}, we use experiments to validate the effectiveness of the LFP model for ReLU and Tanh activation functions. Finally, we present conclusions and discussion in Section~\ref{sec:discussion}.
\begin{center}
\begin{figure}
\begin{centering}
\includegraphics[scale=0.75]{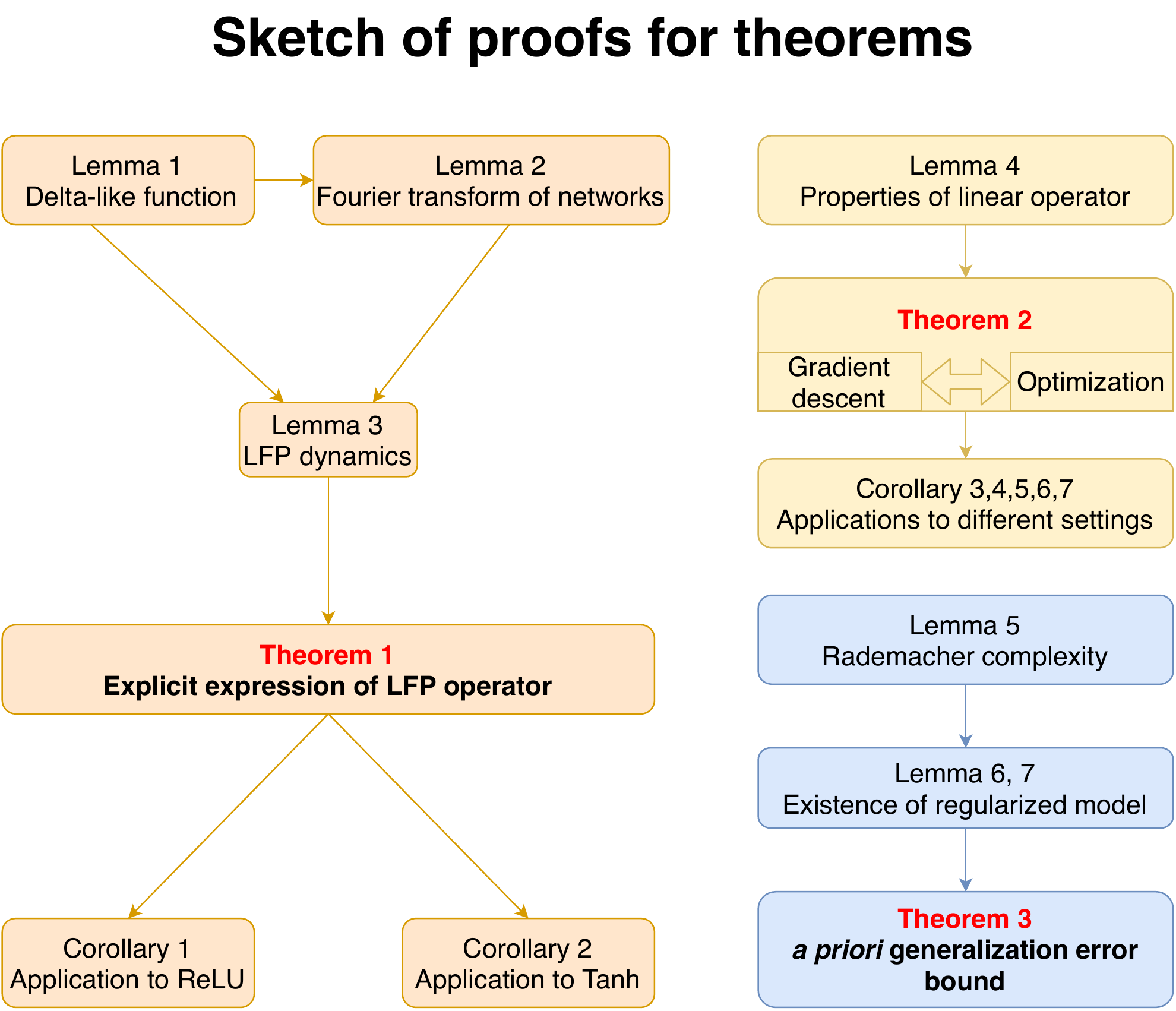}
\par\end{centering}
\caption{Main theoretical results and sketch of proofs. \label{sketch} }
\end{figure}
\par\end{center}

\section{Related works}\label{sec:relatedworks}
A series of works have devoted to reveal underlying mechanisms of the F-Principle. \citet{xu2018understanding} and \citet{xu2019frequency} show that the gradient of low-frequency loss exponentially dominates that of high-frequency ones when parameters are small for DNNs with tanh activation. A key mechanism of F-Principle has been pointed out that the low-frequency dominant gradient is a consequence of the smoothness of the activation function. \citet{rahaman2018spectral} later extend the framework of tanh activation function to the ReLU activation function. \citet{luo2019theory} estimate the dynamics of different frequency components of the loss function for arbitrary data distribution with mild regularity assumption and sufficient large size of training data size.

At the same time with our work, several parallel works also analyze the F-Principle (or spectral bias) in the NTK regime.  \citet{basri2019convergence} and \citet{cao_towards_2020} estimate the convergence speed of each frequency for two-layer wide ReLU networks in the NTK regime with the assumption of a sufficient large size of training data uniformly distributed on a hyper-sphere. \citet{basri2020frequency} release the assumption on data distribution to a nonuniform one, which is restricted to two-dimensional sphere, and they derive a similar frequency bias for two-layer wide ReLU networks in the NTK regime. \citet{bordelon2020spectrum} study the dependence of the spectral bias on the sample size. Several other works also focus on studying the spectral of Gram matrix in the NTK regime \citep{arora2019fine,yang2019fine}. 

In this work, our exact derivation of linear frequency principle dynamics makes no assumption about the distribution and size of training data. It is the first NN-derived quantitative model that not only shows the origin of the F-Principle but also can be used to analyze both its training and generalization consequence \footnote{A previous incomplete version of this work is released at arXiv \citep{zhang2019explicitizing}.}.

\section{Preliminaries} \label{sec:Preliminaries}
We provide some preliminary results in this section. 
\subsection{Fourier transforms}

The Fourier transform of a function $g$ is denoted by $\hat{g}$ or $\fF[g]$. The one-dimensional Fourier transform and its inverse transform is defined by
\begin{align}
    \fF[g](\xi)    & = \fF_{x\to\xi}[g](\xi) =\int_{\sR}g(x)\E^{-2\pi\I\xi x}\diff{x},    \\
    \fF^{-1}[g](x) & = \fF^{-1}_{\xi\to x}[g](x) =\int_{\sR}g(\xi)\E^{2\pi\I\xi x}\diff{\xi}.
\end{align}
Based on these, we define the high-dimensional Fourier transform and its inverse transform:
\begin{align}
    \fF[g](\vxi)     & = \fF_{\vx\to\vxi}[g](\vxi) = \int_{\sR^d}g(\vx)\E^{-2\pi\I\vxi\cdot\vx}\diff{\vx},  \\
    \fF^{-1}[g](\vx) & = \fF^{-1}_{\vxi\to\vx}[g](\vx) = \int_{\sR^d}g(\vxi)\E^{2\pi\I\vxi\cdot\vx}\diff{\vxi}.
\end{align}
Here and latter, the vector $\vx\in\sR^d$ and
$\vx^{\perp}=\vx-(\vx\cdot\hat{\vw})\hat{\vw}$ for a given
$\vw\in\sR^d\backslash \{\vzero \}$ with $\hat{\vw}=\vw/\norm{\vw}$. 
We list some useful and well-known results for one-dimensional as well as high-dimensional Fourier transforms in Appendix \ref{FTtable}.
To compute rigorously, we work in the theory of tempered distributions. Let $\fS(\sR^d)$ be the Schwartz space on $\sR^d$ and $\fS'(\sR^{d}):=(\fS(\sR^d))'$ is the space of tempered distributions. 
For any Schwartz function $\phi\in\fS(\sR^d)$ and any tempered distribution $\psi\in \fS'(\sR^{d})$, we write the pairing $\langle \psi,\phi\rangle:=\langle \psi,\phi\rangle_{\fS'(\sR^d),\fS(\sR^d)}=\psi(\phi)$, and then the Fourier transform of $\psi$ is defined by
\begin{equation}
    \langle \fF[\psi],\phi\rangle
    = \langle \psi,\fF[\phi]\rangle.
\end{equation}

\subsection{High-dimensional delta-like function}

\begin{defi}
    Given a nonzero vector $\vw\in\sR^d$, we define the delta-like function
    $\delta_{\vw}: \fS(\sR^d)\to\sR$ such that for any $\phi\in \fS(\sR^d)$,
    \begin{equation}
        \langle\delta_{\vw},\phi\rangle=\int_{\sR}
        \phi(y\vw)\diff{y}.
    \end{equation}
\end{defi}
\begin{lem}[Scaling property of delta-like function] Given any nonzero vector $\vw\in\sR^d$ with
    $\hat{\vw}=\frac{\vw}{\norm{\vw}}$, we have
    \begin{align}
        \frac{1}{\norm{\vw}^d}\delta_{\hat{\vw}}\left(
        \frac{\vx}{\norm{\vw}}\right)=\delta_{\vw}(\vx).
    \end{align}
\end{lem}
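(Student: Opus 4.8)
The plan is to unwind both sides as tempered distributions in $\fS'(\sR^d)$ and verify that they assign the same value to an arbitrary test function $\phi\in\fS(\sR^d)$. The only piece of real content is fixing the meaning of the dilation on the left-hand side: for $T\in\fS'(\sR^d)$ and a scalar $\lambda>0$, the rescaled distribution $\vx\mapsto T(\vx/\lambda)$ is \emph{defined} by pushing the dilation onto the test function, $\langle T(\cdot/\lambda),\phi\rangle=\lambda^d\langle T,\phi(\lambda\,\cdot)\rangle$, which is precisely the change-of-variables formula when $T$ is a locally integrable function. Applying this with $\lambda=\norm{\vw}$ (legitimate since $\vw\neq\vzero$) and with $T=\delta_{\hat{\vw}}$, the prefactor $\norm{\vw}^{-d}$ cancels the Jacobian $\norm{\vw}^d$, leaving
\[
    \left\langle \frac{1}{\norm{\vw}^d}\,\delta_{\hat{\vw}}(\cdot/\norm{\vw}),\,\phi\right\rangle
    =\left\langle \delta_{\hat{\vw}},\,\phi(\norm{\vw}\,\cdot)\right\rangle .
\]

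Next I would evaluate the right-hand side using the defining formula of $\delta_{\hat{\vw}}$ applied to the Schwartz function $\psi:=\phi(\norm{\vw}\,\cdot)$: by definition $\langle\delta_{\hat{\vw}},\psi\rangle=\int_{\sR}\psi(y\hat{\vw})\,\diff{y}=\int_{\sR}\phi(\norm{\vw}\,y\,\hat{\vw})\,\diff{y}$. Since $\norm{\vw}\,\hat{\vw}=\vw$ the integrand collapses to $\phi(y\vw)$, so the integral equals $\int_{\sR}\phi(y\vw)\,\diff{y}=\langle\delta_{\vw},\phi\rangle$ by the definition of $\delta_{\vw}$. Concatenating the two identities shows that $\frac{1}{\norm{\vw}^d}\delta_{\hat{\vw}}(\cdot/\norm{\vw})$ and $\delta_{\vw}$ agree on every $\phi\in\fS(\sR^d)$, i.e. they coincide in $\fS'(\sR^d)$. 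To be fully rigorous I would also record at the outset that these objects are genuine tempered distributions: for $\phi\in\fS(\sR^d)$ the rapid decay gives $|\phi(y\vw)|\le C_N(1+|y|\,\norm{\vw})^{-N}$, so $y\mapsto\phi(y\vw)$ is integrable and its integral is controlled by a Schwartz seminorm of $\phi$, making each pairing above well defined and continuous.

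I do not expect a genuine obstacle; the statement is a homogeneity/bookkeeping identity and the argument above is essentially complete. The single point that warrants care is the exponent in the dilation rule: because $\vw$ is a vector in $\sR^d$, the substitution $\vx\mapsto\vx/\norm{\vw}$ rescales all $d$ ambient coordinates, so the Jacobian is $\norm{\vw}^d$ rather than $\norm{\vw}$, and it is exactly this $\norm{\vw}^d$ that the prefactor on the left is designed to absorb. By contrast, the $\int_{\sR}$ in the definition of $\delta_{\vw}$ is a one-dimensional integral along the line $\sR\vw$, which is why no power of $\norm{\vw}$ appears there.
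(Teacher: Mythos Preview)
Your proposal is correct and follows essentially the same route as the paper: pair both sides with an arbitrary $\phi\in\fS(\sR^d)$, use the distributional dilation rule to pass from $\frac{1}{\norm{\vw}^d}\delta_{\hat{\vw}}(\cdot/\norm{\vw})$ to $\langle\delta_{\hat{\vw}},\phi(\norm{\vw}\cdot)\rangle$, apply the definition of $\delta_{\hat{\vw}}$ to get $\int_{\sR}\phi(\norm{\vw}\,y\,\hat{\vw})\,\diff{y}$, and collapse $\norm{\vw}\hat{\vw}=\vw$. Your added remark that $\delta_{\vw}$ is a genuine tempered distribution is a nice bit of bookkeeping the paper omits, but otherwise the arguments are identical.
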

\begin{proof} This is proved by changing of variables. In fact, for any $\phi\in\fS(\sR^d)$, we have
    \begin{align*}
        \left\langle\frac{1}{\norm{\vw}^d}\delta_{\hat{\vw}}\left(
        \frac{\cdot}{\norm{\vw}}\right),\phi(\cdot)\right\rangle_{\fS'(\sR^d),\fS(\sR^d)}
         & =\left\langle\delta_{\hat{\vw}}(\cdot), \phi(\norm{\vw}\cdot)\right\rangle_{\fS'(\sR^d),\fS(\sR^d)}\\
         & = \int_{\sR}\phi\left(\norm{\vw}y\hat{\vw}\right)\diff{y} \\
         & = \int_{\sR}\phi(y\vw)\diff{y}                                         \\
         & = \left\langle\delta_{\vw}(\cdot),\phi(\cdot)\right\rangle_{\fS'(\sR^d),\fS(\sR^d)}.
    \end{align*}
\end{proof}
\begin{lem}[Fourier transforms of network functions]
    For any unit vector $\vnu\in\sR^d$, any nonzero vector $\vw\in\sR^d$ with
    $\hat{\vw}=\frac{\vw}{\norm{\vw}}$, and $g\in\fS'(\sR)$ with
    $\fF[g]\in C(\sR)$, we have, in the sense of distribution,
    \begin{align}
        \text{(a)}\quad & \fF_{\vx\to\vxi}[g(\vnu^\T\vx)](\vxi)
          = \delta_{\vnu}(\vxi)\fF[g](\vxi^\T\vnu),                          \\
        \text{(b)}\quad & \fF_{\vx\to\vxi}[g(\vw^\T\vx+b)](\vxi)
          = \delta_{\vw}(\vxi)\fF[g]\left(
        \frac{\vxi^\T\hat{\vw}}{\norm{\vw}}\right)\E^{2\pi\I
            \frac{b}{\norm{\vw}}\vxi^\T\hat{\vw}},                            \\
        \text{(c)}\quad & \fF_{\vx\to\vxi}[\vx g(\vw^\T\vx+b)](\vxi)
          = \frac{\I}{2\pi}\nabla_{\vxi}\left[\delta_{\vw}(\vxi)\fF[g]\left(
            \frac{\vxi^\T\hat{\vw}}{\norm{\vw}}\right)\E^{2\pi\I \frac{b}{\norm{\vw}}
                \vxi^\T\hat{\vw}}\right].
    \end{align}
\end{lem}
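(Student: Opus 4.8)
The plan is to establish part (a) directly from the definition of the Fourier transform on $\fS'(\sR^d)$, and then to read off (b) and (c) by elementary one-dimensional manipulations.

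\textbf{Step 1 (part (a)).} Fix a test function $\phi\in\fS(\sR^d)$. I would interpret the plane wave $g(\vnu^\T\vx)$ as the pullback of $g\in\fS'(\sR)$ under $\vx\mapsto\vnu^\T\vx$ (equivalently, tensor $g$ with the constant $1$ on the orthogonal complement $\vnu^\perp$ and compose with a rotation), so that its action on a Schwartz function $\Phi$ is $\langle g(\vnu^\T\cdot),\Phi\rangle=\langle g,\,s\mapsto\int_{\vnu^\perp}\Phi(s\vnu+\vv)\diff{\vv}\rangle_{\fS'(\sR),\fS(\sR)}$, the inner function of $s$ being Schwartz. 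By the definition of the distributional Fourier transform,
\begin{align*}
\langle\fF_{\vx\to\vxi}[g(\vnu^\T\vx)],\phi\rangle
 &=\langle g(\vnu^\T\cdot),\fF[\phi]\rangle \\
 &=\Big\langle g,\ s\mapsto\int_{\vnu^\perp}\fF[\phi](s\vnu+\vv)\diff{\vv}\Big\rangle.
\end{align*}
I would then invoke the projection--slice identity $\int_{\vnu^\perp}\fF[\phi](s\vnu+\vv)\diff{\vv}=\fF_{t\to s}[\phi(t\vnu)](s)$ (a short direct computation: the $\vv$-integration over $\vnu^\perp$ collapses the Fourier integral to the line $\sR\vnu$ by one-dimensional Fourier inversion in the perpendicular variables), move the one-dimensional transform back onto $g$, and use the hypothesis $\fF[g]\in C(\sR)$ to evaluate the resulting pairing as $\int_{\sR}\fF[g](t)\,\phi(t\vnu)\diff{t}$. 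Since on the support of $\delta_\vnu$ one has $\vxi=t\vnu$ and hence $\vxi^\T\vnu=t$ (using $\norm{\vnu}=1$), this equals $\langle\delta_\vnu(\vxi)\fF[g](\vxi^\T\vnu),\phi\rangle$, which proves (a). Alternatively, one can rotate so that $\vnu$ becomes the first coordinate axis and combine the tensor-product rule $\fF[g\otimes 1\otimes\cdots\otimes 1]=\fF[g]\otimes\delta\otimes\cdots\otimes\delta$ with the orthogonal invariance of $\fF$ on $\fS'$; the delta-like distribution appears exactly as the pullback of the axis-aligned $\delta^{\otimes(d-1)}$.

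\textbf{Step 2 (part (b)).} Write $\hat\vw=\vw/\norm{\vw}$ and set $\tilde g(s):=g(\norm{\vw}\,s+b)$, so that $g(\vw^\T\vx+b)=\tilde g(\hat\vw^\T\vx)$ with $\hat\vw$ a unit vector. Applying (a) gives $\fF_{\vx\to\vxi}[g(\vw^\T\vx+b)](\vxi)=\delta_{\hat\vw}(\vxi)\,\fF[\tilde g](\vxi^\T\hat\vw)$. The one-dimensional scaling and shift rules (Appendix~\ref{FTtable}) give $\fF[\tilde g](\zeta)=\frac{1}{\norm{\vw}}\E^{2\pi\I b\zeta/\norm{\vw}}\fF[g](\zeta/\norm{\vw})$; substituting $\zeta=\vxi^\T\hat\vw$ and using the elementary identity $\delta_{\hat\vw}=\norm{\vw}\,\delta_{\vw}$ (immediate from the definition via the substitution $u=\norm{\vw}y$, i.e.\ the scaling property of the delta-like function restricted to its support line) to absorb the factor $1/\norm{\vw}$, one obtains precisely the claimed formula. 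Here $\fF[g]\in C(\sR)$ is again what makes the product of the delta-like distribution with $\fF[g](\vxi^\T\hat\vw/\norm{\vw})$ (and with the unimodular exponential) well defined, and on the support $\vxi=y\vw$ the argument $\vxi^\T\hat\vw/\norm{\vw}$ reduces to $y$.

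\textbf{Step 3 (part (c)).} From $x_j\E^{-2\pi\I\vxi\cdot\vx}=\frac{\I}{2\pi}\partial_{\xi_j}\E^{-2\pi\I\vxi\cdot\vx}$ together with the definitions of the Fourier transform and the distributional derivative on $\fS'$, one has coordinatewise $\fF_{\vx\to\vxi}[x_j\psi](\vxi)=\frac{\I}{2\pi}\partial_{\xi_j}\fF[\psi](\vxi)$, hence $\fF_{\vx\to\vxi}[\vx\psi](\vxi)=\frac{\I}{2\pi}\nabla_{\vxi}\fF[\psi](\vxi)$ for any $\psi\in\fS'(\sR^d)$. Taking $\psi=g(\vw^\T\vx+b)$ and inserting the formula established in (b) yields (c) at once.

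\textbf{Main obstacle.} The computations in Steps 2 and 3 are routine bookkeeping; the substantive part is the rigorous distributional set-up in Step 1 — giving the pullback $g(\vnu^\T\vx)$ a precise meaning, justifying the slicing/projection identity and the interchange of the hyperplane integral with the pairing $\langle g,\cdot\rangle$, and verifying that every object appearing along the way (a delta-like distribution multiplied by the continuous function $\fF[g]$ and by a unimodular exponential, and in (c) the gradient of such a product) is a bona fide tempered distribution. This last check is exactly the place where the hypothesis $\fF[g]\in C(\sR)$ is essential.
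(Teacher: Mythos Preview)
Your proposal is correct and follows essentially the same route as the paper: part~(a) is established by passing to a test function, reducing the $d$-dimensional pairing $\langle g(\vnu^\T\cdot),\fF[\phi]\rangle$ to the one-dimensional pairing $\langle g,\fF_{y\to\cdot}[\phi(y\vnu)]\rangle$ (your projection--slice identity is exactly this reduction made explicit), and then moving $\fF$ onto $g$; part~(c) is identical. The only cosmetic difference is in part~(b): the paper applies (a) with $\hat\vw$, then handles the shift $b$ via the $d$-dimensional translation rule and the dilation $\norm{\vw}$ via the full scaling lemma $\frac{1}{\norm{\vw}^d}\delta_{\hat\vw}(\cdot/\norm{\vw})=\delta_{\vw}$, whereas you absorb both into the one-dimensional $\tilde g(s)=g(\norm{\vw}s+b)$ and use only the line-restricted identity $\delta_{\hat\vw}=\norm{\vw}\,\delta_{\vw}$ --- the two computations are equivalent.
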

\begin{proof}
    Let $\phi\in\fS(\sR^d)$ be any test function.
    \begin{enumerate}[(a)] 
        \item By direct calculation, we have
              \begin{align*}
                 \left\langle\fF_{\vx\to\cdot}[g(\vnu^\T\vx)](\cdot),\phi(\cdot)\right\rangle_{\fS'(\sR^d),\fS(\sR^d)}
                   & = \left\langle g(\vnu^\T\cdot),\fF_{\vx\to\cdot}[\phi(\vx)](\cdot)\right\rangle_{\fS'(\sR^d),\fS(\sR^d)}\\
                   & = \left\langle g(\cdot),\fF_{y\to\cdot}[\phi(y\vnu)](\cdot)\right\rangle_{\fS'(\sR),\fS(\sR)}\\
                   & = \left\langle\fF_{y\to\cdot}[g(y)](\cdot),\phi(\cdot\vnu)\right\rangle_{\fS'(\sR),\fS(\sR)}\\
                   & = \left\langle\fF[g](\cdot\vnu^\T\vnu),\phi(\cdot\vnu)\right\rangle_{\fS'(\sR),\fS(\sR)}\\
                   & = \left\langle\delta_{\vnu}(\cdot)\fF[g](\cdot^\T\vnu)
                  ,\phi(\cdot)\right\rangle_{\fS'(\sR^d),\fS(\sR^d)}.
              \end{align*}
        \item By part (a), we have in the distributional sense
              \[
                  \fF_{\vx\to\vxi}[g(\hat{\vw}^\T\vx)](\vxi)=\delta_{\hat{\vw}}(\vxi)
                  \fF[g](\vxi^\T\hat{\vw}).
              \]
              Note that
              \[
                  \fF_{\vx\to\vxi}[g(\vx-\vx_0)](\vxi)
                  =\fF_{\vx\to\vxi}[g](\vxi)\E^{-2\pi\I\vx_0^\T\vxi},
              \]
              then
              \begin{align*}
                  \fF_{\vx\to\vxi}[g(\hat{\vw}^\T \vx+b)](\vxi)
                   & = \fF_{\vx\to\vxi}[g(\hat{\vw}^\T(\vx+b\hat{\vw}))](\vxi) \\
                   & = \delta_{\hat{\vw}}(\vxi)\fF[g](\vxi^\T\hat{\vw})
                  \E^{2\pi\I b\hat{\vw}^\T\vxi}.
              \end{align*}
              Therefore
              \begin{align*}
                  \fF_{\vx\to\vxi}[g(\vw^\T\vx+b)](\vxi)
                   & = \fF_{\vx\to\vxi}[g(\hat{\vw}^\T\norm{\vw}\vx+b)](\vxi)       \\
                   & = \frac{1}{\norm{\vw}^d}\fF_{\vx\to\vxi}[g(\hat{\vw}^\T\vx+b)]
                  \left(\frac{\vxi}{\norm{\vw}}\right)                              \\
                   & = \frac{1}{\norm{\vw}^d}\delta_{\hat{\vw}}\left(
                  \frac{\vxi}{\norm{\vw}}\right)\fF[g]\left(
                  \frac{\vxi^\T\hat{\vw}}{\norm{\vw}}\right)\E^{2\pi\I
                      \frac{b}{\norm{\vw}}\hat{\vw}^\T\vxi}                         \\
                   & = \delta_{\vw}(\vxi)\fF[g]\left(
                  \frac{\vxi^\T\hat{\vw}}{\norm{\vw}}\right)\E^{2\pi\I
                      \frac{b}{\norm{\vw}}\hat{\vw}^\T\vxi}.
              \end{align*}
        \item This follows from part (b) and the fact that for any function
              $\tilde{g}(\vx)$
              \begin{equation*}
                  \fF_{\vx\to\vxi}[\vx \tilde{g}(\vx)](\vxi)
                  = \frac{\I}{2\pi}\nabla_{\vxi}\left[\fF[g](\vxi)\right].
              \end{equation*}
    \end{enumerate}
\end{proof}

% We remark that the condition $g\in C(\sR)$ is necessary.

\section{Exact derivation of LFP model}\label{sec:lfpmodel}
In this section, we first present the general form of the LFP model for two-layer neural networks. Then, we exactly compute the LFP model in the Fourier domain and derive the expressions for two commonly-used activation functions, i.e., $\ReLU(x):=\max(x,0)$ and $\tanh(x)$. 

For any positive integer $N$, we denote the set $\{1,2,\cdots,N\}$ by $[N]$. The training data-set $S=\{(\vx_i,y_i)\})_{i=1}^n$, where $\{\vx_i\}_{i=1}^n$ are i.i.d. sampled from unknown distribution $\fD$ on a domain $\Omega\subset\sR^d$ and $y_i=f(\vx_i)$, $i\in[n]$ for some unknown function $f$.

\subsection{Mean-field kernel dynamics in frequency domain}

% The empirical risk related to the sample distribution $\rho(\vx)=\sum_{i=1}^n\delta(\vx-\vx_i)$ reads as
% \begin{equation*}
%     R_S(\vtheta)=\frac{1}{2}\int_{\sR^{d}}(f(\vx,\vtheta)-f(\vx))^{2}\rho(\vx)\diff{\vx}=\frac{1}{2}\sum_{i=1}^n(f(\vx_i,\vtheta)-y_i)^2
% \end{equation*}
% and the gradient descent training dynamics for the difference between
% neural network $f(\vx,\vtheta)$ and target function $f(\vx)$,
% that is $u(\vx)=f(\vx,\vtheta)-f(\vx)$. We further denote $u_{\rho}(\vx)=u(\vx)\rho(x)$. 

We suppose that $f\in C(\sR^d)\cap L^2(\sR^d)$ and that the activation function is locally $H^1$ and grows polynomially, i.e., $\abs{\sigma(z)}\leq C\abs{z}^p$ for some $p>0$. 
% Then $(f(\cdot,\vtheta)-f(\cdot))^{2}$ belongs to $\fS'(\sR^d)$ and $L(\vtheta)$ is well-defined for each $\vtheta$.

% Jacot et al. \citep{jacot_neural_2018} proved that large $m$, the gradient descent dynamics of a DNN is close to the following gradient descent dynamics of the linearized model

We consider the following gradient descent dynamics of the population risk $\RS$ of a network function $f(\cdot,\vtheta)$ parameterized by $\vtheta$
\begin{equation}
    \left\{
        \begin{array}{l}
             \dot{\vtheta}=-\nabla_{\vtheta}\RS(\vtheta),  \\
             \vtheta(0)=\vtheta_0,
        \end{array}
    \right.
\end{equation}
where
\begin{equation}
    \RS(\vtheta)
    = \frac{1}{2}\sum_{i=1}^n(f(\vx_i,\vtheta)-y_i)^2.
\end{equation}
Then the training dynamics of output function $f(\cdot,\vtheta)$ is
\begin{align*}
    \frac{\D}{\D t}f(\vx,\vtheta)
    &= \nabla_{\vtheta}f(\vx,\vtheta)\cdot\dot{\vtheta}\\
    &= -\nabla_{\vtheta}f(\vx,\vtheta)\cdot\nabla_{\vtheta}\RS(\vtheta)\\
    &= -\nabla_{\vtheta}f(\vx,\vtheta)\cdot\sum_{i=1}^n \nabla_{\vtheta}f(\vx_i,\vtheta)(f(\vx_i,\vtheta)-y_i)\\
    &= -\sum_{i=1}^n K_m(\vx,\vx_i)(f(\vx_i,\vtheta)-y_i)
\end{align*}
where for time $t$ the NTK evaluated at $(\vx,\vx')\in\Omega\times\Omega$ reads as
\begin{equation}
    K_m(\vx,\vx')(t)=\nabla_{\vtheta}f(\vx,\vtheta(t))\cdot\nabla_{\vtheta}f(\vx',\vtheta(t)).
\end{equation}
The gradient descent of the linear model thus becomes
\begin{equation}
    \frac{\D}{\D t}\Big(f(\vx,\vtheta(t))-f(\vx)\Big)=-\sum_{i=1}^n K_m(\vx,\vx_i)(t)\Big(f(\vx_i,\vtheta(t))-f(\vx_i)\Big).
\end{equation}
Define the residual $\vu(\vx,t)=f(\vx,\vtheta(t))-f(\vx)$ and the empirical density $\rho(\vx)=\sum_{i=1}^n\delta(\vx-\vx_i)$. We further denote $u_{\rho}(\vx)=u(\vx)\rho(\vx)$. Therefore the dynamics for $u$ becomes
\begin{equation}
    \frac{\D}{\D t}u(\vx,t)=-\int_{\sR^d}K_m(\vx,\vx')(t)u_{\rho}(\vx',t)
    \diff{\vx'}.\label{eq..DynamicsFiniteWidth}
\end{equation}
From now on, we consider the two-layer neural network
\begin{align}
    f(\vx,\vtheta)
    &= \frac{1}{\sqrt{m}}\sum_{j=1}^{m}a_{j}\sigma(\vw_{j}^{\T}\vx+b_{j})\label{eq: 2layer-nn}\\
    &= \frac{1}{\sqrt{m}}\sum_{j=1}^{m}\sigma^{*}(\vx,\vq_{j}).
\end{align}
where the vector of all parameters  $\vtheta=\mathrm{vec}(\{\vq_j\}_{j=1}^m)$ is formed of the parameters for each neuron $\vq_{j}={(a_{j},\vw_{j}^{\T},b_{j})}^{\T}\in\sR^{d+2}$
and $\sigma^{*}(\vx,\vq_{j})=a_{j}\sigma(\vw_{j}^{\T}\vx+b_{j})$
for $j\in[m]$. We consider the kernel regime that $m\gg 1$ and assume that $b\sim\fN(0,\sigma_{b}^{2})$ with $\sigma_{b}\gg 1$. 
For the two-layer network, its NTK can be calculated as follows
\begin{equation}
    K_m(\vx,\vx')(t)= \frac{1}{m}\sum_{j=1}^m\nabla_{\vq_j}\sigma^*(\vx,\vq_j(t))\cdot\sigma^*(\vx',\vq_j(t)),
\end{equation}
where the parameters $\vq_j$'s are evaluated at time $t$.
Under some weak condition and for sufficiently large $m$, E et al. \citet{e2019comparative} proved that the dynamics \eqref{eq..DynamicsFiniteWidth}, with a high probability, converges to the following dynamics for any $t\in\sR$
\begin{equation}
    \frac{\D}{\D t}u(\vx,t)=-\int_{\sR^d}K(\vx,\vx')u_{\rho}(\vx',t)
    \diff{\vx'}.\label{eq..DynamicsInfiniteWidth}
\end{equation}
where the kernel only depends on the initial distribution of parameters and reads as
\begin{align}
    K(\vx,\vx')
    &= \Exp_{\vq} \nabla_{\vq}\sigma^*(\vx,\vq)\cdot\sigma^*(\vx',\vq)\\
    &= \Exp_{\vq} (\sigma(\vw^\T\vx+b)\sigma(\vw^\T\vx'+b)+a^2\sigma'(\vw^\T\vx+b)\sigma'(\vw^\T\vx'+b)\vx^\T\vx' \nonumber\\
    &~~+a^2\sigma'(\vw^\T\vx+b)\sigma'(\vw^\T\vx'+b)).
\end{align}
Intuitively, this is because $K_m(\vx,\vx')(t)=K(\vx,\vx')+O(\frac{1}{\sqrt{m}})$ according to the law of large numbers. 
In the following, we analyze \eqref{eq..DynamicsInfiniteWidth} and calculate its formulation in the frequency domain.

We start with the following lemma.
\begin{lem}[LFP dynamics for general DNNs]\label{lem:dynamics}
    The dynamics \eqref{eq..DynamicsInfiniteWidth} has the following expression in the frequency domain
    \begin{equation}
        \langle\partial_{t}\fF[u],\phi\rangle = \langle\fL[\fF[u_{\rho}]],\phi\rangle, \label{lfpprocess}
    \end{equation}
    where $\fL[\cdot]$ is called Linear F-Principle (LFP) operator is given by
    \begin{equation*}
        \fL[\fF[u_{\rho}]]=-\int_{\sR^{d}}\hat{K}(\vxi,\vxi')\fF[u_{\rho}](\vxi')\diff{\vxi'},
    \end{equation*}
    and 
    \begin{equation}
        \hat{K}(\vxi,\vxi')  :=\Exp_{\vq}\hat{K}_{\vq}(\vxi,\vxi') :=\Exp_{\vq}\fF_{\vx\to\vxi}[\nabla_{\vq}\sigma^{*}(\vx,\vq)]\cdot\overline{\fF_{\vx'\to\vxi'}[\nabla_{\vq}\sigma^{*}(\vx',\vq)]}.
    \end{equation}
    The expectation $\Exp_{\vq}$ is taken w.r.t. initial distribution of parameters.
\end{lem}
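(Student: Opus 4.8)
The plan is to take the Fourier transform of the infinite-width dynamics \eqref{eq..DynamicsInfiniteWidth} in the sense of tempered distributions, rewrite the coupling against $u_\rho$ as an integration over a frequency variable $\vxi'$, and then read off the kernel $\hat{K}$. Throughout, fix a test function $\phi\in\fS(\sR^d)$; by the definition of the distributional Fourier transform one has $\langle\partial_t\fF[u],\phi\rangle=\partial_t\langle u(\cdot,t),\fF[\phi]\rangle$, so it suffices to pair the right-hand side of \eqref{eq..DynamicsInfiniteWidth} with the Schwartz function $\fF[\phi]$ and then move each Fourier transform onto the appropriate factor.

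First I would unfold the kernel. Since $\sigma$ is locally $H^1$ with $\abs{\sigma(z)}\le C\abs{z}^p$, each map $\vx'\mapsto\nabla_{\vq}\sigma^*(\vx',\vq)$ lies in $\fS'(\sR^d)$ and its Fourier transform is the line-supported delta-like distribution computed in the lemma on Fourier transforms of network functions. Because $u_\rho(\cdot,t)=\sum_{i=1}^n u(\vx_i,t)\,\delta(\cdot-\vx_i)$ is compactly supported, the coupling $c(\vq,t):=\int_{\sR^d}\nabla_{\vq}\sigma^*(\vx',\vq)\,u_\rho(\vx',t)\diff{\vx'}=\sum_{i=1}^n u(\vx_i,t)\,\nabla_{\vq}\sigma^*(\vx_i,\vq)$ is a finite sum, and $\fF[u_\rho](\vxi',t)=\sum_{i=1}^n u(\vx_i,t)\,\E^{-2\pi\I\vxi'\cdot\vx_i}$ is a bounded smooth function; applying the Fourier inversion formula to $u_\rho$ and using that $\nabla_{\vq}\sigma^*$ is real then gives the reproducing-kernel-type identity $c(\vq,t)=\int_{\sR^d}\overline{\fF_{\vx'\to\vxi'}[\nabla_{\vq}\sigma^*(\vx',\vq)](\vxi')}\,\fF[u_\rho](\vxi',t)\diff{\vxi'}$. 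Using the kernel formula $K(\vx,\vx')=\Exp_{\vq}\nabla_{\vq}\sigma^*(\vx,\vq)\cdot\nabla_{\vq}\sigma^*(\vx',\vq)$ together with Fubini over the pair $(\vq,i)$, the dynamics \eqref{eq..DynamicsInfiniteWidth} becomes $\partial_t u(\vx,t)=-\Exp_{\vq}\big[\nabla_{\vq}\sigma^*(\vx,\vq)\cdot c(\vq,t)\big]$.

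Next I would pair this identity with $\fF[\phi]$, push the Fourier transform onto the factor $\nabla_{\vq}\sigma^*(\cdot,\vq)$ using the distributional Fourier pairing, and interchange $\Exp_{\vq}$ with the pairing against $\phi$ and with the $\vxi'$-integration inside $c(\vq,t)$. Substituting the expression for $c(\vq,t)$ and collecting terms yields $\langle\partial_t\fF[u],\phi\rangle=-\big\langle\int_{\sR^d}\hat{K}(\vxi,\vxi')\,\fF[u_\rho](\vxi',t)\diff{\vxi'},\phi\big\rangle$ with $\hat{K}(\vxi,\vxi')=\Exp_{\vq}\fF_{\vx\to\vxi}[\nabla_{\vq}\sigma^*(\vx,\vq)]\cdot\overline{\fF_{\vx'\to\vxi'}[\nabla_{\vq}\sigma^*(\vx',\vq)](\vxi')}$, that is, exactly $\langle\fL[\fF[u_\rho]],\phi\rangle$.

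The delicate point, and the reason the statement keeps $\phi$ outside the operator $\fL$, is that for each fixed $\vq$ the object $\fF_{\vx\to\vxi}[\nabla_{\vq}\sigma^*(\vx,\vq)]$ is a distribution supported on the line $\sR\vw$, so neither the product $\hat{K}_{\vq}(\vxi,\vxi')$ nor the inner $\vxi'$-integral is literally an integral of functions until one (i) pairs against $\phi$ and collapses the delta-like factors via the scaling lemma, and (ii) takes the parameter expectation, which smears the line-supported distributions into a genuine locally integrable kernel $\hat{K}$. I therefore expect the main work to be justifying the Fubini and dominated-convergence interchanges between $\Exp_{\vq}$, the $\phi$-pairing, and the $\vxi'$-integration; this is where the polynomial-growth bound $\abs{\sigma(z)}\le C\abs{z}^p$ and the finiteness of the Gaussian moments of $(a,\vw,b)$ (in particular the assumption $b\sim\fN(0,\sigma_b^2)$ with $\sigma_b\gg1$) are needed, and it is essentially the only step beyond careful bookkeeping.
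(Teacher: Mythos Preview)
Your proposal is correct and follows essentially the same route as the paper: pair with $\fF[\phi]$, split the kernel via $K(\vx,\vx')=\Exp_{\vq}\nabla_{\vq}\sigma^*(\vx,\vq)\cdot\nabla_{\vq}\sigma^*(\vx',\vq)$, use the Parseval-type identity to rewrite $\int\nabla_{\vq}\sigma^*(\vx',\vq)u_\rho(\vx')\diff{\vx'}$ as an integral of $\overline{\fF[\nabla_{\vq}\sigma^*]}$ against $\fF[u_\rho]$, and then read off $\hat{K}$. One small remark: the Gaussian assumption on $b$ (and $\sigma_b\gg1$) is not invoked at this stage in the paper---it enters only in the later Laplace-method computation for Theorem~\ref{mainthm}---so you need not appeal to it here.
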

\begin{proof}
    % The gradient descent training dynamics for the difference between
    % neural network $f(\vx,\vtheta)$ and target function $f(\vx)$,
    % that is $u=f(\vx,\vtheta)-f(\vx)$, can be written as 
    % \begin{align*}
    % \partial_{t}u(t,\vx) & =\partial_{t}f(\vx,\vtheta)\\
    %  & =\nabla_{\vtheta}f(\vx,\vtheta)\cdot\partial_{t}\vtheta\\
    %  & =-\nabla_{\vtheta}f(\vx,\vtheta)\cdot\nabla_{\vtheta}L\\
    %  & =-\nabla_{\vtheta}f(\vx,\vtheta)\cdot\int_{\sR^{d}}(f(\vx',\vtheta)-f(\vx'))\nabla_{\vtheta}f(\vx',\vtheta)\rho(\vx')\diff{\vx'}\\
    %  & =-\int_{\sR^{d}}\nabla_{\vtheta}f(\vx,\vtheta)\cdot\nabla_{\vtheta}f(\vx',\vtheta)u_{\rho}(\vx')\diff{\vx'},
    % \end{align*}
    % that is,
    % \begin{equation}
    %     \partial_{t}u(t,\vx)=-\int_{\sR^{d}}K(\vx,\vx')u_{\rho}(\vx')\diff{\vx'},
    % \end{equation}
    % where $u_{\rho}(\vx)=u(\vx)\rho(\vx)$ and the kernel function $K(\vx,\vx')=\nabla_{\vtheta}f(\vx,\vtheta)\cdot\nabla_{\vtheta}f(\vx',\vtheta)$.
    % To see the dynamics in Fourier space, we need to take the Fourier
    % transform to the above equation with respect to $\vx$. 
    
    % Note that we do not use the conjugate for $v$. 
    For any $\phi\in \fS(\sR^{d})$. since $\partial_t u$ is in $\fS'(\sR^d)$ and locally integrable, we have
    
    \begin{align*}
        \langle\partial_{t}\fF[u],\phi\rangle
        &= \langle\partial_{t}u,\fF[\phi]\rangle\\
        &= \int_{\sR^{d}}\partial_{t}u(\vx,t)\int_{\sR^{d}}\phi(\vxi)\E^{-\I2\pi\vx\cdot\vxi}\diff{\vxi}\diff{\vx}\\
        &= -\int_{\sR^{d}}\int_{\sR^{d}}K(\vx,\vx')u_{\rho}(\vx')\diff{\vx'} \int_{\sR^{d}}\phi(\vxi)\E^{-\I2\pi\vx\cdot\vxi}\diff{\vxi}\diff{\vx}\\
        &= -\int_{\sR^{3d}}K(\vx,\vx')u_{\rho}(\vx')\diff{\vx'}\phi(\vxi)\E^{-\I2\pi\vx\cdot\vxi}\diff{\vxi}\diff{\vx}\\
        &= -\int_{\sR^{3d}}\Exp_{\vq}\nabla_{\vq}\sigma^*(\vx,\vq)\cdot\nabla_{\vq}\sigma^*(\vx',\vq)u_{\rho}(\vx')\diff{\vx'}\phi(\vxi)\E^{-\I2\pi\vx\cdot\vxi}\diff{\vxi}\diff{\vx}\\
        &= -\Exp_{\vq}\int_{\sR^{d}}\nabla_{\vq}\sigma^*(\vx',\vq)u_{\rho}(\vx')\diff{\vx'}\cdot\int_{\sR^{2d}}\nabla_{\vq}\sigma^*(\vx,\vq)\E^{-\I2\pi\vx\cdot\vxi}\phi(\vxi)\diff{\vxi}\diff{\vx}\\
        &= -\Exp_{\vq}\int_{\sR^{d}}\nabla_{\vq}\sigma^*(\vx',\vq)u_{\rho}(\vx')\diff{\vx'}\cdot
        \left\langle\fF_{\vx\to\cdot}[\nabla_{\vq}\sigma^*(\vx,\vq)](\cdot),\phi(\cdot)\right\rangle.
    \end{align*}
    Since 
    \begin{equation*}
        \int_{\sR^{d}}\nabla_{\vq}\sigma^*(\vx',\vq)u_{\rho}(\vx')\diff{\vx'}
        =\int_{\sR^{d}}\overline{\fF_{\vx'\to\vxi'}[\nabla_{\vq}\sigma^*(\vx',\vq)](\vxi')}\fF_{\vx'\to\vxi'}[u_{\rho}](\vxi')\diff{\vxi'},
    \end{equation*}
    we have
    \begin{align*}
        \langle\partial_{t}\fF[u],\phi\rangle
        &= -\Exp_{\vq}\int_{\sR^{d}}\overline{\fF_{\vx'\to\vxi'}[\nabla_{\vq}\sigma^*(\vx',\vq)](\vxi')}\fF_{\vx'\to\vxi'}[u_{\rho}](\vxi')\diff{\vxi'}\cdot\left\langle\fF_{\vx\to\cdot}[\nabla_{\vq}\sigma^*(\vx,\vq)](\cdot),\phi(\cdot)\right\rangle\\
        &= -\Exp_{\vq}\int_{\sR^{2d}}\overline{\fF_{\vx'\to\vxi'}[\nabla_{\vq}\sigma^*(\vx',\vq)](\vxi')}\cdot\fF_{\vx\to\vxi}[\nabla_{\vq}\sigma^*(\vx,\vq)](\vxi)\fF_{\vx'\to\vxi'}[u_{\rho}](\vxi')\diff{\vxi'}\phi(\vxi)\diff{\vxi}\\
        &= -\int_{\sR^{2d}}\hat{K}(\vxi,\vxi')\fF[u_\rho](\vxi')\diff{\vxi'}\phi(\vxi)\diff{\vxi}\\
        &= \langle\fL[\fF[u_{\rho}]],\phi\rangle.
    \end{align*}
\end{proof}

\subsection{LFP dynamics derived for two-layer networks}
In this section, we derive the LFP dynamics for two-layer networks with general activation function. The key difficulty comes from the repeated integral representation of the operator. By using the Laplace method in a proper way, we overcome this difficulty and arrive at a simpler expression for the dynamics.

To simplified the notation, we define $\vg_1(z):=(\sigma(z),a\sigma'(z))^\T$ and $g_2(z):=a\sigma'(z)$ for $z\in\sR$. Then
\begin{align}
    \vg_1(\vw^\T\vx+b)
    &= \begin{pmatrix}
        \sigma(\vw^\T\vx+b)\\
        a\sigma'(\vw^\T\vx+b)
    \end{pmatrix}
    = \begin{pmatrix}
        \partial_a [a\sigma(\vw^\T\vx+b)] \\
        \partial_b [a\sigma(\vw^\T\vx+b)]
    \end{pmatrix}
    ,\\
    g_2(\vw^\T\vx+b)\vx
    &= \nabla_{\vw}[a\sigma(\vw^\T\vx+b)]
    = a\sigma'(\vw^\T\vx+b)\vx.
\end{align}

The following theorem is the key to the exact expression of LFP dynamics for two-layer networks.
\begin{assump}\label{assump..InitialDist}
    We assume that the initial distribution of $\vq=(a,\vw^\T,b)^\T$ satisfies the following conditions:
    \begin{enumerate}[(i)]
        \item independence of $a,\vw,b$: $\rho_{\vq}(\vq)=\rho_{a}(a)\rho_{\vw}(\vw)\rho_{b}(b)$.
        \item zero-mean and finite variance of $b$: $\Exp_{b}b=0$ and $\Exp_b b^2=\sigma_b^2<\infty$.
        \item radially symmetry of $\vw$
        : $\rho_{\vw}(\vw)=\rho_{\vw}(\norm{\vw}\ve_1)$ where $\ve_1=(1,0,\cdots,0)^\T$.
    \end{enumerate}
\end{assump}
\begin{thm}[Main result: explicit expression of LFP operator for two-layer networks]\label{mainthm}
    Suppose that Assumption \ref{assump..InitialDist} holds. If $\sigma_b\gg 1$, then the dynamics \eqref{eq..DynamicsInfiniteWidth} has the following expression,
    \begin{equation} \label{thmdyna}
        \langle\partial_t\fF[u], \phi\rangle = -\left\langle \fL[\fF[u_{\rho}]], \phi \right\rangle+O(\sigma_b^{-3}),
    \end{equation}
    where $\phi\in \fS(\sR^d)$ is a test function and the LFP operator is given by
    \begin{equation} \label{eq..lfpoperatorthm}
        \begin{aligned}
            \fL[\fF[u_{\rho}]]
             & = \frac{\Gamma(d/2)}{2\sqrt{2}\pi^{(d+1)/2}\sigma_b\norm{\vxi}^{d-1}}\Exp_{a,r}\left[\frac{1}{r}\fF[\vg_1]\left(\frac{\norm{\vxi}}{r}\right)\cdot\fF[\vg_1]\left(\frac{-\norm{\vxi}}{r}\right)\right]\fF[u_{\rho}](\vxi)                                     \\
             & \quad -\frac{\Gamma(d/2)}{2\sqrt{2}\pi^{(d+1)/2}\sigma_b}\nabla\cdot\left (\Exp_{a,r}\left[\frac{1}{r\norm{\vxi}^{d-1}}\fF[g_2]\left(\frac{\norm{\vxi}}{r}\right)\fF[g_2]\left(-\frac{\norm{\vxi}}{r}\right)\right]\nabla\fF[u_{\rho}](\vxi) \right).
        \end{aligned}
    \end{equation}
    The expectations are taken w.r.t. initial parameter distribution. Here $r = \norm{\vw}$ with the probability density 
    $\rho_r(r) := \frac{2\pi^{d/2}}{\Gamma(d/2)}\rho_{\vw}(r\ve_1)r^{d-1}$, $\ve_1=(1,0,\cdots,0)^\T$.
\end{thm}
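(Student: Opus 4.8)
The plan is to start from the general frequency-domain dynamics in Lemma~\ref{lem:dynamics}, namely that $\langle\partial_t\fF[u],\phi\rangle = -\langle\int_{\sR^d}\hat{K}(\vxi,\vxi')\fF[u_\rho](\vxi')\diff{\vxi'},\phi\rangle$, and to compute the kernel $\hat{K}(\vxi,\vxi') = \Exp_{\vq}\fF_{\vx\to\vxi}[\nabla_{\vq}\sigma^*(\vx,\vq)]\cdot\overline{\fF_{\vx'\to\vxi'}[\nabla_{\vq}\sigma^*(\vx',\vq)]}$ explicitly for the two-layer network, using the parametrization $\nabla_{\vq}\sigma^*(\vx,\vq) = (\vg_1(\vw^\T\vx+b)^\T, g_2(\vw^\T\vx+b)\vx^\T)^\T$ introduced just before the theorem. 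The first step is to apply Lemma~3 (Fourier transforms of network functions): parts (b) and (c) give $\fF_{\vx\to\vxi}[\vg_1(\vw^\T\vx+b)](\vxi) = \delta_{\vw}(\vxi)\fF[\vg_1]\!\left(\tfrac{\vxi^\T\hat{\vw}}{\norm{\vw}}\right)\E^{2\pi\I\frac{b}{\norm{\vw}}\vxi^\T\hat{\vw}}$ and the analogous expression with a $\tfrac{\I}{2\pi}\nabla_{\vxi}$ in front for the $g_2(\vw^\T\vx+b)\vx$ block. Substituting these into $\hat{K}$ produces a product $\delta_{\vw}(\vxi)\delta_{\vw}(\vxi')$ (a distribution supported on the ray through $\vw$ in both variables), a product of the one-dimensional Fourier transforms $\fF[\vg_1]$ evaluated at $\pm\norm{\vxi}/\norm{\vw}$-type arguments, and a phase factor $\E^{2\pi\I\frac{b}{\norm{\vw}}(\vxi-\vxi')^\T\hat{\vw}}$ that couples the two variables through $b$.

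The second and central step is to take the expectation $\Exp_{\vq}$ in the order dictated by the independence assumption (i): first over $b$, then over $\vw$, then over $a$. The expectation over $b$ acts only on the phase, giving a characteristic-function factor $\Exp_b\E^{2\pi\I\frac{b}{\norm{\vw}}(\vxi-\vxi')^\T\hat{\vw}}$. This is where $\sigma_b\gg 1$ enters: expanding the characteristic function of $b$ for small argument, $\Exp_b\E^{\I t b} = 1 - \tfrac12\sigma_b^2 t^2 + O(\sigma_b^3 t^3)$, one recognizes that after integrating against test functions the leading behavior localizes $\vxi'$ near $\vxi$ — effectively the $b$-expectation plays the role of a Gaussian-type approximate identity of width $\sim 1/\sigma_b$ in the $(\vxi-\vxi')\cdot\hat{\vw}$ direction. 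Carrying this expansion to the appropriate order yields the diagonal term $\propto \fF[u_\rho](\vxi)$ from the leading constant $1$ and the divergence/Laplacian-type term $\propto \nabla\cdot(\cdots\nabla\fF[u_\rho](\vxi))$ from the $-\tfrac12\sigma_b^2 t^2$ correction combined with the two $\nabla_{\vxi}$'s coming from the $g_2\vx$ block, with the $O(\sigma_b^{-3})$ remainder absorbing higher-order terms. The radial symmetry assumption (iii) is then used to reduce the $\vw$-integral to a one-dimensional integral over $r = \norm{\vw}$ against the density $\rho_r(r) = \tfrac{2\pi^{d/2}}{\Gamma(d/2)}\rho_{\vw}(r\ve_1)r^{d-1}$: the angular part of $\vw$ integrates out and, together with the scaling property of the delta-like function (Lemma~1), produces the factors $\norm{\vxi}^{-(d-1)}$, $\tfrac1r$, and the constant $\tfrac{\Gamma(d/2)}{2\sqrt 2\pi^{(d+1)/2}}$ (the $\sqrt2$ and one power of $\pi$ presumably from the Gaussian normalization in the $b$-expansion, the $\Gamma(d/2)$ and remaining $\pi$'s from the surface area of $S^{d-1}$).

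The main obstacle I anticipate is making the $b$-expectation step rigorous as a statement about tempered distributions rather than pointwise functions: $\hat{K}$ only makes sense paired against test functions because of the $\delta_{\vw}$ factors, so "expanding the characteristic function of $b$" must be interpreted as an asymptotic expansion of the bilinear form $\phi\mapsto\langle\fL[\fF[u_\rho]],\phi\rangle$, and one must track that the error is genuinely $O(\sigma_b^{-3})$ uniformly — this requires controlling derivatives of $\fF[u_\rho]$ and $\phi$ and justifying the interchange of $\Exp_{\vq}$ with the various integrals, which in turn uses the polynomial growth bound $|\sigma(z)|\le C|z|^p$ and $\sigma\in H^1_{\mathrm{loc}}$ to guarantee $\fF[\vg_1],\fF[g_2]$ are well-defined tempered distributions with enough regularity. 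A secondary technical point is handling the $\I/(2\pi)\nabla_{\vxi}$ operators from part (c) by integration by parts against $\phi$ before taking expectations, so that the two gradients land symmetrically and assemble into the $\nabla\cdot(\cdots\nabla)$ form; care is needed because the delta-like functions are differentiated, but the identity $\nabla_{\vxi}[\delta_{\vw}(\vxi)F(\vxi)]$ can be moved onto the smooth test function after pairing.
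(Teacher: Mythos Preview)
Your overall architecture is right and matches the paper: start from Lemma~\ref{lem:dynamics}, split $\hat K$ according to the block structure $\nabla_{\vq}\sigma^* = (\vg_1,\,g_2\vx)$, apply Lemma~3(b),(c), take the expectations in order $b\to\vw\to a$, and handle the $g_2\vx$ block by integration by parts so the two $\tfrac{\I}{2\pi}\nabla_{\vxi}$ operators land on $\phi$ and $\psi=\fF[u_\rho]$ and reassemble into $\nabla\cdot(\cdots\nabla)$. The reduction of the $\vw$-integral via radial symmetry is also correct in spirit; the paper does it by the explicit change of variables $(\vw,\eta)\mapsto(\vzeta,r)$ with $\vzeta=\eta\vw$, $r=\norm{\vw}$ and its Jacobian, rather than via Lemma~1, but that is a matter of bookkeeping.

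The genuine gap is in your treatment of the $b$-expectation. You write the Taylor expansion $\Exp_b\E^{\I tb}=1-\tfrac12\sigma_b^2 t^2+O(\sigma_b^3 t^3)$ and then assign the diagonal term in \eqref{eq..lfpoperatorthm} to the ``leading constant $1$'' and the divergence term to the ``$-\tfrac12\sigma_b^2 t^2$ correction''. This cannot work: the successive terms of that expansion blow up as $\sigma_b\to\infty$, and the constant $1$ does not localize anything --- it would leave you with the full, non-diagonal kernel. Your own parenthetical (``approximate identity of width $\sim 1/\sigma_b$'') is the correct intuition, but it is the \emph{Laplace method}, not a Taylor expansion of the characteristic function. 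Concretely, after pairing with test functions and using the $\delta_{\vw}$'s to reduce to scalar variables $\eta,\eta'$ along the ray, the paper gets
\[
\Exp_b\bigl[\E^{2\pi\I b(\eta-\eta')}\bigr]=\E^{-2\pi^2\sigma_b^2(\eta-\eta')^2},
\]
which as $\sigma_b\to\infty$ acts as $\tfrac{1}{\sqrt{2\pi}\sigma_b}\delta(\eta-\eta')+O(\sigma_b^{-3})$ against smooth integrands. Both terms of the LFP operator then arise at the \emph{same} leading Laplace order $\sigma_b^{-1}$: the scalar-multiplication term comes from the $\vg_1$ block, and the divergence term comes from the $g_2\vx$ block (purely from the two $\nabla_{\vxi}$'s you already identified), with no $t^2$ correction involved. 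Fix this one step --- replace the Taylor expansion by the Laplace/approximate-identity argument --- and the rest of your outline goes through essentially as in the paper.
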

\begin{rmk}
    The operator $\fL$ presents a unified framework for general activation functions.
\end{rmk}
\begin{rmk}
    The derivatives of most activation functions decay in the Fourier domain, e.g., $\ReLU$, $\tanh$, and sigmoid. Hence, the dynamics in \eqref{thmdyna} for higher frequency component is slower, i.e., F-Principle.
\end{rmk}
\begin{rmk}
    The last term in Eq. \eqref{eq..lfpoperatorthm} arises from the evolution of $\vw$ is much more complicated, without which our experiments show that the LFP model can still predict the learning results of two-layer wide NNs. 
\end{rmk}
\begin{proof}
    For simplicity, we assume that $b\sim\fN(0,\sigma^2_b)$, $\sigma_b\gg 1$ in this proof. It is straightforward to extend the proof to general distributions for $b$ as long as it is zero-mean and with variance $\sigma_b\gg 1$.
    
    1. Divide into two parts. Note that
    \begin{equation}
        \begin{pmatrix}
            \vg_1(\vw^\T\vx+b) \\
            \vx g_2(\vw^\T\vx+b)
        \end{pmatrix}
        = \begin{pmatrix}
            \partial_a [a\sigma(\vw^\T\vx+b)] \\
            \partial_b [a\sigma(\vw^\T\vx+b)] \\
            \nabla_{\vw} [a\sigma(\vw^\T\vx+b)]
        \end{pmatrix}
        =\nabla_{\vq}\sigma^*(\vx,\vq).
    \end{equation}
    One can split the Fourier transformed kernel $\hat{K}$ into two parts, more precisely, 
    \begin{equation*}
        \hat{K}=\Exp_{\vq}\hat{K}_{\vq},\quad \hat{K}_{\vq}=\hat{K}_{a,b}+\hat{K}_{
        \vw},
    \end{equation*}
    where
    \begin{align*}
         \hat{K}_{\vq}(\vxi,\vxi')
         & = \Exp_{\vq}\fF_{\vx\to\vxi}[\nabla_{\vq}\sigma^{*}(\vx,\vq)]\cdot\overline{\fF_{\vx'\to\vxi'}[\nabla_{\vq}\sigma^{*}(\vx',\vq)]},\\
        \hat{K}_{a,b}(\vxi,\vxi')
         & = \fF\left[\vg_1(\vw^\T\vx+b)\right]\cdot\overline{\fF
            \left[\vg_1(\vw^\T\vx'+b)\right]},\\
        \hat{K}_{\vw}(\vxi,\vxi')
         & = \fF\left[\vx g_2(\vw^\T\vx+b)\right]\cdot\overline{\fF\left[\vx g_2(\vw^\T\vx'+b)\right]}.
    \end{align*}
    For any $\phi,\psi\in \fS(\sR^d)$, we have
    \begin{align}
        \langle\hat{K}_{\vq}, \phi\otimes\psi\rangle
        :=
        \langle\hat{K}_{\vq}, \phi\otimes\psi\rangle_{\fS'(\sR^{2d}),\fS(\sR^{2d})}
        =
        \int_{\sR^{2d}}\hat{K}_{\vq}(\vxi,\vxi')\phi(\vxi)
        \psi(\vxi')\diff{\vxi}\diff{\vxi'}.
    \end{align}
    The expressions for $\hat{K}_{a,b}$ and $\hat{K}_{\vw}$ are similar.
    
    2. Calculate $\hat{K}_{a,b}(\vxi,\vxi')$.
    Since
    \begin{align*}
        \hat{K}_{a,b}(\vxi,\vxi')
        %  & = \fF\left[\vg_1(\vw^\T\vx+b)\right]\cdot\overline{\fF
        %     \left[\vg_1(\vw^\T\vx'+b)\right]}                      \\
         & = \delta_{\vw}(\vxi)\delta_{\vw}(\vxi')\fF[\vg_1]\left(
        \frac{\vxi^\T\hat{\vw}}{\norm{\vw}}\right)\cdot\overline{\fF[\vg_1]\left(
            \frac{\vxi'^\T\hat{\vw}}{\norm{\vw}}\right)}\E^{2\pi\I b{(\vxi-\vxi')}^\T
            \hat{\vw}/ \norm{\vw}},
    \end{align*}
    we have
    \begin{align*}
        \langle\hat{K}_{a,b}, \phi\otimes\psi\rangle
         & = \int_{\sR^{2d}}
        \delta_{\vw}(\vxi)\delta_{\vw}(\vxi')\fF[\vg_1]\left(
        \frac{\vxi^\T\hat{\vw}}{\norm{\vw}}\right)\cdot\overline{\fF[\vg_1]\left(
            \frac{\vxi'^\T\hat{\vw}}{\norm{\vw}}\right)}\E^{2\pi\I b{(\vxi-\vxi')}^\T
            \hat{\vw}/ \norm{\vw}}
        \phi(\vxi)\psi(\vxi')\diff{\vxi}\diff{\vxi'}                                  \\
         & = \int_{\sR\times\sR}\phi(\eta\vw)\psi(\eta'\vw)\fF[\vg_1](\eta)
        \cdot\overline{\fF[\vg_1](\eta')}
        \E^{2\pi\I b(\eta-\eta')}\diff{\eta}\diff{\eta'}.
    \end{align*}
    By assumption $b\sim\fN(0,\sigma_b^2)$, i.e.,
    $\rho_{b}(b)=\dfrac{1}{\sqrt{2\pi}\sigma_{b}}
        \E^{-\frac{b^{2}}{2\sigma_{b}^{2}}}$, then
    $\fF[\rho_{b}](\eta)=\E^{-2\pi^2\sigma_{b}^{2}\eta^{2}}$.
    \begin{align*}
        \Exp_{b}\left(\E^{2\pi\I b(\eta-\eta')}\right)
         & = \int_{\sR}\frac{1}{\sqrt{2\pi}\sigma_{b}}
        \E^{-b^2/2\sigma_b^2}\E^{2\pi\I b(\eta-\eta')}\diff{b} \\
         & = \fF[\rho_b]\left(-(\eta-\eta')\right)             \\
         & = \E^{-2\pi^2\sigma_{b}^{2}{(\eta-\eta')}^2}.
    \end{align*}
    Therefore
    \begin{align*}
        \Exp_{b}\left[\langle\hat{K}_{a,b}, \phi\otimes\psi\rangle\right]
         & = \int_{\sR\times\sR}\phi(\eta\vw)\psi(\eta'\vw)\fF[\vg_1](\eta)
        \cdot\overline{\fF[\vg_1](\eta)}\Exp_{b}\left[
            \E^{2\pi\I b(\eta-\eta')}\right]\diff{\eta}\diff{\eta'}         \\
         & = \int_{\sR\times\sR}\phi(\eta\vw)\psi(\eta'\vw)\fF[\vg_1](\eta)
        \cdot\overline{\fF[\vg_1](\eta)}
        \E^{-2\pi^2\sigma_{b}^{2}{(\eta-\eta')}^2}\diff{\eta}\diff{\eta'}.
    \end{align*}
    Applying the Laplace method, we have
    \begin{align*}
        \Exp_{b}\left[\langle\hat{K}_{a,b}, \phi\otimes\psi\rangle\right]
         & = \int_{\sR}\phi(\eta\vw)\fF[\vg_1](\eta)\cdot
        \left[\int_{\sR}\psi(\eta'\vw)\overline{\fF[\vg_1](\eta)}\E^{-2\pi^2
                \sigma_{b}^{2}{(\eta-\eta')}^2}\diff{\eta'}\right]\diff{\eta}  \\
         & = \int_{\sR}\phi(\eta\vw)\fF[\vg_1](\eta)\cdot
        \left[\psi(\eta\vw)\overline{\fF[\vg_1](\eta)}\frac{1}{\sqrt{2\pi}\sigma_b}
            +O(\sigma_b^{-3})\right]\diff{\eta}                                \\
         & = \frac{1}{\sqrt{2\pi}\sigma_b}\int_{\sR}\phi(\eta\vw)\psi(\eta\vw)
        \fF[\vg_1](\eta)\cdot\overline{\fF[\vg_1](\eta)}\diff{\eta}+O(\sigma_b^{-3}).
    \end{align*}
    Next we consider the expectation with respect to $\vw$. Up to error of order
    $O(\sigma_c^{-3})$, we have
    \begin{align*}
        \Exp_{\vw,b}\left[\langle\hat{K}_{a,b}, \phi\otimes\psi\rangle\right]
         & = \Exp_{\vw}\left[\frac{1}{\sqrt{2\pi}\sigma_b}
            \int_{\sR}\phi(\eta\vw)\psi(\eta\vw)\fF[\vg_1](\eta)
            \cdot\overline{\fF[\vg_1](\eta)}\diff{\eta}\right] \\
         & = \int_{\sR^{d+1}}\frac{1}{\sqrt{2\pi}\sigma_b}
        \phi(\eta\vw)\psi(\eta\vw)\fF[\vg_1](\eta)
        \cdot\overline{\fF[\vg_1](\eta)}\rho_{\vw}(\vw)\diff{\vw}\diff{\eta}.
    \end{align*}
    Here we assume that $\rho_{\vw}$ is radially symmetric
    so $\rho_{\vw}(\vw)$ is a function of $r:=\norm{\vw}$ only. By using spherical coordinate system, we have
    \begin{align*}
        1
         & = \int_{\sR^d}\rho_{\vw}(\vw)\diff{\vw}            \\
         & = \int_{\sR^d}\rho_{\vw}(\norm{\vw}\ve_1)\diff{\vw}     \\
         & = \int_{\sR^+}\int_{\sS^{d-1}}\rho_{\vw}(r\ve_1)r^{d-1}
        \diff{\hat{\vw}}\diff{r}                              \\
         & = \int_{\sR^+}\rho_r(r)\diff{r},
    \end{align*}
    where $\hat{\vw}\in\sS^{d-1}$ and we define
    \begin{equation}
        \rho_r(r) := \int_{\sS^{d-1}}\rho_{\vw}(r\ve_1)r^{d-1}\diff{\hat{\vw}}
        = \frac{2\pi^{d/2}}{\Gamma(d/2)}\rho_{\vw}(r\ve_1)r^{d-1},
    \end{equation}
    where $\Gamma(\cdot)$ is the gamma function.
    Then we introduce the following change of variables,
    \begin{equation*}
        \begin{cases}
            \vzeta = \eta\vw, \\
            r = \norm{\vw},
        \end{cases}
    \end{equation*}
    whose the Jacobian determinant is
    \begin{equation*}
        \det\left(\frac{\partial(\vzeta, r)}{\partial(\vw, \eta)}\right) =
        \det\begin{bmatrix}
            \eta   & 0      & \cdots & 0      & w_1    \\
            0      & \eta   & \cdots & 0      & w_2    \\
            \vdots & \vdots & \ddots & \vdots & \vdots \\
            0      & 0      & \cdots & \eta   & w_d    \\
            w_1/r  & w_2/r  & \cdots & w_d/r  & 0
        \end{bmatrix}
        = -r\eta^{d-1}=-r{\left(\frac{\norm{\vzeta}}{r}\right)}^{d-1}.
    \end{equation*}
    Thus
    \begin{equation}
        \left \{
        \begin{aligned}
            \vw  & = \dfrac{r\vzeta}{\norm{\vzeta}} \\
            \eta & = \dfrac{\norm{\vzeta}}{r},
        \end{aligned}
        \right.
    \end{equation}
    and its Jacobian determinant is
    \begin{equation*}
        \det\left(\frac{\partial(\vw, \eta)}{\partial(\vzeta, r)}\right)
        = -\frac{r^{d-1}}{r\norm{\vzeta}^{d-1}}.
    \end{equation*}
    So one can obtain,
    \begin{align*}
        \Exp&_{\vw,b}\left[\langle\hat{K}_{a,b}, \phi\otimes\psi\rangle\right]
          = \int_{\sR^{d+1}}\frac{1}{\sqrt{2\pi}\sigma_b}\phi(\eta\vw)\psi(\eta\vw)
        \fF[\vg_1](\eta)\cdot\overline{\fF[\vg_1](\eta)}
        \rho_{\vw}(r\ve_1)\diff{\vw}\diff{\eta}                                           \\
         & = \int_{\sR^d\times\sR^+}\frac{1}{\sqrt{2\pi}\sigma_b}
        \phi(\vzeta)\psi(\vzeta)\fF[\vg_1]\left(\frac{\norm{\vzeta}}{r}\right)
        \cdot\overline{\fF[\vg_1]\left(\frac{\norm{\vzeta}}{r}\right)}
        \frac{r^{d-1}}{r\norm{\vzeta}^{d-1}}\rho_{\vw}(r\ve_1)\diff{\vzeta}\diff{r}       \\
         & = \int_{\sR^d\times\sR^+}\frac{1}{\sqrt{2\pi}\sigma_b}
        \phi(\vzeta)\psi(\vzeta)\fF[\vg_1]\left(\frac{\norm{\vzeta}}{r}\right)
        \cdot\overline{\fF[\vg_1]\left(\frac{\norm{\vzeta}}{r}\right)}
        \frac{1}{r\norm{\vzeta}^{d-1}}\left[\frac{\Gamma(d/2)}{2\pi^{d/2}}\rho_r(r)\right]
        \diff{\vzeta}\diff{r}                                                        \\
         & = \frac{\Gamma(d/2)}{2\sqrt{2}\pi^{(d+1)/2}\sigma_b}\int_{\sR^d}
        \phi(\vzeta)\int_{\sR^+}\left[\frac{1}{r\norm{\vzeta}^{d-1}}
            \fF[\vg_1]\left(\frac{\norm{\vzeta}}{r}\right)\cdot\overline{\fF[\vg_1]
                \left(\frac{\norm{\vzeta}}{r}\right)}\right]\psi(\vzeta)
        \rho_r(r)\diff{r}\diff{\vzeta},
    \end{align*}
    Therefore taking $\psi = \fF[u_{\rho}]$, we have
    \begin{align}
        \fL_{a,b}[\fF[u_{\rho}]]
         & = \frac{\Gamma(d/2)}{2\sqrt{2}\pi^{(d+1)/2}\sigma_b\norm{\vxi}^{d-1}}\Exp_{a,r}\left[\frac{1}{r}\fF[\vg_1]\left(\frac{\norm{\vxi}}{r}\right)\cdot\overline{\fF[\vg_1]\left(\frac{\norm{\vxi}}{r}\right)}\right]\fF[u_{\rho}](\vxi)\nonumber\\
         & = \frac{\Gamma(d/2)}{2\sqrt{2}\pi^{(d+1)/2}\sigma_b\norm{\vxi}^{d-1}}\Exp_{a,r}\left[\frac{1}{r}\fF[\vg_1]\left(\frac{\norm{\vxi}}{r}\right)\cdot\fF[\vg_1]\left(-\frac{\norm{\vxi}}{r}\right)\right]\fF[u_{\rho}](\vxi).\label{Lab}
    \end{align}
    
    3. Calculate $\hat{K}_{\vw}(\vxi,\vxi')$. Since
    \begin{align*}
        \hat{K}_{\vw}(\vxi,\vxi')
        %  & = \fF\left[\vx g_2(\vw^\T\vx+b)\right]\cdot\overline{\fF\left[\vx g_2(\vw^\T\vx'+b)\right]} \\
         & = \frac{1}{4\pi^2}\nabla_{\vxi}\left[\delta_{\vw}(\vxi)\fF[g_2]\left(
            \frac{\vxi^\T\hat{\vw}}{\norm{\vw}}\right)\E^{2\pi\I b\vxi^\T\hat{\vw}/\norm{\vw}}\right]\cdot\nabla_{\vxi'}\left[\delta_{\vw}(\vxi')\overline{\fF[g_2]\left(
                \frac{\vxi'^\T\hat{\vw}}{\norm{\vw}}\right)}\E^{-2\pi\I b\vxi'^\T\hat{\vw}/\norm{\vw}}\right],
    \end{align*}
    we have
    \begin{align*}
        ~~~~&\langle\hat{K}_{\vw}, \phi\otimes\psi\rangle\\
         & = \frac{1}{4 \pi^2}\int_{\sR^d}\phi(\vxi)\nabla_{\vxi}\left[\delta_{\vw}(\vxi)\fF[g_2]\left(
            \frac{\vxi^\T\hat{\vw}}{\norm{\vw}}\right)\E^{2\pi\I b\vxi^\T\hat{\vw}/\norm{\vw}}\right]\diff{\vxi}\nonumber\\
        &\quad ~~~\cdot\int_{\sR^d}\psi(\vxi')\nabla_{\vxi'}\left[\delta_{\vw}(\vxi')\overline{\fF[g_2]\left(
                \frac{\vxi'^\T\hat{\vw}}{\norm{\vw}}\right)}\E^{-2\pi\I b\vxi'^\T\hat{\vw}/\norm{\vw}}\right]\diff{\vxi'} \\
         & = \frac{1}{4 \pi^2}\int_{\sR^d}\nabla_{\vxi}\phi(\vxi)\delta_{\vw}(\vxi)\fF[g_2]\left(
        \frac{\vxi^\T\hat{\vw}}{\norm{\vw}}\right)\E^{2\pi\I b\vxi^\T\hat{\vw}/\norm{\vw}}\diff{\vxi}
        \nonumber\\
        &\quad ~~~\cdot\int_{\sR^d}\nabla_{\vxi'}\psi(\vxi')\delta_{\vw}(\vxi')\overline{\fF[g_2]\left(
            \frac{\vxi'^\T\hat{\vw}}{\norm{\vw}}\right)}\E^{-2\pi\I b\vxi'^\T\hat{\vw}/\norm{\vw}}\diff{\vxi'}            \\
         & = \int_{\sR\times\sR}\nabla\phi(\eta\vw)\cdot\nabla\psi(\eta'\vw)\fF[g_2](\eta)
        \cdot\overline{\fF[g_2](\eta')}
        \E^{2\pi\I b(\eta-\eta')}\diff{\eta}\diff{\eta'}.
    \end{align*}
    By the same computation as for $\Hat{K}_{a,b}(\vxi,\vxi')$, we can get
    \begin{align*}
         ~~~~&\Exp_{\vw,b}\left[\langle\hat{K}_{\vw}, \phi\otimes\psi\rangle\right]\nonumber\\
         & = \frac{\Gamma(d/2)}{2\sqrt{2}\pi^{(d+1)/2}\sigma_b}\int_{\sR^d}\nabla\phi(\vzeta)\cdot\int_{\sR^+}\left[\frac{1}{r\norm{\vzeta}^{d-1}}\fF[g_2]\left(\frac{\norm{\vzeta}}{r}\right)\cdot\overline{\fF[g_2]\left(\frac{\norm{\vzeta}}{r}\right)}\right]\nabla\psi(\vzeta)\rho_r(r)\diff{r}\diff{\vzeta}\\
         & = \frac{\Gamma(d/2)}{2\sqrt{2}\pi^{(d+1)/2}\sigma_b}\int_{\sR^d}\nabla\phi(\vzeta)\cdot\Exp_{a,r}\left[\frac{1}{r\norm{\vzeta}^{d-1}}\fF[g_2]\left(\frac{\norm{\vzeta}}{r}\right)\cdot\overline{\fF[g_2]\left(\frac{\norm{\vzeta}}{r}\right)}\right]\nabla\psi(\vzeta)\diff{\vzeta}\\
         & = -\frac{\Gamma(d/2)}{2\sqrt{2}\pi^{(d+1)/2}\sigma_b}\int_{\sR^d}\phi(\vzeta)\nabla\cdot\left \{\Exp_{a,r}\left[\frac{1}{r\norm{\vzeta}^{d-1}}\fF[g_2]\left(\frac{\norm{\vzeta}}{r}\right)\cdot\overline{\fF[g_2]\left(\frac{\norm{\vzeta}}{r}\right)}\right]\nabla\psi(\vzeta)\right \}\diff{\vzeta}.
    \end{align*}
    Thus taking $\psi(\vxi) = \fF[u_{\rho}](\vxi)$, we have
    \begin{align}
        \fL_{\vw}[\fF[u_{\rho}](\vxi)] = -\frac{\Gamma(d/2)}{2\sqrt{2}\pi^{(d+1)/2}\sigma_b}\nabla\cdot\left (\Exp_{a,r}\left[\frac{1}{r\norm{\vxi}^{d-1}}\fF[g_2]\left(\frac{\norm{\vxi}}{r}\right)\fF[g_2]\left(-\frac{\norm{\vxi}}{r}\right)\right]\nabla\fF[u_{\rho}](\vxi)\right) \label{Lw}.
    \end{align}
    Finally, one can plug \eqref{Lab} and \eqref{Lw} into \eqref{lfpprocess} and obtain the dynamics \eqref{thmdyna}.
\end{proof}

\subsection{Exact LFP model for common activation functions}
Based on \eqref{eq..lfpoperatorthm}, we derive the exact LFP dynamics for the cases where the activation function is ReLU and tanh.

\begin{cor}[LFP operator for ReLU activation function]
    Suppose that Assumption \ref{assump..InitialDist} holds. If $\sigma_b\gg 1$ and $\sigma=\ReLU$, then the dynamics \eqref{eq..DynamicsInfiniteWidth} has the following expression,
    \begin{equation} \label{thmdyna.ReLU}
        \langle\partial_t\fF[u], \phi\rangle = -\left\langle \fL[\fF[u_{\rho}]], \phi \right\rangle+O(\sigma_b^{-3}),
    \end{equation}
    where $\phi\in \fS(\sR^d)$ is a test function and the LFP operator reads as
    \begin{equation} \label{eq..lfpoperatorthm.ReLU}
        \begin{aligned}
            \fL[\fF[u_{\rho}]]
             & = \frac{\Gamma(d/2)}{2\sqrt{2}\pi^{(d+1)/2}\sigma_b}\Exp_{a,r}\left[\frac{r^3}{16\pi^4\norm{\vxi}^{d+3}} + \frac{a^2 r}{4\pi^2\norm{\vxi}^{d+1}}\right]\fF[u_{\rho}](\vxi)                                     \\
             & \quad -\frac{\Gamma(d/2)}{2\sqrt{2}\pi^{(d+1)/2}\sigma_b}\nabla\cdot\left (\Exp_{a,r}\left[\frac{a^2 r}{4\pi^2\norm{\vxi}^{d+1}}\right]\nabla\fF[u_{\rho}](\vxi) \right).
        \end{aligned}
    \end{equation}
    The expectations are taken w.r.t. initial parameter distribution. Here $r = \norm{\vw}$ with the probability density 
    $\rho_r(r) := \frac{2\pi^{d/2}}{\Gamma(d/2)}\rho_{\vw}(r\ve_1)r^{d-1}$, $\ve_1=(1,0,\cdots,0)^\T$.
\end{cor}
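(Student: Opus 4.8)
The plan is to read this corollary off from Theorem~\ref{mainthm} by evaluating the general operator \eqref{eq..lfpoperatorthm} at $\sigma=\ReLU$; the only genuinely new input is the pair of one-dimensional Fourier transforms $\fF[\vg_1]$ and $\fF[g_2]$ for ReLU, and the $O(\sigma_b^{-3})$ error carries over verbatim. So first I would record that for $\sigma=\ReLU$ one has $\sigma'=H$, the Heaviside step function, and $\ReLU(z)=zH(z)$, whence $\vg_1(z)=(\ReLU(z),aH(z))^\T$ and $g_2(z)=aH(z)$. Combining the classical identity $\fF[H](\xi)=\frac12\delta(\xi)+\frac{1}{2\pi\I\xi}$ with the transform rule $\fF_{z\to\xi}[z\tilde g(z)](\xi)=\frac{\I}{2\pi}\partial_\xi\fF[\tilde g](\xi)$ (the $d=1$ case of part (c) of the Fourier-transform lemma) gives $\fF[\ReLU](\xi)=\frac{\I}{4\pi}\delta'(\xi)-\frac{1}{4\pi^2\xi^2}$, in the sense of tempered distributions.

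The key point is that inside \eqref{eq..lfpoperatorthm} the transforms $\fF[\vg_1],\fF[g_2]$ are only ever evaluated at the arguments $\pm\norm{\vxi}/r$, which are nonzero for $\vxi\neq\vzero$ (and $r=\norm{\vw}>0$ almost surely), so the atoms $\delta,\delta'$ supported at the origin contribute nothing and I may use, for $\eta\neq0$,
\begin{equation*}
    \fF[\vg_1](\eta)=\left(-\frac{1}{4\pi^2\eta^2},\;\frac{a}{2\pi\I\eta}\right)^\T,\qquad \fF[g_2](\eta)=\frac{a}{2\pi\I\eta}.
\end{equation*}
A one-line computation then yields $\fF[\vg_1](\eta)\cdot\fF[\vg_1](-\eta)=\frac{1}{16\pi^4\eta^4}+\frac{a^2}{4\pi^2\eta^2}$ and $\fF[g_2](\eta)\fF[g_2](-\eta)=\frac{a^2}{4\pi^2\eta^2}$. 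Substituting $\eta=\norm{\vxi}/r$, so that $\eta^{-2}=r^2\norm{\vxi}^{-2}$ and $\eta^{-4}=r^4\norm{\vxi}^{-4}$, and pulling the $\norm{\vxi}$-powers inside the expectations (they do not depend on $a,r$), the first term of \eqref{eq..lfpoperatorthm} becomes $\frac{\Gamma(d/2)}{2\sqrt{2}\pi^{(d+1)/2}\sigma_b}\Exp_{a,r}\bigl[\frac{r^3}{16\pi^4\norm{\vxi}^{d+3}}+\frac{a^2r}{4\pi^2\norm{\vxi}^{d+1}}\bigr]\fF[u_\rho](\vxi)$ and the second becomes $-\frac{\Gamma(d/2)}{2\sqrt{2}\pi^{(d+1)/2}\sigma_b}\nabla\cdot\bigl(\Exp_{a,r}[\frac{a^2r}{4\pi^2\norm{\vxi}^{d+1}}]\nabla\fF[u_\rho](\vxi)\bigr)$, which is exactly \eqref{eq..lfpoperatorthm.ReLU}.

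The hard part is not this arithmetic but the distributional bookkeeping: $\fF[\ReLU]$ and $\fF[H]$ are genuine tempered distributions with non-integrable tails $\xi^{-2},\xi^{-1}$, so one must be precise that (a) the origin-supported atoms really may be discarded — which is the content of the ``arguments $\pm\norm{\vxi}/r\neq0$'' remark, valid off $\vxi=\vzero$ — and (b) the resulting singular symbols $\norm{\vxi}^{-(d+3)},\norm{\vxi}^{-(d+1)}$ are to be understood in the same regularized sense (Hadamard finite part / pairing against Schwartz test functions) already fixed by Theorem~\ref{mainthm}, whose Laplace-method derivation uses only the smoothness of $\fF[\vg_1]$ away from the origin. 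With that interpretation in force the corollary follows at once.
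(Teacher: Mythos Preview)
Your proposal is correct and follows essentially the same approach as the paper: both compute the one-dimensional Fourier transforms of $\ReLU$ and $aH$, discard the $\delta,\delta'$ atoms since the arguments $\pm\norm{\vxi}/r$ avoid the origin, and substitute the resulting expressions into the general formula \eqref{eq..lfpoperatorthm} from Theorem~\ref{mainthm}. If anything, you are slightly more explicit than the paper about \emph{why} the origin-supported atoms may be dropped and about the distributional interpretation of the singular $\norm{\vxi}^{-(d+1)},\norm{\vxi}^{-(d+3)}$ symbols; the paper simply says ``by ignoring all $\delta(\xi)$ and $\delta'(\xi)$ related to only the trivial $\vzero$-frequency'' and moves on.
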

\begin{proof}
    Let 
    \begin{align}
        f_a(\vx)
         & :=
        \nabla_{a}\left[a\ReLU(\vw\cdot\vx+b)\right]=\ReLU(\vw\cdot\vx+b), \\
        g_a(z)
         & :=\ReLU(z),                                                     \\
        f_b(\vx)
         & :=
        \nabla_{b}\left[a\ReLU(\vw\cdot\vx+b)\right]=aH(\vw\cdot\vx+b),    \\
        g_b(z)
         & :=aH(z),
    \end{align}
    so $\vg_1(z) = {(g_a(z), g_b(z))}^\T$ and $g_2(z) = g_b(z)$. Then
    \begin{align}
        \fF[g_a](\xi)
         & = -\frac{1}{4\pi^2\xi^{2}}+\frac{\I}{4\pi}\delta'(\xi),     \\
        \fF[g_b](\xi)
         & = a\left[\frac{1}{\I2\pi\xi}+\frac{1}{2}\delta(\xi)\right],
    \end{align}
    By ignoring all $\delta(\xi)$ and $\delta'(\xi)$ related to only the trivial $\bm{0}$-frequency, we obtain
    \begin{align}
        \frac{1}{r}\fF[g_a]\left(\frac{\norm{\vxi}}{r}\right)\fF[g_a]\left(\frac{-\norm{\vxi}}{r}\right)
         & = \frac{r^3}{16\pi^4\norm{\vxi}^{4}},  \\
        \frac{1}{r}\fF[g_b]\left(\frac{\norm{\vxi}}{r}\right)\fF[g_b]\left(\frac{-\norm{\vxi}}{r}\right)
         & = \frac{a^2 r}{4\pi^2\norm{\vxi}^{2}}.
    \end{align}
    We then obtain \eqref{eq..lfpoperatorthm.ReLU} by plugging these into \eqref{eq..lfpoperatorthm}.
\end{proof}

\begin{cor}[LFP operator for tanh activation function]
    Suppose that Assumption \ref{assump..InitialDist} holds. If $\sigma_b\gg 1$ and $\sigma=\tanh$, then the dynamics \eqref{eq..DynamicsInfiniteWidth} has the following expression,
    \begin{equation} \label{thmdyna.tanh}
        \langle\partial_t\fF[u], \phi\rangle = -\left\langle \fL[\fF[u_{\rho}]], \phi \right\rangle+O(\sigma_b^{-3}),
    \end{equation}
    where $\phi\in \fS(\sR^d)$ is a test function and the LFP operator reads as
    \begin{equation} \label{eq..lfpoperatorthm.tanh}
        \begin{aligned}
            \fL[\fF[u_{\rho}]]
             & = \frac{\Gamma(d/2)}{2\sqrt{2}\pi^{(d+1)/2}\sigma_b\norm{\vxi}^{d-1}}\Exp_{a,r}\left[\frac{\pi^2}{r}\csch^2\left(\frac{\pi^2\norm{\vxi}}{r}\right) + \frac{4\pi^4a^2\norm{\vxi}^2}{r^3}\csch^2\left(\frac{\pi^2\norm{\vxi}}{r}\right)\right]\fF[u_{\rho}](\vxi)                                     \\
             & \quad -\frac{\Gamma(d/2)}{2\sqrt{2}\pi^{(d+1)/2}\sigma_b}\nabla\cdot\left (\Exp_{a,r}\left[\frac{4\pi^4a^2}{r^3\norm{\vxi}^{d-3}}\csch^2\left(\frac{\pi^2\norm{\vxi}}{r}\right)\right]\nabla\fF[u_{\rho}](\vxi) \right).
        \end{aligned}
    \end{equation}
    The expectations are taken w.r.t. initial parameter distribution. Here $r = \norm{\vw}$ with the probability density 
    $\rho_r(r) := \frac{2\pi^{d/2}}{\Gamma(d/2)}\rho_{\vw}(r\ve_1)r^{d-1}$, $\ve_1=(1,0,\cdots,0)^\T$.
\end{cor}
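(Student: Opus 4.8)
The plan is to specialize the general operator \eqref{eq..lfpoperatorthm} of Theorem~\ref{mainthm} to $\sigma=\tanh$, following exactly the pattern of the $\ReLU$ corollary. With $\vg_1(z)=(\sigma(z),a\sigma'(z))^\T=(\tanh z,\ a\,\mathrm{sech}^2 z)^\T$ and $g_2(z)=a\sigma'(z)=a\,\mathrm{sech}^2 z$, the only new content is the evaluation of the two scalar Fourier transforms $\fF[\tanh]$ and $\fF[\mathrm{sech}^2]$ on $\sR$; once these are in hand one substitutes them into \eqref{eq..lfpoperatorthm} at $\eta=\norm{\vxi}/r$ and simplifies, and the $O(\sigma_b^{-3})$ remainder carries over unchanged from Theorem~\ref{mainthm}.

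First I would compute $\fF[\mathrm{sech}^2]$. Since $\mathrm{sech}^2\in L^1(\sR)\cap L^2(\sR)$ this is a genuine continuous function, and a residue computation over the double poles of $\cosh$ on the imaginary axis (equivalently the standard table entry $\int_\sR\mathrm{sech}^2(x)\E^{-\I t x}\diff{x}=\pi t/\sinh(\pi t/2)$) gives
\begin{equation*}
    \fF[\mathrm{sech}^2](\xi)=\frac{2\pi^2\xi}{\sinh(\pi^2\xi)}.
\end{equation*}
Then, using $\fF[g'](\xi)=2\pi\I\xi\,\fF[g](\xi)$ together with $\tanh'=\mathrm{sech}^2$, and discarding the ambiguous $\vzero$-frequency term (a possible $\delta(\xi)$, which must vanish since $\tanh$ is odd whereas $\delta$ is even) exactly as in the $\ReLU$ corollary, I obtain
\begin{equation*}
    \fF[\tanh](\xi)=\frac{\fF[\mathrm{sech}^2](\xi)}{2\pi\I\xi}=-\frac{\I\pi}{\sinh(\pi^2\xi)}.
\end{equation*}

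Next I would form the products appearing in \eqref{eq..lfpoperatorthm} at $\eta=\norm{\vxi}/r$. Using that $\mathrm{sech}^2$ is even (so $\fF[\mathrm{sech}^2]$ is real and even) and $\tanh$ is odd (so $\fF[\tanh]$ is purely imaginary and odd), one gets $\fF[\tanh](\eta)\fF[\tanh](-\eta)=\pi^2\csch^2(\pi^2\eta)$ and $\fF[\mathrm{sech}^2](\eta)\fF[\mathrm{sech}^2](-\eta)=4\pi^4\eta^2\csch^2(\pi^2\eta)$, hence
\begin{equation*}
    \fF[\vg_1](\eta)\cdot\fF[\vg_1](-\eta)=\bigl(\pi^2+4\pi^4a^2\eta^2\bigr)\csch^2(\pi^2\eta),\qquad \fF[g_2](\eta)\fF[g_2](-\eta)=4\pi^4a^2\eta^2\csch^2(\pi^2\eta).
\end{equation*}
Substituting $\eta=\norm{\vxi}/r$ and collecting the powers of $r$ and $\norm{\vxi}$ in \eqref{eq..lfpoperatorthm} then yields \eqref{eq..lfpoperatorthm.tanh}.

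The only genuinely non-routine step is the evaluation of $\fF[\mathrm{sech}^2]$ (the contour integration around the poles of $\cosh$) together with the bookkeeping of the $\xi\to0$ behaviour of $\fF[\tanh]$: the factor $\fF[\tanh](\eta)\fF[\tanh](-\eta)\sim\eta^{-2}$ fails to be locally integrable near $\eta=0$, so the lemma on Fourier transforms of network functions does not literally apply to the first component of $\vg_1$, but this singular zero-frequency contribution is handled by the very same formal convention already adopted for $\ReLU$ (where $\fF[g_a]$ produces $\norm{\vxi}^{-2}$- and $\norm{\vxi}^{-4}$-type factors), so no new analytic subtlety arises. Everything else is the same substitution-and-simplification bookkeeping as in the $\ReLU$ corollary.
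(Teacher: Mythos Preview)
Your proposal is correct and follows essentially the same approach as the paper: both specialize Theorem~\ref{mainthm} by inserting the Fourier transforms $\fF[\tanh](\xi)=-\I\pi\csch(\pi^2\xi)$ and $\fF[\mathrm{sech}^2](\xi)=2\pi^2\xi\csch(\pi^2\xi)$, form the products at $\eta=\norm{\vxi}/r$, and substitute into \eqref{eq..lfpoperatorthm}. The only difference is that you sketch how to obtain these transforms (residues for $\mathrm{sech}^2$, then the derivative rule for $\tanh$) and are more explicit about the $\xi\to0$ singularity, whereas the paper simply quotes the transforms from its appendix table.
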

\begin{proof}
    Let
    \begin{align}
        f_a(\vx)
         & := \nabla_{a}\left[a\tanh(\vw\cdot\vx+b)\right]=\tanh(\vw\cdot\vx+b),    \\
        g_a(z)
         & := \tanh(z),                                                             \\
        f_b(\vx)
         & := \nabla_{b}\left[a\tanh(\vw\cdot\vx+b)\right]=a\sech^2(\vw\cdot\vx+b), \\
        g_b(z)
         & := a\sech^2(z),
    \end{align}
    so $\vg_1(z) = {(g_a(z), g_b(z))}^\T$ and $g_2(z) = g_b(z)$. Then
    \begin{align}
        \fF[g_a](\xi)
         & = -\I\pi\csch(\pi^2\xi),     \\
        \fF[g_b](\xi)
         & = 2\pi^2a\xi\csch(\pi^2\xi).
    \end{align}
    By ignoring all $\delta(\xi)$ and $\delta'(\xi)$ related to only the trivial $\vzero$-frequency, we obtain
    \begin{align}
        \frac{1}{r}\fF[g_a]\left(\frac{\norm{\vxi}}{r}\right)\fF[g_a]\left(\frac{-\norm{\vxi}}{r}\right)
         & = \frac{\pi^2}{r}\csch^2\left(\frac{\pi^2\norm{\vxi}}{r}\right),                    \\
        \frac{1}{r}\fF[g_b]\left(\frac{\norm{\vxi}}{r}\right)\fF[g_b]\left(\frac{-\norm{\vxi}}{r}\right)
         & = \frac{4\pi^4a^2\norm{\vxi}^2}{r^3}\csch^2\left(\frac{\pi^2\norm{\vxi}}{r}\right).
    \end{align}
    We then obtain \eqref{eq..lfpoperatorthm.tanh} by plugging these into \eqref{eq..lfpoperatorthm}.
\end{proof}

\section{Explicitizing the implicit bias of the F-Principle }\label{sec:Explicitizing-the-implicit}
In the following sections, we analyze a simplified LFP model with ReLU activation function in \eqref{eq..lfpoperatorthm.ReLU} as follows, 
\begin{equation}  \label{eq..lfpoperatorthm.simpleReLU}
    \partial_t\fF[u]=\Exp_{a,r}\left[\frac{r^3}{16\pi^4\norm{\vxi}^{d+3}} + \frac{a^2 r}{4\pi^2\norm{\vxi}^{d+1}}\right]\fF[u_{\rho}](\vxi).   
\end{equation}
We discard the last term in Eq. \eqref{eq..lfpoperatorthm.ReLU} arising from the evolution of $\vw$. The reasons are two folds. First, experiments show that  Eq. \eqref{eq..lfpoperatorthm.simpleReLU} is accurately enough to predict the wide two-layer NN output after training. Second, the last term in Eq. \eqref{eq..lfpoperatorthm.ReLU} is too complicated to analyze for now. 

In the LFP model, the solution is implicitly regularized by a decaying coefficient for different frequencies of $\fF[u]$ throughout the training. For a quantitative analysis of this solution, we explicitize such an implicit dynamical regularization by a constrained optimization problem as follows.

\subsection{An equivalent optimization problem to the gradient flow dynamics}

First, we present a general theorem that the long-time limit solution of a gradient flow dynamics is equivalent to the solution of a constrained optimization
problem. 

Let $H_1$ and $H_2$ be two separable Hilbert spaces and $\fP: H_1\rightarrow H_2$ is a bounded linear operator. Let $\fP^*: H_2\rightarrow H_1$ be the adjoint operator of $\fP$, defined by 
\begin{equation}
  \langle \fP \phi_1, \phi_2\rangle_{H_2}=\langle \phi_1, \fP^* \phi_2\rangle_{H_1},\quad\text{for all}\quad \phi_1\in H_1, \phi_2\in H_2.
\end{equation}

\begin{lem}\label{lem..spectrum.positive}
  Suppose that $H_1$ and $H_2$ are two separable Hilbert spaces and $\fP:H_1\rightarrow H_2$ and $\fP^*:H_2\rightarrow H_1$ is the adjoint of $\fP$. Then all eigenvalues of $\fP^*\fP$ and $\fP\fP^*$ are non-negative. Moreover, they have the same positive spectrum. If in particular, we assume that the operator $\fP\fP^*$ is surjective, then the operator $\fP\fP^*$ is invertible.
\end{lem}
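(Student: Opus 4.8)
The plan is to treat the three claims in turn, using only elementary operator theory valid on separable Hilbert spaces. First, nonnegativity of the spectrum: for any $\lambda$ an eigenvalue of $\fP^*\fP$ with eigenvector $\phi_1\neq 0$, I would write $\lambda\norm{\phi_1}_{H_1}^2 = \langle \fP^*\fP\phi_1,\phi_1\rangle_{H_1} = \langle \fP\phi_1,\fP\phi_1\rangle_{H_2} = \norm{\fP\phi_1}_{H_2}^2\geq 0$, so $\lambda\geq 0$; the same computation applied to $\fP\fP^*$ with its eigenvectors gives the other half. (If one does not want to presuppose existence of eigenvectors, the same identity shows $\fP^*\fP$ and $\fP\fP^*$ are positive semidefinite self-adjoint operators, and nonnegativity of the spectrum follows from the standard characterization of the spectrum of a positive operator; I would state it in whichever form the rest of the paper needs.)

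Second, the equality of positive spectra. The key observation is that if $\lambda>0$ and $\fP^*\fP\phi_1 = \lambda\phi_1$ with $\phi_1\neq 0$, then $\psi := \fP\phi_1$ is nonzero (because $\norm{\fP\phi_1}_{H_2}^2 = \lambda\norm{\phi_1}_{H_1}^2>0$), and $\fP\fP^*\psi = \fP\fP^*\fP\phi_1 = \fP(\lambda\phi_1) = \lambda\psi$, so $\lambda$ is an eigenvalue of $\fP\fP^*$. By symmetry (swapping the roles of $\fP$ and $\fP^*$, noting $(\fP^*)^* = \fP$) every positive eigenvalue of $\fP\fP^*$ is an eigenvalue of $\fP^*\fP$. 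Hence the two positive point spectra coincide. If the paper intends the full spectrum rather than just eigenvalues, I would instead argue that for $\lambda\neq 0$, $\lambda I - \fP^*\fP$ is invertible iff $\lambda I - \fP\fP^*$ is, exhibiting the inverse explicitly via $(\lambda I-\fP\fP^*)^{-1} = \lambda^{-1}\big(I + \fP(\lambda I-\fP^*\fP)^{-1}\fP^*\big)$ and checking the identity by direct multiplication; this is the cleanest route and handles continuous/residual spectrum uniformly.

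Third, the invertibility claim under surjectivity of $\fP\fP^*$. Here $\fP\fP^*:H_2\to H_2$ is self-adjoint and positive semidefinite, hence $0$ is not an eigenvalue unless $\fP\fP^*$ fails to be injective; but a surjective bounded operator on a Hilbert space whose range is all of $H_2$ has trivial cokernel, and for a self-adjoint operator $\ker(\fP\fP^*) = (\operatorname{ran}(\fP\fP^*))^\perp = \{0\}$, so $\fP\fP^*$ is injective as well. Being a bounded bijection between Hilbert spaces, it is invertible with bounded inverse by the open mapping theorem. I would write this out in two or three lines.

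The main obstacle is a matter of precision rather than depth: deciding whether "spectrum" in the statement means the point spectrum or the full spectrum, since the one-line eigenvector argument only handles eigenvalues, whereas the resolvent-identity argument is needed for the general case. I would adopt the resolvent-identity formulation for the second claim so that the proof is self-contained and robust, and keep the short $\langle\fP\phi,\fP\phi\rangle\geq 0$ computation for the first and third claims where it suffices.
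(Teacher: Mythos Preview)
Your argument is correct and for the first two claims essentially mirrors the paper: both compute $\lambda\norm{\phi_1}^2=\norm{\fP\phi_1}^2\geq 0$ for nonnegativity, and both push eigenvectors through $\fP$ (resp.\ $\fP^*$) to transfer a positive eigenvalue from $\fP^*\fP$ to $\fP\fP^*$ and back. The paper in fact only treats eigenvalues (point spectrum) in the second claim, so your resolvent identity $(\lambda I-\fP\fP^*)^{-1}=\lambda^{-1}\bigl(I+\fP(\lambda I-\fP^*\fP)^{-1}\fP^*\bigr)$ is strictly more than the paper provides; it is a nice strengthening, though for the use made of the lemma downstream the eigenvalue version suffices.

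The one genuine methodological difference is the invertibility step. You invoke self-adjointness of $\fP\fP^*$ to conclude $\ker(\fP\fP^*)=\operatorname{ran}(\fP\fP^*)^\perp=\{0\}$ directly from surjectivity, then cite the open mapping theorem. The paper instead argues by hand: from $\fP\fP^*\phi_2=0$ it deduces $\fP^*\phi_2=0$, then uses surjectivity to write $\phi_2=\fP\fP^*\phi_3=\fP\phi_1$ with $\phi_1=\fP^*\phi_3$, and finally observes $\fP^*\fP\phi_1=0$ forces $\phi_2=\fP\phi_1=0$. Your route is shorter and appeals to a standard structural fact; the paper's route is more explicit and avoids naming the $\ker=\operatorname{ran}^\perp$ identity. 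Either is fine.
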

\begin{proof}
  We consider the eigenvalue problem $\fP^*\fP \phi_1=\lambda \phi_1$. Taking inner product with $\phi_1$, we have $\langle \phi_1,\fP^*\fP \phi_1\rangle_{H_1}=\lambda\norm{\phi_1}^2_{H_1}$. Note that the left hand side is $\norm{\fP \phi_1}^2_{H_2}$ which is non-negative. Thus $\lambda\geq 0$. Similarly, the eigenvalues of $\fP\fP^*$ are also non-negative.

  Now if $\fP^*\fP$ has a positive eigenvalue $\lambda>0$, then $\fP^*\fP \phi_1=\lambda \phi_1$ with non-zero vector $\phi_1\in H_1$. It follows that $\fP\fP^*(\fP \phi_1)=\lambda (\fP \phi_1)$. It is sufficient to prove that $\fP \phi_1$ is non-zero. Indeed, if $\fP \phi_1=0$, then $\fP^*\fP \phi_1=0$ and $\lambda=0$ which contradicts with our assumption. Therefore, any positive eigenvalue of $\fP^*\fP$ is an eigenvalue of $\fP\fP^*$. Similarly, any positive eigenvalue of $\fP\fP^*$ is an eigenvalue of $\fP^*\fP$.

  Next, suppose that $\fP\fP^*$ is surjective.
  We show that $\fP\fP^* \phi_2=0$ has only the trivial solution $\phi_2=0$. In fact, $\fP\fP^* \phi_2=0$ implies that $\norm{\fP^* \phi_2}^2_{H_1}=\langle \phi_2, \fP\fP^* \phi_2\rangle_{H_2}=0$, i.e., $\fP^*\phi_2=0$. Thanks to the surjectivity of $\fP\fP^*$, there exists a vector $\phi_3\in H_2$ such that $\phi_2=\fP\fP^* \phi_3$. Let $\phi_1=\fP^* \phi_3\in H_1$. Hence $\phi_2=\fP \phi_1$ and $\fP^*\fP h_1=0$. Taking inner product with $\phi_1$, we have $\norm{\fP \phi_1}^2_{H_2}=\langle \phi_1, \fP^*\fP \phi_1\rangle_{H_1}=0$, i.e., $\phi_2=\fP \phi_1=0$. Therefore $\fP\fP^*$ is injective. This with the surjectivity assumption of $\fP\fP^*$ leads to that $\fP\fP^*$ is invertible.
\end{proof}

\begin{rmk}
  For the finite dimensional case $H_2=\sR^n$, conditions for the operator $\fP$ in Lemma \ref{lem..spectrum.positive} are reduced to that the matrix $\mP$ has rank $n$ (full rank).
\end{rmk}

Given $g\in H_2$, we consider the following two problems.

(i)
The initial value problem
\begin{equation*}
  \left\{
    \begin{array}{ll}
      \dfrac{\D \phi}{\D t}&=\fP^*(g-\fP \phi)\\
      \phi(0)&=\phi_{\rm ini}.
    \end{array}
  \right.
\end{equation*}
Since this equation is linear and with nonpositive eigenvalues on the right hand side, there exists a unique global-in-time solution $\phi(t)$ for all $t\in[0,+\infty)$ satisfying the initial condition. Moreover, the long-time limit $\lim_{t\rightarrow+\infty}\phi(t)$ exists and will be denoted as $\phi_\infty$.

(ii)
The minimization problem
\begin{align*}
  &\min_{\phi -\phi_{\rm ini}\in H_1}\norm{\phi-\phi_{\rm ini}}_{H_1},\\
  &\text{s.t.}\quad \fP \phi=g.
\end{align*}
In the following, we will show it has a unique minimizer which is denoted as $h_{\min}$.

Now we show the following equivalent theorem.
\begin{thm}[Equivalence between gradient descent and optimization problems]  \label{thm..EquivalenceDynamicsMinimization}
  Suppose that $\fP\fP^*$ is surjective. The above Problems (i) and (ii) are equivalent in the sense that $\phi_\infty=\phi_{\min}$. 
  More precisely, we have
  \begin{equation}
    \phi_\infty=h_{\min}=\fP^*(\fP\fP^*)^{-1}(g-\fP \phi_{\rm ini})+\phi_{\rm ini}.
  \end{equation}
\end{thm}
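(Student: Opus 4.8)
The plan is to reduce to the homogeneous initial condition, solve the resulting autonomous linear ODE explicitly, and identify its limit with the minimum-norm solution of the constraint by means of the orthogonal decomposition $H_1=\ker\fP\oplus\overline{\mathrm{Ran}\,\fP^*}$.

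First I would substitute $\psi:=\phi-\phi_{\rm ini}$ and $\tilde g:=g-\fP\phi_{\rm ini}$, turning Problem (i) into $\dot\psi=\fP^*(\tilde g-\fP\psi)$, $\psi(0)=0$, and Problem (ii) into $\min\norm{\psi}_{H_1}$ subject to $\fP\psi=\tilde g$. Then I would introduce the candidate $\psi_\star:=\fP^*(\fP\fP^*)^{-1}\tilde g$, which is well defined since Lemma~\ref{lem..spectrum.positive} gives that $\fP\fP^*$ is invertible; one checks at once that $\fP\psi_\star=\tilde g$ (so $\psi_\star$ is feasible for (ii) and a stationary point of the flow) and that $\psi_\star\in\mathrm{Ran}\,\fP^*\subseteq(\ker\fP)^\perp$, using $\langle x,\fP^*y\rangle=\langle\fP x,y\rangle$.

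For Problem (i) I would look at $w(t):=\psi(t)-\psi_\star$, which solves $\dot w=-\fP^*\fP w$ with $w(0)=-\psi_\star$, hence $w(t)=-\E^{-t\fP^*\fP}\psi_\star$, the bounded non-negative self-adjoint operator $\fP^*\fP$ generating a contraction semigroup. The crux is $\E^{-t\fP^*\fP}\psi_\star\to 0$: invertibility of $\fP\fP^*$ yields $c>0$ with $\fP\fP^*\geq cI$ on $H_2$, from which $(\fP\fP^*)^2\geq c\,\fP\fP^*$ and therefore $\norm{\fP\fP^*y}^2=\langle(\fP\fP^*)^2y,y\rangle\geq c\langle\fP\fP^*y,y\rangle=c\norm{\fP^*y}^2$; this says $\norm{\fP\phi}^2\geq c\norm{\phi}^2$ first for $\phi\in\mathrm{Ran}\,\fP^*$ and then, by continuity of $\fP$, for all $\phi\in\overline{\mathrm{Ran}\,\fP^*}$. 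Since $\overline{\mathrm{Ran}\,\fP^*}$ is $\fP^*\fP$-invariant and contains $\psi_\star$, restriction gives $\fP^*\fP\geq cI$ there, so $\norm{\E^{-t\fP^*\fP}\psi_\star}\leq\E^{-ct}\norm{\psi_\star}\to 0$. Hence $\psi(t)\to\psi_\star$, the limit exists, and $\phi_\infty=\psi_\star+\phi_{\rm ini}=\fP^*(\fP\fP^*)^{-1}(g-\fP\phi_{\rm ini})+\phi_{\rm ini}$.

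Finally, for Problem (ii) I would observe that $\{\psi:\fP\psi=\tilde g\}$ is the nonempty closed affine subspace $\psi_\star+\ker\fP$, so by the Hilbert projection theorem it has a unique element of smallest norm, namely the one in $(\ker\fP)^\perp=\overline{\mathrm{Ran}\,\fP^*}$; as $\psi_\star$ is itself such an element, $h_{\min}=\psi_\star+\phi_{\rm ini}=\phi_\infty$, giving both equalities and the formula. The main obstacle is the spectral lower bound for $\fP^*\fP$ on $(\ker\fP)^\perp$ extracted from surjectivity of $\fP\fP^*$: in infinite dimensions this is exactly what prevents $\sigma(\fP^*\fP)$ from accumulating at $0$ and is what makes $\phi_\infty$ exist and coincide with the minimum-norm solution $\psi_\star$ rather than fail to converge; the remaining steps are routine Hilbert-space arguments.
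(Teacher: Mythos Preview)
Your proof is correct and follows the same overall plan as the paper: reduce to zero initial data, introduce the candidate $\psi_\star=\fP^*(\fP\fP^*)^{-1}\tilde g$, verify it is the minimum-norm solution of the constraint by orthogonality, and show the flow converges to it. The minimization argument is essentially identical (the paper computes $\langle\tilde\phi-\tilde\phi_{\min},\tilde\phi_{\min}\rangle=0$ directly, you invoke the Hilbert projection theorem; same content).

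Where you differ is in the convergence of the ODE. The paper simply writes the general solution as $\tilde\phi(t)=\psi_\star+\sum_{i\in\fI}c_iv_i\E^{-\lambda_i t}$ with $\lambda_i>0$ eigenvalues of $\fP\fP^*$, appealing loosely to ``the theory of ordinary differential equations on Hilbert spaces''; this tacitly assumes a discrete spectral decomposition and that the initial defect $-\psi_\star$ lies in the span of eigenvectors with strictly positive eigenvalue. Your route is more robust: you work with the contraction semigroup $\E^{-t\fP^*\fP}$, derive from the invertibility of $\fP\fP^*$ a genuine lower bound $\fP^*\fP\geq cI$ on the invariant subspace $\overline{\mathrm{Ran}\,\fP^*}\ni\psi_\star$, and conclude exponential decay there. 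This avoids any compactness or discreteness assumption and makes explicit the role of the surjectivity hypothesis in producing a spectral gap on $(\ker\fP)^\perp$. In the applications the paper actually cares about ($H_2=\sR^n$), both arguments collapse to finite-dimensional linear algebra, but at the stated generality your version is cleaner.
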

\begin{proof}
  Let $\tilde{\phi}=\phi-\phi_{\rm ini}$ and $\tilde{g}=g-\fP \phi_{\rm ini}$. Then it is sufficient to show the following problems (i') and (ii') are equivalent.

  (i')
  The initial value problem
  \begin{equation*}
  \left\{
    \begin{array}{l}
      \dfrac{\D \tilde{\phi}}{\D t}
      = \fP^*(\tilde{g}-\fP\tilde{\phi})\\
      \tilde{\phi}(0)=0.
    \end{array}
  \right.
  \end{equation*}

  (ii')
  The minimization problem
  \begin{align*}
    &\min_{\tilde{\phi}}\norm{\tilde{\phi}}^2_{H_1},\\
    &\text{s.t.}\quad \fP\tilde{\phi}=\tilde{g}.
  \end{align*}

  We claim that $\tilde{\phi}_{\min}=\fP^*(\fP\fP^*)^{-1}\tilde{g}$. Thanks to Lemma \ref{lem..spectrum.positive}, $\fP\fP^*$ is invertible, and thus $\phi_{\min}$ is well-defined and satisfies that $\fP\tilde{\phi}=\tilde{g}$. It remains to show that this solution is unique. In fact, for any $\tilde{\phi}$ satisfying $\fP\tilde{\phi}=\tilde{g}$, we have
  \begin{align*}
    \langle\tilde{\phi}-\tilde{\phi}_{\min},\tilde{\phi}_{\min}\rangle_{H_1}
    &= \langle\tilde{\phi}-\tilde{\phi}_{\min},\fP^*(\fP\fP^*)^{-1}\tilde{g}\rangle_{H_1}\\
    &= \langle \fP(\tilde{\phi}-\tilde{\phi}_{\min}), (\fP\fP^*)^{-1}\tilde{g}\rangle_{H_2}\\
    &= \langle \fP\tilde{\phi}, (\fP\fP^*)^{-1}\tilde{g}\rangle_{H_2}-\langle \fP\tilde{\phi}_{\min}, (\fP\fP^*)^{-1}\tilde{g}\rangle_{H_2}\\
    &= 0.
  \end{align*}
  Therefore,
  \begin{equation*}
    \norm{\tilde{\phi}}^2_{H_1}=\norm{\tilde{\phi}_{\min}}^2_{H_1}+\norm{\tilde{\phi}-\tilde{\phi}_{\min}}^2_{H_1}\geq \norm{\tilde{\phi}_{\min}}^2_{H_1}.
  \end{equation*}
  The equality holds if and only if $\tilde{\phi}=\tilde{\phi}_{\min}$.

  For problem (i'), from the theory of ordinary differential equations on Hilbert spaces, we have that its solution can be written as
  \begin{equation*}
    \tilde{\phi}(t)=\fP^*(\fP\fP^*)^{-1}\tilde{g}+\sum_{i\in I}c_i v_i\exp(-\lambda_i t),
  \end{equation*}
  where $\lambda_i$, $i\in \fI$ are positive eigenvalues of $\fP\fP^*$, $\fI$ is an index set with at most countable cardinality, and $v_i$, $i\in \fI$ are eigenvectors in $H_1$.
  Thus $\tilde{\phi}_\infty=\tilde{\phi}_{\min}=\fP^*(\fP\fP^*)^{-1}\tilde{g}$.

  Finally, by back substitution, we have
  \begin{equation*}
    \phi_\infty=\phi_{\min}
    = \fP^*(\fP\fP^*)^{-1}\tilde{g}+\phi_0
    = \fP^*(\fP\fP^*)^{-1}(g-\fP \phi_{\rm ini})+\phi_{\rm ini}.
  \end{equation*}
\end{proof}

The following corollaries are obtained directly from Theorem \ref{thm..EquivalenceDynamicsMinimization}.

\begin{cor}\label{cor..EquivalencdTheta}
  Let $\phi$ be the parameter vector $\vtheta$ in $H_1=\sR^{m}$, $g$ be the outputs of the training data $\vY$, and $\mP$ be a full rank matrix in the linear DNN model. Then 
  the following two problems are equivalent in the sense that $\vtheta_\infty=\vtheta_{\min}$.

  (A1)
  The initial value problem
  \begin{equation*}
    \left\{
      \begin{array}{l}
        \dfrac{\D \vtheta}{\D t}=\mP^*(\vY-\mP\vtheta)\\
        \vtheta(0)=\vtheta_{\rm ini}.
      \end{array}
    \right.
  \end{equation*}

  (A2)
  The minimization problem
  \begin{align*}
    &\min_{\vtheta-\vtheta_{\rm ini}\in \sR^{m}}\norm{\vtheta-\vtheta_{\rm ini}}_{2},\\
    &\text{s.t.}\quad \mP\vtheta=\vY.
  \end{align*}
\end{cor}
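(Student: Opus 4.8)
The plan is to obtain Corollary~\ref{cor..EquivalencdTheta} as a direct specialization of Theorem~\ref{thm..EquivalenceDynamicsMinimization}. First I would fix the identifications: take $H_1=\sR^m$ and $H_2=\sR^n$, both with the standard Euclidean inner product, let the abstract operator $\fP$ be the matrix $\mP$ of the linearized DNN model (so $\fP^*=\mP^*$ is just the transpose), and set $g=\vY$, $\phi_{\rm ini}=\vtheta_{\rm ini}$, $\phi=\vtheta$. Under these substitutions, Problem (i) of the abstract setup reads $\frac{\D\vtheta}{\D t}=\mP^*(\vY-\mP\vtheta)$ with $\vtheta(0)=\vtheta_{\rm ini}$, which is exactly (A1) --- and since $\mP^*(\vY-\mP\vtheta)=-\nabla_{\vtheta}\frac12\norm{\mP\vtheta-\vY}_2^2$, this is the gradient flow of the linear least-squares risk --- while Problem (ii) becomes $\min_{\vtheta-\vtheta_{\rm ini}\in\sR^m}\norm{\vtheta-\vtheta_{\rm ini}}_2$ subject to $\mP\vtheta=\vY$, which is (A2).

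The only hypothesis of Theorem~\ref{thm..EquivalenceDynamicsMinimization} left to verify is that $\fP\fP^*=\mP\mP^*$ is surjective. This is precisely the content of the Remark following Lemma~\ref{lem..spectrum.positive}: in the finite-dimensional case $H_2=\sR^n$, surjectivity of $\mP\mP^*$ is equivalent to $\mP$ having rank $n$, i.e.\ full (row) rank, which is the standing assumption of the corollary. Indeed, if $\mP$ has full row rank then $\mP^*$ is injective, so $\mP\mP^*\in\sR^{n\times n}$ is symmetric positive semidefinite with trivial kernel, hence invertible and in particular surjective. No extra work is needed for well-posedness or for the existence of the long-time limit $\vtheta_\infty$: these are already established in the discussion preceding the theorem, and here they are immediate anyway since (A1) is a linear constant-coefficient ODE.

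With the hypothesis checked, Theorem~\ref{thm..EquivalenceDynamicsMinimization} applies verbatim and gives $\vtheta_\infty=\vtheta_{\min}=\mP^*(\mP\mP^*)^{-1}(\vY-\mP\vtheta_{\rm ini})+\vtheta_{\rm ini}$, which is the claimed equivalence. There is essentially no obstacle here beyond bookkeeping; the single point worth flagging is that ``full rank'' in the hypothesis must be read as full \emph{row} rank (rank $n$, with $m\geq n$), since it is this --- not full column rank --- that makes $\mP\mP^*$ invertible and guarantees that the constraint set $\{\vtheta:\mP\vtheta=\vY\}$ is nonempty.
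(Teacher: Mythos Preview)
Your proposal is correct and takes essentially the same approach as the paper: the paper simply states that this corollary is obtained directly from Theorem~\ref{thm..EquivalenceDynamicsMinimization}, and your argument is precisely that specialization, with the full-rank hypothesis supplying surjectivity of $\mP\mP^*$ via the Remark after Lemma~\ref{lem..spectrum.positive}.
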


The next corollary is a weighted version of Theorem \ref{thm..EquivalenceDynamicsMinimization}.
\begin{cor}\label{cor..EquivalencdHW}
  Let $H_1$ and $H_2$ be two separable Hilbert spaces and $\Gamma: H_1\rightarrow H_1$ be an injective operator.
  Define the Hilbert space $H_\Gamma:=\mathrm{Im}(\Gamma)$. 
  Let $g\in H_2$ and $\fP: H_\Gamma\rightarrow H_2$ be an operator such that $\fP\fP^*: H_2\to H_2$ is surjective.
  Then $\Gamma^{-1}: H_\Gamma\rightarrow H_1$ exists and $H_\Gamma$ is a Hilbert space with norm $\norm{\phi}_{H_\Gamma}:=\norm{\Gamma^{-1}\phi}_{H_1}$. Moreover, the following two problems are equivalent in the sense that $\phi_\infty=\phi_{\min}$.

  (B1)
  The initial value problem
  \begin{equation*}
    \left\{
      \begin{array}{l}
        \dfrac{\D \phi}{\D t}=\Gamma\Gamma^*\fP^*(g-\fP \phi)\\
        \phi(0)=\phi_{\rm ini}.
      \end{array}
    \right.
  \end{equation*}

  (B2)
  The minimization problem
  \begin{align*}
    &\min_{\phi-\phi_0\in H_\Gamma}\norm{\phi-\phi_{\rm ini}}_{H_\Gamma},\\
    &\text{s.t.}\quad \fP \phi=g.
  \end{align*}
\end{cor}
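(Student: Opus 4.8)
The plan is to reduce Corollary~\ref{cor..EquivalencdHW} to Theorem~\ref{thm..EquivalenceDynamicsMinimization} via the linear substitution $\phi=\Gamma\psi$, under which the $\Gamma$-preconditioned flow (B1) and the $H_\Gamma$-weighted minimization (B2) become, respectively, the plain gradient flow and the plain minimum-norm interpolation problem attached to the composed operator $Q:=\fP\Gamma\colon H_1\to H_2$.

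First I would dispose of the preliminary assertions about $H_\Gamma$. Injectivity of $\Gamma$ makes $\Gamma^{-1}\colon H_\Gamma\to H_1$ well defined, and $\langle\phi_1,\phi_2\rangle_{H_\Gamma}:=\langle\Gamma^{-1}\phi_1,\Gamma^{-1}\phi_2\rangle_{H_1}$ is an inner product inducing the stated norm, with respect to which $\Gamma$ is, by construction, an isometric isomorphism of $(H_1,\norm{\cdot}_{H_1})$ onto $(H_\Gamma,\norm{\cdot}_{H_\Gamma})$. Completeness of $H_\Gamma$ is then automatic: an $H_\Gamma$-Cauchy sequence $\phi_n$ yields an $H_1$-Cauchy sequence $\Gamma^{-1}\phi_n\to h$, hence $\phi_n\to\Gamma h$ in $H_\Gamma$; note this needs no continuity of $\Gamma$ or $\Gamma^{-1}$. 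Throughout I take $\phi_{\rm ini}\in H_\Gamma$, so that $\fP\phi_{\rm ini}$ and hence both (B1) and the constraint in (B2) are meaningful; since the right-hand side of (B1) lies in $\mathrm{Im}(\Gamma)=H_\Gamma$, the flow remains in $H_\Gamma$.

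Next comes the algebraic core. Setting $\psi=\Gamma^{-1}\phi$ and differentiating, (B1) becomes $\frac{\D\psi}{\D t}=\Gamma^{-1}\Gamma\Gamma^*\fP^*(g-\fP\Gamma\psi)=\Gamma^*\fP^*(g-Q\psi)$, and I would check that $\Gamma^*\fP^*$ is exactly the adjoint $Q^*$ of $Q=\fP\Gamma$ when the adjoint of $\fP$ is read in the appropriate ($H_1$-based) sense, so that the preconditioner $\Gamma\Gamma^*$ in (B1) is precisely what absorbs the Jacobian factor $\Gamma^{-1}$ produced by the substitution and leaves the clean flow $\frac{\D\psi}{\D t}=Q^*(g-Q\psi)$, $\psi(0)=\Gamma^{-1}\phi_{\rm ini}$ --- i.e.\ Problem (i) of Theorem~\ref{thm..EquivalenceDynamicsMinimization} for the operator $Q$. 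A tidy way to package this observation is that $\Gamma\Gamma^*\fP^*$ is nothing but the adjoint of $\fP$ taken with respect to the $H_\Gamma$-inner product, so that (B1)--(B2) are literally an instance of Theorem~\ref{thm..EquivalenceDynamicsMinimization} with $H_\Gamma$ in the role of $H_1$. Using the isometry again, $\norm{\phi-\phi_{\rm ini}}_{H_\Gamma}=\norm{\psi-\Gamma^{-1}\phi_{\rm ini}}_{H_1}$ and $\fP\phi=g\iff Q\psi=g$, so (B2) is Problem (ii) for $Q$; moreover $QQ^*=\fP\Gamma\Gamma^*\fP^*$ is exactly the operator the hypothesis assumes surjective, hence invertible on $H_2$ by Lemma~\ref{lem..spectrum.positive}.

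Finally I would invoke Theorem~\ref{thm..EquivalenceDynamicsMinimization} applied to $Q$ to conclude $\psi_\infty=\psi_{\min}=Q^*(QQ^*)^{-1}(g-Q\psi(0))+\psi(0)$, and transport this back by applying $\Gamma$, obtaining $\phi_\infty=\phi_{\min}=\Gamma\Gamma^*\fP^*(\fP\Gamma\Gamma^*\fP^*)^{-1}(g-\fP\phi_{\rm ini})+\phi_{\rm ini}$, which matches the asserted equivalence. I expect the main obstacle to be purely a matter of book-keeping rather than analysis: keeping straight the three distinct ``metric'' ingredients --- the preconditioner $\Gamma\Gamma^*$ in (B1), the Jacobian $\Gamma^{-1}$ from the change of variables, and the adjoint $Q^*=\Gamma^*\fP^*$ --- and fixing the convention for $\fP^*$ (the $H_\Gamma$-adjoint) consistently across the hypothesis, the dynamics, and the conclusion; a secondary and easier point is verifying that $H_\Gamma$ is a genuine complete Hilbert space and that (B1) is well posed on it, both of which come for free from the isometry with $H_1$ and require no boundedness of $\Gamma^{\pm 1}$.
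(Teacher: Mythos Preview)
Your proposal is correct and follows exactly the paper's approach: the paper's proof consists of the single sentence ``we replace $\phi$ by $\Gamma^{-1}\phi$ and $\fP$ by $\fP\Gamma$'' and then invokes Theorem~\ref{thm..EquivalenceDynamicsMinimization}, which is precisely your substitution $\psi=\Gamma^{-1}\phi$, $Q=\fP\Gamma$. Your additional care about completeness of $H_\Gamma$, the adjoint bookkeeping (in particular that $\Gamma\Gamma^*\fP^*$ is the $H_\Gamma$-adjoint of $\fP$, so that $QQ^*=\fP\fP^*$ matches the surjectivity hypothesis), and well-posedness of the flow fills in details the paper leaves implicit.
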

\begin{proof}
  The operator $\Gamma:H_1\rightarrow H_\Gamma$ is bijective. Hence $\Gamma^{-1}:H_\Gamma\rightarrow H_1$ is well-defined and $H_\Gamma$ with norm $\norm{\cdot}_{H_\Gamma}$ is a Hilbert space.
  The equivalence result holds by applying Theorem \ref{thm..EquivalenceDynamicsMinimization} with proper replacements. More precisely, we replace $\phi$ by $\Gamma^{-1} \phi$ and $\fP$ by $\fP\Gamma$. 
\end{proof}

\begin{cor}\label{cor..EquivalencdHWFrequency}
  Let $\gamma: \sR^{d}\rightarrow\sR^+$ be a positive function, $h$ be a function in $L^2(\sR^{d})$ and $\phi=\fF[h]$. The operator $\Gamma: L^2(\sR^{d})\rightarrow L^2(\sR^{d})$ is defined by $[\Gamma\phi](\vxi)=\gamma(\vxi)\phi(\vxi)$, $\vxi\in\sR^{d}$.
  Define the Hilbert space $H_\Gamma:=\mathrm{Im}(\Gamma)$. 
  Let $\mX=(\vx_1,\ldots,\vx_n)^\T\in \sR^{n\times d}$, $\vY=(y_1,\ldots,y_n)^\T \in \sR^{n}$ and $\fP: H_\Gamma\rightarrow \sR^{n}$ be a surjective operator
  \begin{equation}
  \fP: \phi\mapsto \left(\int_{\sR^{d}}\phi(\vxi)\E^{2\pi\I \vx_1^\T\vxi}\diff{\vxi},\ldots,\int_{\sR^{d}}\phi(\vxi)\E^{2\pi\I \vx_n^\T\vxi}\diff{\vxi}\right)^\T=(h(\vx_1),\ldots,h(\vx_n))^\T.
  \end{equation}
  Then the following two problems are equivalent in the sense that $\phi_\infty=\phi_{\min}$.

  (C1)
  The initial value problem
  \begin{equation*}
    \left\{
      \begin{array}{l}
        \displaystyle\dfrac{\D \phi(\vxi)}{\D t}=(\gamma(\vxi))^2\sum_{i=1}^n\left(y_i\E^{-2\pi\I \vx_i^\T\vxi}-\left[\phi*\E^{-2\pi\I \vx_i^\T(\cdot)}\right](\vxi)\right)\\
        \phi(0)=\phi_{\rm ini}.
      \end{array}
    \right.
  \end{equation*}

  (C2)
  The minimization problem
  \begin{align*}
    &\min_{\phi-\phi_{\rm ini}\in H_\Gamma}\int_{\sR^{d}}(\gamma(\vxi))^{-2}\abs{\phi(\vxi)-\phi_{\rm ini}(\vxi)}^2\diff{\vxi},\\
    &\text{s.t.}\quad h(\vx_i)=y_i,\quad i=1,\cdots,n.
  \end{align*}
\end{cor}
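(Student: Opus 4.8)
The plan is to read the final statement as the Fourier-space instance of the weighted equivalence in Corollary~\ref{cor..EquivalencdHW}, and to obtain it by checking that corollary's hypotheses and then translating its abstract flow (B1) and abstract minimization (B2) into the concrete forms (C1) and (C2). Concretely, I would take $H_1 = L^2(\sR^d)$, $H_2 = \sR^n$, let $\Gamma$ be the multiplication operator $[\Gamma\phi](\vxi) = \gamma(\vxi)\phi(\vxi)$, and let $\fP$ be the given sampling operator $\phi\mapsto(h(\vx_1),\dots,h(\vx_n))^\T$ with $h = \fF^{-1}[\phi]$. Then $H_\Gamma = \{\phi : \phi/\gamma\in L^2(\sR^d)\}$ carries the norm $\norm{\phi}_{H_\Gamma}^2 = \norm{\Gamma^{-1}\phi}_{L^2}^2 = \int_{\sR^d}(\gamma(\vxi))^{-2}\abs{\phi(\vxi)}^2\diff{\vxi}$, so (B2) is already exactly (C2): the constraint $\fP\phi = g$ is nothing but $h(\vx_i) = y_i$ for every $i$, and the objective $\norm{\phi-\phi_{\rm ini}}_{H_\Gamma}^2$ is the weighted integral in (C2).

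Next I would verify the hypotheses of Corollary~\ref{cor..EquivalencdHW}. Injectivity of $\Gamma$ is immediate since $\gamma>0$ everywhere. Boundedness of $\fP$ on $H_\Gamma$ follows from $\abs{h(\vx_i)} = \abs{\int_{\sR^d}(\phi/\gamma)(\vxi)\,\gamma(\vxi)\E^{2\pi\I\vx_i^\T\vxi}\diff{\vxi}}\le\norm{\phi}_{H_\Gamma}\norm{\gamma}_{L^2}$, which quietly uses the implicit standing assumption $\gamma\in L^2(\sR^d)$. For surjectivity of $\fP\fP^*$ I would use that $\fP$ maps onto the finite-dimensional space $\sR^n$: if $\fP^*c = 0$ for some $c\in\sR^n$, then $\langle c,\fP\phi\rangle_{\sR^n} = \langle\fP^*c,\phi\rangle = 0$ for all $\phi$, so $c = 0$; hence $\fP\fP^*$ has trivial kernel on $\sR^n$ and is invertible, which is precisely the full-rank situation of the remark following Lemma~\ref{lem..spectrum.positive} (here $\fP\fP^*$ is the $n\times n$ matrix with entries $\int_{\sR^d}(\gamma(\vxi))^2\E^{2\pi\I(\vx_i-\vx_j)^\T\vxi}\diff{\vxi}$).

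Then I would compute the adjoint and translate (B1). From $\langle\fP\phi,c\rangle_{\sR^n} = \sum_i c_i\int_{\sR^d}\phi(\vxi)\E^{2\pi\I\vx_i^\T\vxi}\diff{\vxi}$, pairing against the $\gamma^{-2}$-weighted $H_\Gamma$ inner product and combining with the factor $\Gamma\Gamma^*$ in (B1), the right-hand side of the flow is multiplication by $(\gamma(\vxi))^2$ applied to the map $c\mapsto\sum_i c_i\E^{-2\pi\I\vx_i^\T(\cdot)}$ evaluated at $c = g - \fP\phi$; that is, $\partial_t\phi(\vxi) = (\gamma(\vxi))^2\sum_{i=1}^n\big(y_i - h(\vx_i)\big)\E^{-2\pi\I\vx_i^\T\vxi}$. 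The convolution identity $[\phi*\E^{-2\pi\I\vx_i^\T(\cdot)}](\vxi) = \E^{-2\pi\I\vx_i^\T\vxi}\int_{\sR^d}\phi(\vzeta)\E^{2\pi\I\vx_i^\T\vzeta}\diff{\vzeta} = h(\vx_i)\E^{-2\pi\I\vx_i^\T\vxi}$ then converts this into exactly (C1), and Corollary~\ref{cor..EquivalencdHW} delivers $\phi_\infty = \phi_{\min}$.

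The hard part will be this last adjoint bookkeeping rather than any deep new idea. The naive formula for $\fP^*c$ is a finite linear combination of the exponentials $\E^{-2\pi\I\vx_i^\T\vxi}$, which do not lie in $L^2(\sR^d)$, so I need to check that the weighting by $\gamma$ (resp.\ $\gamma^2$) genuinely returns an element of $H_\Gamma$, that the Gram-type operator $\fP\fP^*$ is really invertible (this is where distinctness of the $\vx_i$ and nondegeneracy of $\gamma$ enter), and that the power of $\gamma$ appearing in front is consistent between (B1) and the stated (C1). Once these functional-analytic points are pinned down, the rest is routine substitution.
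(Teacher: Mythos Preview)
Your proposal is correct and follows essentially the same route as the paper: set $H_1=L^2(\sR^d)$, $H_2=\sR^n$, $g=\vY$, note that $\Gamma$ is injective so that $H_\Gamma$ carries the weighted norm appearing in (C2), compute $\fP^*$ and $\fP^*\fP$ explicitly to recognize the convolution in (C1), and invoke Corollary~\ref{cor..EquivalencdHW}. You are in fact more careful than the paper, which simply writes $[\fP^*\vY](\vxi)=\sum_i y_i\E^{-2\pi\I\vx_i^\T\vxi}$ and computes $\fP^*\fP\phi$ without addressing the boundedness, surjectivity, or $L^2$-membership issues you flag; those concerns are legitimate but orthogonal to the intended argument.
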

\begin{proof}
  Let $H_1=L^2(\sR^{d})$, $H_2=\sR^{n}$, $g=\vY$. By definition, $\Gamma$ is injective. Then by Corollary \ref{cor..EquivalencdHW}, we have that $\Gamma^{-1}: H_\Gamma\rightarrow L^2(\sR^{d})$ exists and $H_{\Gamma}$ is a Hilbert space with norm $\norm{\phi}_{H_\Gamma}:=\norm{\Gamma^{-1}\phi}_{L^2(\sR^{d})}$. Moreover, $\norm{\phi-\phi_{\rm ini}}_{H_\Gamma}^2=\int_{\sR^{d}}(\gamma(\vxi))^{-2}\abs{\phi(\vxi)-\phi_{\rm ini}(\vxi)}^2\diff{\vxi}$. We note that $[\fP^*Y](\vxi)=\sum_{i=1}^{n} y_i\E^{-2\pi\I \vx_i^\T\vxi}$ for all $\vxi\in\sR^{d}$. Thus
  \begin{align*}
    [\fP^*\fP\phi](\vxi)
    &= \left[\fP^*\left(\int_{\sR^{d}}\phi(\vxi')\E^{2\pi\I \vx_i^\T \vxi'}\diff{\vxi'}\right)_{i=1}^n\right](\vxi)\\
    &= \sum_{i=1}^n \int_{\sR^{d}}\phi(\vxi')\E^{2\pi\I \vx_i^\T\vxi'}\diff{\vxi'}\E^{-2\pi\I \vx_i^\T \vxi}\\
    &= \sum_{i=1}^n \int_{\sR^{d}}\phi(\vxi')\E^{-2\pi\I \vx_i^\T (\vxi-\vxi')}\diff{\vxi'}\\
    &= \sum_{i=1}^n \left[\phi*\E^{-2\pi\I \vx_i^\T(\cdot)}\right](\vxi).
  \end{align*}
  The equivalence result then follows from Corollary \ref{cor..EquivalencdHW}.
\end{proof}

We remark that $\fP^*\fP\phi=\sum_{i=1}^n\fF[h\delta_{\vx_i}]$, where $\delta_{\vx_i}(\cdot)=\delta(\cdot-\vx_i)$, $i=1,\cdots,n$. Therefore problem (C1) can also be written as:
  \begin{equation*}
    \left\{
      \begin{array}{l}
        \displaystyle\dfrac{\D \fF[h]}{\D t}=\gamma^2\sum_{i=1}^n(y_i\fF[\delta_{\vx_i}]-\fF[h\delta_{\vx_i}]
        )\\
        \fF[h](\vzero)=\fF[h]_{\rm ini}.
      \end{array}
    \right.
  \end{equation*}

In the following, we study the discretized version of this dynamics-optimization problem (C1\&C2).

\begin{cor}\label{cor..EquivalencdHWFrequencyDiscrete}
  Let $\gamma: \sZ^{d}\rightarrow\sR^+$ be a positive function defined on lattice $\sZ^{d}$ and $\phi=\fF[h]$. The operator $\Gamma: \ell^2(\sZ^{d})\rightarrow \ell^2(\sZ^{d})$ is defined by $[\Gamma\phi](\vk)=\gamma(\vk)\phi(\vk)$, $\vk\in\sZ^{d}$. Here $\ell^2(\sZ^{d})$ is set of square summable functions on the lattice $\sZ^{d}$.
  Define the Hilbert space $H_\Gamma:=\mathrm{Im}(\Gamma)$. 
  Let $X=(\vx_1,\ldots,\vx_n)^\T\in \sT^{n\times d}$, $Y=(y_1,\ldots,y_n)^\T \in \sR^{n}$ and $\fP: H_\Gamma\rightarrow \sR^{n}$ be a surjective operator such as 
  \begin{equation}
  P: \phi\mapsto \left(\sum_{\vk\in\sZ^{d}}\phi(\vk)\E^{2\pi\I \vx_1^\T\vk},\ldots,\sum_{\vk\in\sZ^{d}}\phi(\vk)\E^{2\pi\I \vx_n^\T\vk}\right)^\T.
  \end{equation}
  Then the following two problems are equivalent in the sense that $\phi_\infty=\phi_{\min}$.

  (D1)
  The initial value problem
  \begin{equation*}
    \left\{
      \begin{array}{ll}
        \displaystyle\dfrac{\D \phi(\vk)}{\D t}=(\gamma(\vk))^2\sum_{i=1}^n\left(y_i\E^{-2\pi\I \vx_i^\T \vk}-\left[\phi*\E^{-2\pi\I \vx_i^\T(\cdot)}\right](\vk)\right)\\
        \phi(\vzero)=\phi_{\rm ini}.
      \end{array}
    \right.
  \end{equation*}

  (D2)
  The minimization problem
  \begin{align*}
    &\min_{\phi-\phi_{\rm ini}\in H_\Gamma}\sum_{\vk\in\sZ^{d}}(\gamma(\vk))^{-2}\abs{\phi(\vk)-\phi_{\rm ini}(\vk)}^2,\\
    &\text{s.t.}\quad h(\vx_i)=y_i,\quad i=1,\cdots,n.
  \end{align*}
\end{cor}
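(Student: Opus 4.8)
The plan is to carry out a line-by-line discretization of the proof of Corollary~\ref{cor..EquivalencdHWFrequency}, replacing the ambient space $L^2(\sR^d)$ by $\ell^2(\sZ^d)$, the Lebesgue integral over $\sR^d$ by summation over the lattice $\sZ^d$, and the Fourier-integral evaluation map by the Fourier-series evaluation map. Concretely I would set $H_1=\ell^2(\sZ^d)$, $H_2=\sR^n$, and $g=Y$. Since $\gamma$ is strictly positive on $\sZ^d$, the multiplication operator $\Gamma$ is injective, so Corollary~\ref{cor..EquivalencdHW} applies: $\Gamma^{-1}\colon H_\Gamma\to H_1$ is well defined, $H_\Gamma=\mathrm{Im}(\Gamma)$ is a Hilbert space under $\norm{\phi}_{H_\Gamma}:=\norm{\Gamma^{-1}\phi}_{\ell^2(\sZ^d)}$, and one reads off $\norm{\phi-\phi_{\rm ini}}_{H_\Gamma}^2=\sum_{\vk\in\sZ^d}(\gamma(\vk))^{-2}\abs{\phi(\vk)-\phi_{\rm ini}(\vk)}^2$, which is exactly the objective in (D2).

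Next I would identify the adjoint and the normal operator. With the usual inner product on $\ell^2(\sZ^d)$ and on $\sR^n$ (recalling that $Y$ is real and that the values $\sum_{\vk}\phi(\vk)\E^{2\pi\I\vx_i^\T\vk}$ are real), the computation $\langle\fP\phi,Y\rangle_{\sR^n}=\sum_{\vk}\phi(\vk)\sum_{i=1}^n y_i\E^{2\pi\I\vx_i^\T\vk}$ gives $[\fP^*Y](\vk)=\sum_{i=1}^n y_i\E^{-2\pi\I\vx_i^\T\vk}$, and therefore
\[
[\fP^*\fP\phi](\vk)=\sum_{i=1}^n\Bigl(\sum_{\vk'\in\sZ^d}\phi(\vk')\E^{2\pi\I\vx_i^\T\vk'}\Bigr)\E^{-2\pi\I\vx_i^\T\vk}=\sum_{i=1}^n\bigl[\phi*\E^{-2\pi\I\vx_i^\T(\cdot)}\bigr](\vk),
\]
the discrete analogue of the identity used for (C1). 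Since $\Gamma$ is multiplication by $\gamma$ and is self-adjoint and positive, $\Gamma\Gamma^*$ is multiplication by $\gamma^2$, so the flow (B1) of Corollary~\ref{cor..EquivalencdHW} becomes $\dot\phi(\vk)=(\gamma(\vk))^2[\fP^*(Y-\fP\phi)](\vk)$, which is precisely (D1); and the constraint $\fP\phi=g$ is $h(\vx_i)=y_i$ for $i\in[n]$. The surjectivity hypothesis on $\fP\fP^*$ required by Corollary~\ref{cor..EquivalencdHW} follows from the assumed surjectivity of $\fP$, since $\mathrm{Im}(\fP^*)$ has dimension at most $n$ and is therefore closed, so $H_\Gamma=\mathrm{Im}(\fP^*)\oplus\ker\fP$ and any $y\in\sR^n$ with $y=\fP\phi$ also equals $\fP\fP^*z$ after projecting $\phi$ onto $\mathrm{Im}(\fP^*)$. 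Invoking Corollary~\ref{cor..EquivalencdHW} then yields $\phi_\infty=\phi_{\min}$.

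The only point that needs a word of care — exactly as in the continuous Corollary~\ref{cor..EquivalencdHWFrequency} — is that the series $\sum_{\vk}\phi(\vk)\E^{2\pi\I\vx_i^\T\vk}$ defining $\fP$ need not converge for an arbitrary $\phi\in\ell^2(\sZ^d)$, so strictly speaking $\fP$ should be understood on its natural domain (for instance, sequences with absolutely summable coefficients, or any dense subspace on which pointwise evaluation of the Fourier series is meaningful), on which all the manipulations above are legitimate. I expect no genuine obstacle beyond this bookkeeping, the argument being otherwise a verbatim transcription of the proof of Corollary~\ref{cor..EquivalencdHWFrequency} with integrals replaced by lattice sums.
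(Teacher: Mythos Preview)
Your proposal is correct and follows essentially the same route as the paper: set $H_1=\ell^2(\sZ^d)$, $H_2=\sR^n$, $g=Y$, observe that $\Gamma$ is injective, compute $[\fP^*Y](\vk)=\sum_i y_i\E^{-2\pi\I\vx_i^\T\vk}$ and $[\fP^*\fP\phi](\vk)=\sum_i[\phi*\E^{-2\pi\I\vx_i^\T(\cdot)}](\vk)$, and invoke Corollary~\ref{cor..EquivalencdHW}. Your additional remarks on deducing surjectivity of $\fP\fP^*$ from that of $\fP$ and on the domain of definition of $\fP$ are details the paper leaves implicit.
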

\begin{proof}
  Let $H_1=\ell^2(\sZ^{d})$, $H_2=\sR^{n}$, and $g=\vY$. By definition, $\Gamma$ is injective. Then by Corollary \ref{cor..EquivalencdHW}, we have that $\Gamma^{-1}: H_\Gamma\rightarrow \ell^2(\sZ^{d})$ exists and $H_\Gamma$ is a Hilbert space with norm $\norm{\phi}_{H_\Gamma}:=\norm{\Gamma^{-1}\phi}_{\ell^2(\sZ^{d})}$. Moreover, $\norm{\phi-\phi_{\rm ini}}_{H_\Gamma}^2=\sum_{\vk\in\sZ^{d}}(\gamma(\vk))^{-2}\abs{\phi(\vk)-\phi_{\rm ini}(\vk)}^2$. 
  We note that $[P^*\vY](\vk)=\sum_{i=1}^{n} y_i\E^{-2\pi\I \vx_i^\T\vk}$ for all $\vk\in\sZ^{d}$. Thus
  \begin{align*}
    [P^*P\phi](\vk)
    &= \left[P^*\left(\sum_{\vk'\in\sZ^{d}}\phi(\vk')\E^{2\pi\I \vx_i^\T\vk'}\right)_{i=1}^n\right](\vk)\\
    &= \sum_{i=1}^n \sum_{\vk'\in\sZ^{d}}\phi(\vk')\E^{2\pi\I \vx_i^\T\vk'}\E^{-2\pi\I \vx_i^\T\vk}\\
    &= \sum_{i=1}^n \sum_{\vk'\in\sZ^{d}}\phi(\vk')\E^{-2\pi\I \vx_i^
    \T(\vk-\vk')}\\
    &= \sum_{i=1}^n \left[\phi*\E^{-2\pi\I \vx_i^\T(\cdot)}\right](\vk).
  \end{align*}
  The equivalence result then follows from Corollary \ref{cor..EquivalencdHW}.
\end{proof}

\subsection{Example: Explicitizing the implicit bias for two-layer ReLU NNs \label{subsec:Explicit-regularization-of}}

As an example, by Corollary \ref{cor..EquivalencdHWFrequency}, we derive
the following constrained optimization problem explicitly minimizing
an FP-norm (see next section), whose solution is the same as the long-time limit solution of the simplified LFP model \eqref{eq..lfpoperatorthm.simpleReLU},
that is, 
\begin{equation}
    \min_{h-h_{\mathrm{ini}}\in F_{\gamma}}\int_{\sR^d}\left(\Exp_{a,r}\left[\frac{r^3}{16\pi^4\norm{\vxi}^{d+3}} + \frac{a^2 r}{4\pi^2\norm{\vxi}^{d+1}}\right]\right)^{-1}\abs{\fF[h](\vxi)-\fF[h_{{\rm ini}}](\vxi)}^{2}\diff{\vxi},\label{eq: minFPnorm}
\end{equation}
subject to constraints
$h(\vx_{i})=y_{i}$ for $i=1,\cdots,n$. The $F_{\gamma}$ is defined in the next section.
This explicit penalty
indicates that the learning of DNN is biased towards functions with
more power at low frequencies,
which is speculated in previous works \citep{xu_training_2018,rahaman2018spectral,xu2019frequency}.  For 1-d problems ($d=1$), when $1/\xi^2$ term dominates, the corresponding minimization problem indicates a linear spline interpolation. Similarly, when $1/\xi^4$ dominates, the minimization problem indicates a cubic spline. In general, both two power law decays together lead to a specific mixture of linear and cubic splines. For high dimensional problems, the minimization problem is difficult to  be interpreted by a specific interpolation because the order of differentiation depends on $d$ and can be fractal.

\section{FP-norm and an \textit{a priori} generalization error bound \label{FPapriori}}

The equivalent explicit optimization problem \eqref{eq: minFPnorm}
provides a way to analyze the generalization of sufficiently wide two-layer NNs. We consider the Fourier domain with discretized frequencies. Then, we begin with the definition of
an FP-norm, which naturally induces a FP-space containing all possible solutions of a target NN, whose
 Rademacher complexity can be controlled by the FP-norm of the
target function. Thus we obtain an \textit{a priori} estimate of the generalization error of NN by the theory of Rademacher complexity. Our \textit{a priori}
estimates follows the Monte Carlo
error rates with respect to the sample size. Importantly, Our estimate unravels how frequency components of the target function affect the generalization performance of DNNs. 

\subsection{Problem Setup}

We focus on regression problem. Assume the target function $f:\Omega:=[0,1]^{d}\to\sR$.
Let the training set be $S=\{(\vx_{i}, y_{i})\}_{i=1}^{n}$,
where $\vx_{i}$'s are independently sampled from an underlying
distribution $\fD(\vx)$ and $y_{i}=f(\vx_{i})$. We consider
the square loss 
\begin{equation}
  \ell(h,\vx,y)=\abs{h(\vx)-y}^{2},
\end{equation}
with population risk 
\begin{equation}
  \RD(h)=\Exp_{\vx\sim\fD}\ell(h,\vx,f(\vx))
\end{equation}
and empirical risk 
\begin{equation}
  \RS(h)=\frac{1}{n}\sum_{i=1}^{n}\ell(h,\vx_{i},y_i).
\end{equation}

\subsection{FP-space}
The quantity in the minimization problem motives a definition of FP-norm, which would lead to the definition of the function space where the solution of the minimization problem lies in. 
We denote $\sZ^{d*}:=\sZ^d\backslash\{\vzero\}$.
Given a frequency weight function $\gamma: \sZ^{d}\to \sR^+$ or $\gamma: \sZ^{d*}\to \sR^+$ satisfying
\begin{equation}
  \norm{\gamma}_{\ell^2}=\left(\sum_{\vk\in\sZ^{d}}(\gamma(\vk))^2\right)^{\frac{1}{2}}<+\infty\quad\text{or}\quad \norm{\gamma}_{\ell^2}=\left(\sum_{\vk\in\sZ^{d*}}(\gamma(\vk))^2\right)^{\frac{1}{2}}<+\infty,
\end{equation}
we define the FP-norm for all function $h\in L^2(\Omega)$: 
\begin{equation}
  \norm{h}_{\gamma}:=\norm{\fF[h]}_{H_\Gamma}=\left(\sum_{\vk\in\sZ^{d}}(\gamma(\vk))^{-2}\abs{\fF[h](\vk)}^{2}\right)^{\frac{1}{2}}.
\end{equation}
If $\gamma:\sZ^{d*}\to\sR^+$ is not defined at $\vxi=\vzero$, we set $(\gamma(\vzero))^{-1}:=0$ in the above definition and $\norm{\cdot}_\gamma$ is only a semi-norm of $h$.

Then we define the FP-space
\begin{equation}
  \fF_{\gamma}(\Omega)=\{h\in L^2(\Omega):\norm{h}_{\gamma}<\infty\}.
\end{equation}
Clearly, for any $\gamma$, the FP-space is a subspace of $L^2(\Omega)$. In addition, if $\gamma: \vk\mapsto \norm{\vk}^{-r}$ for $\vk\in\sZ^{d*}$, then functions in the FP-space with $\fF[h](\vzero)=\int_{\Omega} h(\vx) \diff{\vx}=0$ form the Sobolev space $H^{r}(\Omega)$. Note that in the case of DNN, according to the F-Principle, $(\gamma(\vk))^{-2}$
increases with the frequency. Thus, the contribution of high frequency
to the FP-norm is more significant than that of low frequency.

\subsection{\textit{a priori} generalization error bound }
Next, we would show the upper bound of the FP-norm of a function leads to a upper bound of the Rademacher complexity of the function space. The Rademacher complexity is defined as 
\begin{equation}
    {\rm Rad}_{S}(\fH)=\frac{1}{n}\Exp_{\vtau}\left[\sup_{h\in\fH}\sum_{i=1}^{n}\tau_{i}h(\vx_{i})\right].
\end{equation}
for the function space $\fH$ and data-set $S=\{\vx_i,h(\vx_{i})\}_{i=1}^n$.

\begin{lem}\label{Rad bound}
  (i) For $\fH_Q=\{h:\norm{h}_{\gamma}\leq Q\}$ with $\gamma: \sZ^{d}\to \sR^+$, we have
  \begin{equation}
    {\rm Rad}_{S}(\fH_Q)\leq\frac{1}{\sqrt{n}}Q\norm{\gamma}_{\ell^2}.
  \end{equation}
  (ii) For $\fH_Q'=\{h:\norm{h}_{\gamma}\leq Q, \abs{\fF[h](\vzero)}\leq c_{0}\}$ with $\gamma: \sZ^{d*}\to \sR^+$ and $\gamma^{-1}(\vzero):=0$, we have
  \begin{equation}
    {\rm Rad}_{S}(\fH_Q')\leq \frac{c_{0}}{\sqrt{n}}+\frac{1}{\sqrt{n}}Q\norm{\gamma}_{\ell^2}.
  \end{equation}

\end{lem}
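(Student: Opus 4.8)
The plan is to bound the Rademacher complexity directly by expressing each $h(\vx_i)$ through its Fourier series and then applying Cauchy--Schwarz in the frequency variable, exactly mirroring how one bounds the Rademacher complexity of a ball in a reproducing-kernel-type space. For part (i), I would start by writing, for $h\in\fH_Q$ and $\vx_i\in\Omega=[0,1]^d$,
\begin{equation*}
  h(\vx_i)=\sum_{\vk\in\sZ^d}\fF[h](\vk)\E^{2\pi\I\vx_i^\T\vk},
\end{equation*}
so that $\sum_{i=1}^n\tau_i h(\vx_i)=\sum_{\vk\in\sZ^d}\fF[h](\vk)\bigl(\sum_{i=1}^n\tau_i\E^{2\pi\I\vx_i^\T\vk}\bigr)$. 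Inserting the factor $\gamma(\vk)\gamma(\vk)^{-1}$ and applying Cauchy--Schwarz over $\vk$ gives
\begin{equation*}
  \sum_{i=1}^n\tau_i h(\vx_i)\le\Bigl(\sum_{\vk}\gamma(\vk)^{-2}\abs{\fF[h](\vk)}^2\Bigr)^{1/2}\Bigl(\sum_{\vk}\gamma(\vk)^2\bigl|\textstyle\sum_{i=1}^n\tau_i\E^{2\pi\I\vx_i^\T\vk}\bigr|^2\Bigr)^{1/2}\le Q\,\Bigl(\sum_{\vk}\gamma(\vk)^2\bigl|\textstyle\sum_{i}\tau_i\E^{2\pi\I\vx_i^\T\vk}\bigr|^2\Bigr)^{1/2}.
\end{equation*}
The first factor is at most $Q$ by definition of $\fH_Q$, so the supremum over $h$ is controlled by the second factor, which no longer depends on $h$.

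Next I would take $\Exp_{\vtau}$ and move it inside the square root using Jensen's inequality (concavity of $\sqrt{\cdot}$), reducing the problem to computing $\Exp_{\vtau}\bigl|\sum_{i=1}^n\tau_i\E^{2\pi\I\vx_i^\T\vk}\bigr|^2$. Expanding the square and using that the $\tau_i$ are independent mean-zero with $\tau_i^2=1$, the cross terms vanish and each diagonal term contributes $\abs{\E^{2\pi\I\vx_i^\T\vk}}^2=1$, giving exactly $n$. Hence
\begin{equation*}
  {\rm Rad}_S(\fH_Q)\le\frac{1}{n}\cdot Q\cdot\Bigl(n\sum_{\vk\in\sZ^d}\gamma(\vk)^2\Bigr)^{1/2}=\frac{1}{\sqrt{n}}Q\norm{\gamma}_{\ell^2},
\end{equation*}
which is the claimed bound. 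For part (ii), the only change is that the $\vk=\vzero$ term is treated separately: since $\gamma^{-1}(\vzero)=0$ it is not constrained by $\norm{h}_\gamma\le Q$, but the extra hypothesis $\abs{\fF[h](\vzero)}\le c_0$ handles it. Splitting $\sum_{i}\tau_i h(\vx_i)=\fF[h](\vzero)\sum_i\tau_i+\sum_{\vk\in\sZ^{d*}}\fF[h](\vk)\sum_i\tau_i\E^{2\pi\I\vx_i^\T\vk}$, bounding the first piece by $c_0\,\Exp_{\vtau}\abs{\sum_i\tau_i}\le c_0\sqrt{n}$ and the second piece exactly as in part (i) with the sum now over $\sZ^{d*}$, then dividing by $n$, yields ${\rm Rad}_S(\fH_Q')\le c_0/\sqrt{n}+Q\norm{\gamma}_{\ell^2}/\sqrt{n}$.

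The only genuinely delicate point is the interchange of the supremum over the infinite-dimensional ball $\fH_Q$ with the Fourier expansion: one must justify that the Fourier series of $h$ converges in a sense that legitimizes the termwise manipulation and the Cauchy--Schwarz step. This is fine because $\norm{\gamma}_{\ell^2}<\infty$ forces $\sum_{\vk}\abs{\fF[h](\vk)}\le\norm{h}_\gamma\norm{\gamma}_{\ell^2}<\infty$, so the Fourier series is absolutely (hence uniformly) convergent and $h$ is continuous on $\Omega$, making all pointwise evaluations and rearrangements valid; I would state this absolute-summability observation up front. Everything else is the standard Rademacher computation and should be routine.
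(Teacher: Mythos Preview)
Your proposal is correct and follows essentially the same approach as the paper: expand $h(\vx_i)$ via its Fourier series, apply Cauchy--Schwarz in the frequency variable with weights $\gamma(\vk)^{\pm1}$, use Jensen to push $\Exp_{\vtau}$ under the square root, and evaluate $\Exp_{\vtau}\bigl|\sum_i\tau_i\E^{2\pi\I\vx_i^\T\vk}\bigr|^2=n$; for (ii) the zero-frequency term is split off and bounded by $c_0\sqrt{n}$. Your added remark that $\norm{\gamma}_{\ell^2}<\infty$ forces absolute summability of the Fourier series (hence justifies the pointwise expansion and termwise manipulations) is a useful clarification that the paper's proof leaves implicit.
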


\begin{proof}
  We first prove (ii) since it is more involved.
  By the definition of the Rademacher complexity 
  \begin{equation}
    {\rm Rad}_{S}(\fH_Q')=\frac{1}{n}\Exp_{\vtau}\left[\sup_{h\in\fH_Q'}\sum_{i=1}^{n}\tau_{i}h(\vx_{i})\right].
  \end{equation}
  Let $\tau(\vx)=\sum_{i=1}^{n}\tau_{i}\delta(\vx-\vx_{i})$,
  where $\tau_{i}$'s are i.i.d. random variables with $\Prob(\tau_i=1)=\Prob(\tau_i=-1)=\frac{1}{2}$. We have $\fF[\tau](\vk)=\int_{\Omega}\sum_{i=1}^n\tau_i\delta(\vx-\vx_i)\E^{-2\pi\I \vk^\T \vx}\diff{\vx}=\sum_{i=1}^n\tau_i\E^{-2\pi\I \vk^\T\vx_i}$. Note that
  \begin{align}
    \sup_{h\in\fH_Q'}\sum_{i=1}^{n}\tau_{i}h(\vx_{i})
    = \sup_{h\in\fH_Q'}\sum_{i=1}^{n}\tau_{i}\bar{h}(\vx_{i})
    &= \sup_{h\in\fH_Q'}\sum_{i=1}^{n}\tau_{i}\sum_{\vk\in\sZ^d}\overline{\fF[h](\vk)}\E^{-2\pi\I \vk^\T\vx_i}\\
    &= \sup_{h\in\fH_Q'}\sum_{\vk\in\sZ^d}\fF[\tau](\vk)\overline{\fF[h](\vk)}.
  \end{align}
  By the Cauchy--Schwarz inequality,
  \begin{align}
        &~~\sup_{h\in\fH_Q'}\sum_{\vk\in\sZ^{d}}\fF[\tau](\vk)\overline{\fF[h](\vk)}\nonumber\\
        & \leq \sup_{h\in\fH_Q}\left[\fF[\tau](\vzero)\overline{\fF[h](\vzero)}+\left(\sum_{\vk\in\sZ^{d*}}(\gamma(\vk))^2\abs{\fF[\tau](\vk)}^{2}\right)^{1/2}\left(\sum_{\vk\in\sZ^{d*}}(\gamma(\vk))^{-2}\abs{\overline{\fF[h](\vk)}}^{2}\right)^{1/2}\right]\\
        & \leq c_{0}\abs{\fF[\tau](\vzero)}+Q\left(\sum_{\vk\in\sZ^{d*}}(\gamma(\vk))^2\abs{\fF[\tau](\vk)}^{2}\right)^{1/2}.
  \end{align}
  Since $\Exp_{\vtau}\abs{\fF[\tau](\vzero)}\leq (\Exp_{\vtau}\abs{\fF[\tau](\vzero)}^2)^{1/2}=\sqrt{n}$, $\Exp_{\vtau}\abs{\fF[\tau](\vk)}^{2}=\Exp_{\vtau}\sum_{i,j=1}^{n}\tau_{i}\tau_{j}\E^{-2\pi\I \vk^\T (\vx_{i}-\vx_{j})}=n$, we obtain
  \begin{align}
        \Exp_{\vtau}\left[\sup_{h\in\fH_Q'}\sum_{i=1}^{n}\tau_{i}h(\vx_{i})\right] 
        &\leq c_{0}\sqrt{n}+Q \Exp_{\vtau}\left(\sum_{\vk\in\sZ^{d*}}(\gamma(\vk))^2\abs{\fF[\tau](\vk)}^{2}\right)^{1/2}\\
        &\leq c_{0}\sqrt{n}+Q \left(\Exp_{\vtau}\sum_{\vk\in\sZ^{d*}}(\gamma(\vk))^2\abs{\fF[\tau](\vk)}^{2}\right)^{1/2}\\
        &= c_{0}\sqrt{n}+Q\sqrt{n}\norm{\gamma}_{\ell^2}.
  \end{align}
  This leads to 
  \begin{equation}
        {\rm Rad}_{S}(\fH_Q')\leq\frac{c_{0}}{\sqrt{n}}+\frac{1}{\sqrt{n}}Q\norm{\gamma}_{\ell^2}.
  \end{equation}

  For (ii), the proof is similar to (i). We have
  \begin{equation}
    \Exp_{\vtau}\left[\sup_{h\in\fH_Q}\sum_{\vk\in\sZ^{d}}\fF[\tau](\vk)\overline{\fF[h](\vk)}\right]
    \leq Q\Exp_{\vtau}\left(\sum_{\vk\in\sZ^{d}}(\gamma(\vk))^2|\fF[\tau](\vk)|^{2}\right)^{1/2}
    \leq Q\sqrt{n}\norm{\gamma}_{\ell^2}.
  \end{equation}
  Therefore
  \begin{equation}
    {\rm Rad}_{S}(\fH_Q)\leq \frac{1}{\sqrt{n}}Q\norm{\gamma}_{\ell^2}.
  \end{equation}
\end{proof}

Then, we prove that the target function can be used to bound the FP-norm of the solution of the minimization problem.
\begin{lem}\label{FPnorm Bound}
    Suppose that the real-valued target function $f\in \fF_\gamma(\Omega)$ and that the training dataset $\{(\vx_{i}, y_i)\}_{i=1}^{n}$ satisfies $y_i=f(\vx_{i})$, $i=1,\cdots,n$. If $\gamma: \sZ^{d}\to \sR^+$, then there exists a unique solution $h_{n}$ to the regularized model
  \begin{equation}
    \min_{h-h_{\rm ini}\in 
    \fF_\gamma(\Omega)} \norm{h-h_{{\rm ini}}}_{\gamma},\quad\text{s.t.}\quad h(\vx_i)=y_i,\quad i=1,\cdots,n.\label{eq..FPnormBoundMinimizationProblem}
  \end{equation}
  Moreover, we have
  \begin{equation}
    \norm{h_{n}-h_{{\rm ini}}}_\gamma\leq \norm{f-h_{{\rm ini}}}_\gamma.
  \end{equation}
\end{lem}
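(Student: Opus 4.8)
The plan is to recognize the minimization problem \eqref{eq..FPnormBoundMinimizationProblem} as an instance of problem (D2) in Corollary \ref{cor..EquivalencdHWFrequencyDiscrete}, so that existence and uniqueness of $h_n$ are inherited directly, and then to read off the norm bound from the observation that the target $f$ is itself a feasible point. Concretely, I would set $H_1=\ell^2(\sZ^d)$, let $\Gamma$ be multiplication by $\gamma$ on $\ell^2(\sZ^d)$ (injective since $\gamma>0$, so $\norm{\cdot}_\gamma$ is a genuine norm), and identify $H_\Gamma=\mathrm{Im}(\Gamma)$ via the Fourier series with $\fF_\gamma(\Omega)$, so that $\norm{h-h_{\rm ini}}_\gamma=\norm{\fF[h]-\fF[h_{\rm ini}]}_{H_\Gamma}$ and $h-h_{\rm ini}\in\fF_\gamma(\Omega)$ corresponds to $\fF[h]-\fF[h_{\rm ini}]\in H_\Gamma$; then take $H_2=\sR^n$, $g=\vY$, and let $\fP$ be the sampling operator $\phi\mapsto\big(\sum_{\vk\in\sZ^d}\phi(\vk)\E^{2\pi\I\vx_i^\T\vk}\big)_{i=1}^n=(h(\vx_i))_{i=1}^n$. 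With these identifications \eqref{eq..FPnormBoundMinimizationProblem} is exactly (D2).

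The only hypothesis needing verification, and the main obstacle, is surjectivity of $\fP$ (equivalently, by Lemma \ref{lem..spectrum.positive}, invertibility of $\fP\fP^*$ on $\sR^n$). Following the computation in the proof of Corollary \ref{cor..EquivalencdHWFrequencyDiscrete}, I would show that $\fP\fP^*$ is represented by the $n\times n$ Gram matrix $M_{ij}=\langle v_i,v_j\rangle_{\ell^2(\sZ^d)}$ of the vectors $v_i(\vk):=\gamma(\vk)\E^{-2\pi\I\vx_i^\T\vk}$, which indeed belong to $\ell^2(\sZ^d)$ precisely because $\norm{v_i}_{\ell^2}^2=\sum_{\vk\in\sZ^d}\gamma(\vk)^2=\norm{\gamma}_{\ell^2}^2<\infty$. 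Hence $M$ is invertible once $v_1,\dots,v_n$ are linearly independent, which holds whenever the sample points are pairwise distinct (a harmless assumption, since $y_i=f(\vx_i)$ renders any constraint from a repeated point redundant): if $\sum_i c_i v_i=0$, dividing by $\gamma(\vk)>0$ shows that the measure $\sum_i c_i\delta_{\vx_i}$ on $[0,1]^d$ has all Fourier coefficients zero, hence vanishes, forcing every $c_i=0$.

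Granting surjectivity, Corollary \ref{cor..EquivalencdHWFrequencyDiscrete} yields the unique solution $h_n$ (given by the explicit formula of Theorem \ref{thm..EquivalenceDynamicsMinimization}), and the inequality then follows in one line: since $f,h_{\rm ini}\in\fF_\gamma(\Omega)$ we have $f-h_{\rm ini}\in\fF_\gamma(\Omega)$, and $f(\vx_i)=y_i$ for all $i$ by hypothesis, so $f$ is feasible for \eqref{eq..FPnormBoundMinimizationProblem}; as $h_n$ minimizes $\norm{h-h_{\rm ini}}_\gamma$ over the feasible set, $\norm{h_n-h_{\rm ini}}_\gamma\le\norm{f-h_{\rm ini}}_\gamma$. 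If a self-contained treatment of existence and uniqueness is preferred to invoking Corollary \ref{cor..EquivalencdHWFrequencyDiscrete}, I would instead note that functions in $\fF_\gamma(\Omega)$ have absolutely summable Fourier coefficients, $\sum_{\vk\in\sZ^d}\abs{\fF[h](\vk)}\le\norm{\gamma}_{\ell^2}\norm{h}_\gamma$, so point evaluation is a bounded linear functional on the Hilbert space $(\fF_\gamma(\Omega),\norm{\cdot}_\gamma)$; the feasible set is then a nonempty closed affine subset of $h_{\rm ini}+\fF_\gamma(\Omega)$ and $h_n$ is its point closest to $h_{\rm ini}$ by the Hilbert projection theorem, with the bound following as above.
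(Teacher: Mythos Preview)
Your approach is essentially the same as the paper's: invoke Corollary \ref{cor..EquivalencdHWFrequencyDiscrete} to obtain existence and uniqueness of $h_n$, then observe that $f$ is feasible to get the bound. You are actually more thorough than the paper, which simply asserts the corollary applies without checking the surjectivity hypothesis; your verification via linear independence of the $v_i$ (under distinct sample points) and the alternative route through the Hilbert projection theorem are both sound additions.
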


\begin{proof}
    By the definition of the FP-norm, we have $\norm{h_n-h_{\rm ini}}_\gamma=\norm{\fF[h]_n-\fF[h]_{\rm ini}}_{H_\Gamma}$. According to Corollary \ref{cor..EquivalencdHWFrequencyDiscrete}, the minimizer of problem \eqref{eq..FPnormBoundMinimizationProblem} exists, i.e., $h_n$ exists. Since the target function $f(x)$ satisfies the constraints $f(x_i)=y_i$, $i=1,\cdots,n$, we have $\norm{h_{n}-h_{{\rm ini}}}_\gamma\leq \norm{f-h_{{\rm ini}}}_\gamma$.
\end{proof}

\begin{lem}\label{0freqconstraint}
    Suppose that the real-valued target function $f\in \fF_\gamma(\Omega)$ and the training dataset $\{(\vx_{i}, y_i)\}_{i=1}^{n}$ satisfies $y_i=f(\vx_{i})$, $i=1,\cdots,n$. If $\gamma: \sZ^{d*}\to \sR^+$ with $\gamma^{-1}(\vzero):=0$, then there exists a solution $h_{n}$ to the regularized model
  \begin{equation}
    \min_{h-h_{\rm ini}\in \fF_
    \gamma(\Omega)} \norm{h-h_{{\rm ini}}}_{\gamma},\quad\text{s.t.}\quad h(\vx_i)=y_i,\quad i=1,\cdots,n.
  \end{equation}
  Moreover, we have 
  \begin{equation}
    \Abs{\fF[h_{n}-h_{\rm ini}](\vzero)}
    \leq \norm{f-h_{{\rm ini}}}_{\infty}+\norm{f-h_{{\rm ini}}}_{\gamma}\norm{\gamma}_{\ell^2}.
  \end{equation}
\end{lem}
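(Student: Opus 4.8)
The plan is to follow the blueprint of Lemma~\ref{FPnorm Bound} for the existence statement and the semi-norm comparison, and then to recover the $\vzero$-frequency mode of $h_n-h_{\rm ini}$ from a single interpolation point by Fourier inversion. For existence: since $f\in\fF_\gamma(\Omega)$ and $f(\vx_i)=y_i$ for all $i$, the feasible set is a nonempty closed affine subset of $h_{\rm ini}+\fF_\gamma(\Omega)$. Although $\norm{\cdot}_\gamma$ is now only a semi-norm — it is blind to the $\vzero$-mode — writing a feasible $h$ as its mean plus a mean-zero part shows that minimizing $\norm{h-h_{\rm ini}}_\gamma$ reduces to a least-norm problem in the weighted $\ell^2$-space over the mean-zero part, with the mean then pinned down by the constraints $h(\vx_i)=y_i$; hence a minimizer $h_n$ exists (this is why the lemma asserts existence rather than uniqueness). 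Since $f$ itself is feasible, optimality gives $\norm{h_n-h_{\rm ini}}_\gamma\le\norm{f-h_{\rm ini}}_\gamma$, exactly as in Lemma~\ref{FPnorm Bound}.

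Next, put $g:=h_n-h_{\rm ini}$ and $\tilde f:=f-h_{\rm ini}$, both in $\fF_\gamma(\Omega)$. By Cauchy--Schwarz, $\sum_{\vk\in\sZ^{d*}}\Abs{\fF[g](\vk)}\le\norm{g}_\gamma\norm{\gamma}_{\ell^2}<\infty$, so the Fourier series of $g$ converges absolutely and $g(\vx)=\sum_{\vk\in\sZ^{d}}\fF[g](\vk)\E^{2\pi\I\vx^\T\vk}$ holds pointwise; in particular every function in $\fF_\gamma(\Omega)$ is continuous, so the pointwise evaluations below are legitimate. Evaluating at $\vx_1$ and isolating the $\vk=\vzero$ term gives
\[
  \fF[g](\vzero)=g(\vx_1)-\sum_{\vk\in\sZ^{d*}}\fF[g](\vk)\E^{2\pi\I\vx_1^\T\vk}.
\]
For the first term, $g(\vx_1)=h_n(\vx_1)-h_{\rm ini}(\vx_1)=y_1-h_{\rm ini}(\vx_1)=\tilde f(\vx_1)$, so $\Abs{g(\vx_1)}\le\norm{\tilde f}_\infty$. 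For the second term, inserting $\gamma(\vk)^{\pm1}$ and applying Cauchy--Schwarz bounds its modulus by $\norm{g}_\gamma\norm{\gamma}_{\ell^2}$, which is $\le\norm{\tilde f}_\gamma\norm{\gamma}_{\ell^2}$ by the first paragraph. Adding the two estimates yields the claimed bound $\Abs{\fF[h_n-h_{\rm ini}](\vzero)}\le\norm{f-h_{\rm ini}}_\infty+\norm{f-h_{\rm ini}}_\gamma\norm{\gamma}_{\ell^2}$.

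The only delicate point is the existence claim: because the objective no longer controls the $\vzero$-mode, one must argue carefully that a minimizer still exists (via the reduced least-norm problem over $\sZ^{d*}$, with the mean recovered a posteriori from the constraints) rather than directly invoking the Hilbert-space projection of Corollary~\ref{cor..EquivalencdHWFrequencyDiscrete}; everything else is the same Fourier-inversion-plus-Cauchy--Schwarz computation already used in Lemmas~\ref{Rad bound} and~\ref{FPnorm Bound}. This lemma is precisely what is needed to apply part~(ii) of Lemma~\ref{Rad bound} in the semi-norm case, since that Rademacher bound requires an a~priori control $\Abs{\fF[h](\vzero)}\le c_0$ on candidate solutions.
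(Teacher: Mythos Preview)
Your proposal is correct and follows essentially the same route as the paper: Fourier inversion of $h_n-h_{\rm ini}$ at a training point to isolate the zero mode, then Cauchy--Schwarz on the nonzero modes combined with the semi-norm comparison $\norm{h_n-h_{\rm ini}}_\gamma\le\norm{f-h_{\rm ini}}_\gamma$ from Lemma~\ref{FPnorm Bound}. Your additional care about existence in the semi-norm setting and the absolute convergence justifying pointwise evaluation are welcome refinements that the paper leaves implicit.
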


\begin{proof}
  Let $f'=f-h_{\rm ini}$.
  Since $h_{n}(\vx_{i})-f(\vx_{i})=0$
  for $i=1,\cdots,n$, we have $h_{n}(\vx_{i})-f'(\vx_{i})-h_{{\rm ini}}(\vx_{i})=0$.
  Therefore
  \begin{align}
    \Abs{\fF[h_{n}-h_{\rm ini}](\vzero)}
    &= \Abs{f'(\vx_{i})-\sum_{\vk\in\sZ^{d*}}\fF[h_{n}-h_{\rm ini}](\vk)\E^{2\pi\I\vk^\T\vx_{i}}}\\
    &\leq \norm{f'}_{\infty}+\sum_{\vk\in\sZ^{d*}}\Abs{\fF[h_{n}-h_{\rm ini}](\vk)}\\
    &\leq \norm{f'}_{\infty}+\left(\sum_{\vk\in\sZ^{d*}}(\gamma(\vk))^2\right)^{\frac{1}{2}}\left(\sum_{\vk\in\sZ^{d*}}(\gamma(\vk))^{-2}\Abs{\fF[h_{n}-h_{\rm ini}](\vk)}^{2}\right)^{\frac{1}{2}}\\
    &\leq \norm{f'}_{\infty}+\norm{h_n-h_{\rm ini}}_{\gamma}\norm{\gamma}_{\ell^2}\\
    &\leq \norm{f'}_{\infty}+\norm{f'}_{\gamma}\norm{\gamma}_{\ell^2}.
  \end{align}
  We remark that the last step is due to the same reason as Lemma \ref{FPnorm Bound}.
\end{proof}
Based on above analysis, we derive an \textit{a priori} generalization error bound of the minimization problem. 
\begin{thm}[\textit{a priori} generalization error bound]\label{thm:priorierror}
  Suppose that the real-valued target function $f\in \fF_\gamma(\Omega)$, the training dataset $\{(\vx_{i}, y_i)\}_{i=1}^{n}$ satisfies $y_i=f(\vx_{i})$, $i=1,\cdots,n$, and $h_{n}$ is the solution of the regularized model
  \begin{equation}
    \min_{h-h_{\rm ini}\in \fF_
    \gamma(\Omega)} \norm{h-h_{{\rm ini}}}_{\gamma},\quad\text{s.t.}\quad h(\vx_i)=y_i,\quad i=1,\cdots,n.\label{eq:optf}
  \end{equation}
  Then we have

  (i) given $\gamma: \sZ^{d}\to \sR^+$, 
  for any $\delta\in(0,1)$, with probability at least $1-\delta$ over
  the random training sample, the population risk has the bound
  \begin{equation}
    \RD(h_{n})
    \leq \norm{f-h_{{\rm ini}}}_{\gamma}\norm{\gamma}_{\ell^2}
    \left(\frac{2}{\sqrt{n}}+4\sqrt{\frac{2\log(4/\delta)}{n}}\right).
  \end{equation}

  (ii) given $\gamma: \sZ^{d*}\to \sR^+$ with $\gamma(\vzero)^{-1}:=0$, for any $\delta\in(0,1)$,
  with probability at least $1-\delta$ over the random training sample,
  the population risk has the bound
  \begin{equation}
    \RD(h_{n})
    \leq \left(\norm{f-h_{\rm ini}}_{\infty}+2\norm{f-h_{{\rm ini}}}_{\gamma}\norm{\gamma}_{\ell^2}
    \right)
    \left(\frac{2}{\sqrt{n}}+4\sqrt{\frac{2\log(4/\delta)}{n}}\right).
  \end{equation}
\end{thm}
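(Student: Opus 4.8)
The plan is to run the standard Rademacher-complexity generalization argument on the FP-norm ball, exploiting both the interpolation property of $h_n$ and the sup-norm control that a finite FP-norm provides. First I would note that the minimizer interpolates the data, so $\RS(h_n)=0$ and $\RD(h_n)=\RD(h_n)-\RS(h_n)$. By Lemma~\ref{FPnorm Bound} (existence of $h_n$ coming from Corollary~\ref{cor..EquivalencdHWFrequencyDiscrete}) we have $\norm{h_n-h_{\rm ini}}_\gamma\le\norm{f-h_{\rm ini}}_\gamma=:Q$, so $h_n$ lies in the ball $\fH_Q:=\{h:\norm{h-h_{\rm ini}}_\gamma\le Q\}$; in case (ii), Lemma~\ref{0freqconstraint} additionally bounds $\abs{\fF[h_n-h_{\rm ini}](\vzero)}$, so that $h_n-h_{\rm ini}$ belongs to the class $\fH_Q'$ appearing in Lemma~\ref{Rad bound}(ii). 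Hence $\RD(h_n)\le\sup_{h\in\fH_Q}(\RD(h)-\RS(h))$, and the whole task reduces to a uniform deviation estimate over $\fH_Q$.

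For the uniform deviation I would combine the two usual ingredients. First, McDiarmid's bounded-difference inequality: replacing one sample perturbs $\sup_{h\in\fH_Q}(\RD(h)-\RS(h))$ by $O(1/n)$ times the $L^\infty$-oscillation of the square loss over $\fH_Q$, so this random quantity concentrates about its mean with a $\sqrt{\log(1/\delta)/n}$ correction. Second, symmetrization bounds the mean by $2\Exp_S{\rm Rad}_S(\ell\circ\fH_Q)$, where $\ell\circ\fH_Q=\{(\vx,y)\mapsto\abs{h(\vx)-y}^2:h\in\fH_Q\}$. The one genuinely delicate step is that the square is not globally Lipschitz; the key observation is that Cauchy--Schwarz on the Fourier series gives $\norm{g}_\infty\le\norm{\gamma}_{\ell^2}\norm{g}_\gamma$ for every $g$ of finite FP-norm (in case (ii) the $\vzero$-frequency, which $\norm{\cdot}_\gamma$ does not see, is restored via the bound of Lemma~\ref{0freqconstraint}), so on $\fH_Q$ the residual $h(\vx)-y=h(\vx)-f(\vx)$ is uniformly bounded by a multiple of $\norm{\gamma}_{\ell^2}\norm{f-h_{\rm ini}}_\gamma$; on that range $t\mapsto t^2$ is Lipschitz with a constant of the same order, and Talagrand's contraction lemma reduces ${\rm Rad}_S(\ell\circ\fH_Q)$ to a constant times ${\rm Rad}_S(\{h-f:h\in\fH_Q\})={\rm Rad}_S(\fH_Q)$, the last equality by translation invariance of the (non-absolute) Rademacher complexity used here.

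It then remains to invoke Lemma~\ref{Rad bound} to bound ${\rm Rad}_S(\fH_Q)\le Q\norm{\gamma}_{\ell^2}/\sqrt n$ (with the extra $c_0/\sqrt n$ contribution in case (ii), $c_0$ being the zero-frequency bound of Lemma~\ref{0freqconstraint}), substitute $Q=\norm{f-h_{\rm ini}}_\gamma$, and collect constants, the bounded-difference term likewise scaling with the product $\norm{\gamma}_{\ell^2}\norm{f-h_{\rm ini}}_\gamma$; the $4/\delta$ inside the logarithm reflects a union bound over the symmetrization and concentration steps, and in case (ii) the $\norm{f-h_{\rm ini}}_\infty$ term is inherited from $c_0$. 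The hard part — and the only place the hypothesis $f\in\fF_\gamma(\Omega)$ is essentially used — is this sup-norm/contraction step: one must first establish that every candidate interpolant in the relevant ball is uniformly bounded before the square loss can be handled; everything else is bookkeeping built on Lemmas~\ref{FPnorm Bound}, \ref{0freqconstraint}, and~\ref{Rad bound}.
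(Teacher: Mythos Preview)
Your plan mirrors the paper's proof almost exactly: both confine $h_n$ to the ball $\fH_Q=\{h:\norm{h-h_{\rm ini}}_\gamma\le Q\}$ via Lemma~\ref{FPnorm Bound} (together with Lemma~\ref{0freqconstraint} in case (ii)), use $\RS(h_n)=0$, bound the sup-norm over $\fH_Q$ by Cauchy--Schwarz on Fourier coefficients, invoke Lemma~\ref{Rad bound}, and insert everything into a standard Rademacher generalization inequality. The only presentational difference is that the paper quotes the inequality
\[
\abs{\RD(h)-\RS(h)}\le 2\,{\rm Rad}_S(\fH_Q)+2\sup_{h,h'\in\fH_Q}\norm{h-h'}_\infty\sqrt{\tfrac{2\log(4/\delta)}{n}}
\]
directly (with the Rademacher complexity of the \emph{function} class rather than the loss class), whereas you unpack it through McDiarmid, symmetrization, and Talagrand contraction. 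One caution on your version: the Lipschitz constant of $t\mapsto t^2$ on the range you identify is itself of order $Q\norm{\gamma}_{\ell^2}$, so contraction contributes an extra such factor, and likewise the bounded-difference constant for the \emph{square} loss is of order $(Q\norm{\gamma}_{\ell^2})^2$, not $Q\norm{\gamma}_{\ell^2}$ as you state; carried through honestly your route gives a bound quadratic in $\norm{f-h_{\rm ini}}_\gamma\norm{\gamma}_{\ell^2}$, not the linear one in the statement. The paper's proof does not visibly address this point either---its displayed inequality effectively treats the loss as $1$-Lipschitz in $h$---so this is a shared loose end rather than a defect specific to your approach.
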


\begin{proof}
  Let $f'=f-h_{{\rm ini}}$ and $Q=\norm{f'}_{\gamma}$.

  (i) Given $\gamma:\sZ^d\to\sR^+$, we set $\fH_Q=\{h: \norm{h-h_{{\rm ini}}}_{\gamma}\leq Q\}$.
  According to Lemma \ref{FPnorm Bound}, the solution of problem \eqref{eq:optf} $h_{n}\in\fH_Q$. By the
  relation between generalization gap and Rademacher complexity \citep{bartlett2002rademacher,shalev2014understanding},
  \begin{equation}
  \abs{\RD(h_{n})-L_S(h_{n})}\leq 2{\rm Rad}_{S}(\fH_Q)+2\sup_{h,h'\in\fH_Q}\norm{h-h'}_{\infty}\sqrt{\frac{2\log(4/\delta)}{n}}.
  \end{equation}
  One of the component can be bounded as follows\textcolor{blue}{{} }
  \begin{align}
    \sup_{h,h'\in\fH_Q}\norm{h-h'}_{\infty} 
    & \leq \sup_{h\in\fH_Q}2\norm{h-h_{{\rm ini}}}_{\infty}\\
    & \leq \sup_{h\in\fH_Q}2\max_{\vx}\Abs{\sum_{\vk\in\sZ^{d}}\fF[h-h_{\rm ini}](\vk)\E^{2\pi\I \vk^\T\vx}}\\
    & \leq \sup_{h\in\fH_Q}2\sum_{\vk\in\sZ^{d}}\Abs{\fF[h-h_{\rm ini}](\vk)}\\
    & \leq 2\sup_{h\in\fH_Q}\left(\sum_{\vk\in\sZ^{d}}(\gamma(\vk))^2\right)^{\frac{1}{2}}\left(\sum_{\vk\in\sZ^{d}}(\gamma(\vk))^{-2}\Abs{\fF[h-h_{\rm ini}](\vk)}^{2}\right)^{\frac{1}{2}}\\
    & \leq 2Q\norm{\gamma}_{\ell^2}.
  \end{align}
  By Lemma \ref{Rad bound},
  \begin{equation}
    {\rm Rad}_{S}(\fH_Q)\leq \frac{1}{\sqrt{n}}Q\norm{\gamma}_{\ell^2}.
  \end{equation}
  By optimization problem \eqref{eq:optf}, $L_S(h_{n})\leq L_S(f')=0$.
  Therefore we obtain 
  \begin{equation}
    \RD(h)\leq \frac{2}{\sqrt{n}}\norm{f'}_{\gamma}\norm{\gamma}_{\ell^2}+4\norm{f'}_{\gamma}\norm{\gamma}_{\ell^2}\sqrt{\frac{2\log(4/\delta)}{n}}.
  \end{equation}

  (ii) Given $\gamma: \sZ^{d*}\to \sR^+$ with $\gamma(\vzero)^{-1}:=0$, set $c_0=\norm{f'}_{\infty}+\norm{f'}_{\gamma}\norm{\gamma}_{\ell^2}$. By Lemma \ref{Rad bound},
  \ref{FPnorm Bound}, and \ref{0freqconstraint}, define $\fH_Q'=\{h:\norm{h-h_{{\rm ini}}}_{\gamma}\leq Q,\abs{\fF[h-h_{\rm ini}](\vzero)}\leq c_0\}$,
  we obtain 
  \begin{equation}
    {\rm Rad}_{S}(\fH_Q')
    \leq \frac{1}{\sqrt{n}}\norm{f'}_{\infty}+\frac{2}{\sqrt{n}}\norm{f'}_{\gamma}\norm{\gamma}_{\ell^2}.
  \end{equation}
  Also 
  \begin{align}
    \sup_{h,h'\in\fH_Q'}\norm{h-h'}_{\infty} 
    & \leq \sup_{h\in\fH_Q'}2\sum_{\vk\in\sZ^{d}}\Abs{\fF[h-h_{\rm ini}](\vk)}\\
    & \leq 2\sup_{h\in\fH_Q'}\left[\Abs{\fF[h-h_{\rm ini}](\vzero)}+\left(\sum_{\vk\in\sZ^{d*}}(\gamma(\vk))^2\right)^{\frac{1}{2}}\left(\sum_{\vk\in\sZ^{d*}}(\gamma(\vk))^{-2}\Abs{\fF[h-h_{\rm ini}](\vk)}^{2}\right)^{\frac{1}{2}}\right]\\
    & \leq 2\norm{f'}_{\infty}+4\norm{f'}_{\gamma}\norm{\gamma}_{\ell^2}.
  \end{align}
  Then 
  \begin{equation}
    \RD(h_{n})\leq \frac{2}{\sqrt{n}}\norm{f'}_{\infty}+\frac{4}{\sqrt{n}}\norm{f'}_{\gamma}\norm{\gamma}_{\ell^2}+\left(4\norm{f'}_{\infty}+8\norm{f'}_{\gamma}\norm{\gamma}_{\ell^2}\right)\sqrt{\frac{2\log(4/\delta)}{n}}.
  \end{equation}
\end{proof}
\begin{rmk}
  By the assumption in the theorem, the target function $f$ belongs to $\fF_\gamma(\Omega)$ which is a subspace of $L^2(\Omega)$. In most applications, $f$ is also a continuous function. In any case, $f$ can be well-approximated by a large neural network due to universal approximation theory \citep{cybenko1989approximation}.
\end{rmk}

Our a priori generalization error bound in Theorem \ref{thm:priorierror}
is large if the target function possesses significant high frequency
components. Thus, it explains the failure of DNNs in generalization for learning the
parity function \citep{shalev2017failures}, whose power concentrates at high
frequencies. In the following, We use experiments to illustrate that, as predicted by our a priori generalization error bound, larger FP-norm of the target function indicates a larger generalization error.

\section{Numerical experiments}\label{sec:exps}
In this section, we conduct numerical experiments to validate the effectiveness of LFP model for two-layer ReLU and Tanh networks. In addition, we would show that, with sufficient samples, the test error still increases as the frequency of the target function increases. The procedure to numerically solve the LFP model can be found in Appendix \ref{sec:numsolveopt}\footnote{The code can be found at \url{https://github.com/xuzhiqin1990/LFP}}.
\subsection{The effectiveness of LFP model}
Without the last term in Eq. \eqref{eq..lfpoperatorthm.ReLU} arising from the evolution of $\vw$, we would show that the simplified LFP model in \ref{eq..lfpoperatorthm.simpleReLU} can still predict the learning results of two-layer wide NNs. 

For 1d input example of ReLU NN, when the term of $1/\vxi^4$ dominates, as shown in Fig. \ref{fig:1drelu}(a), the NN interpolates training data by a smooth function (denoted by $f_{NN}$, red solid), which nearly overlaps with the prediction of LFP model (denoted by $f_{LFP}$ and the cubic spline interpolation (grey dashed). On the contrary,  when the term of $1/\vxi^2$ dominates, as shown in Fig. \ref{fig:1drelu}(b), the NN interpolates training data by a function, which nearly overlaps with the prediction of LFP model and the linear spline interpolation. These result are consistent with above analysis. 

For 2d input example of ReLU NN, we consider the XOR problem, which cannot be solved by one-layer neural networks \citep{minsky2017perceptrons}. The training samples consist of four
points represented by black stars in Fig. \ref{fig:2drelu}(a). As
shown in Fig. \ref{fig:2drelu}(b), our LFP model predicts accurately outputs of the well-trained NN over the input domain $[-1,1]^{2}$.

For two-layer Tanh NN, the weight coefficient decays exponentially w.r.t. the frequency no matter which part dominates, thus, the NN always learns the training data by a smooth function, as shown in Fig. \ref{fig:1dtanh}.

\begin{center}
\begin{figure}
\begin{centering}
\subfloat[]{\includegraphics[scale=0.5]{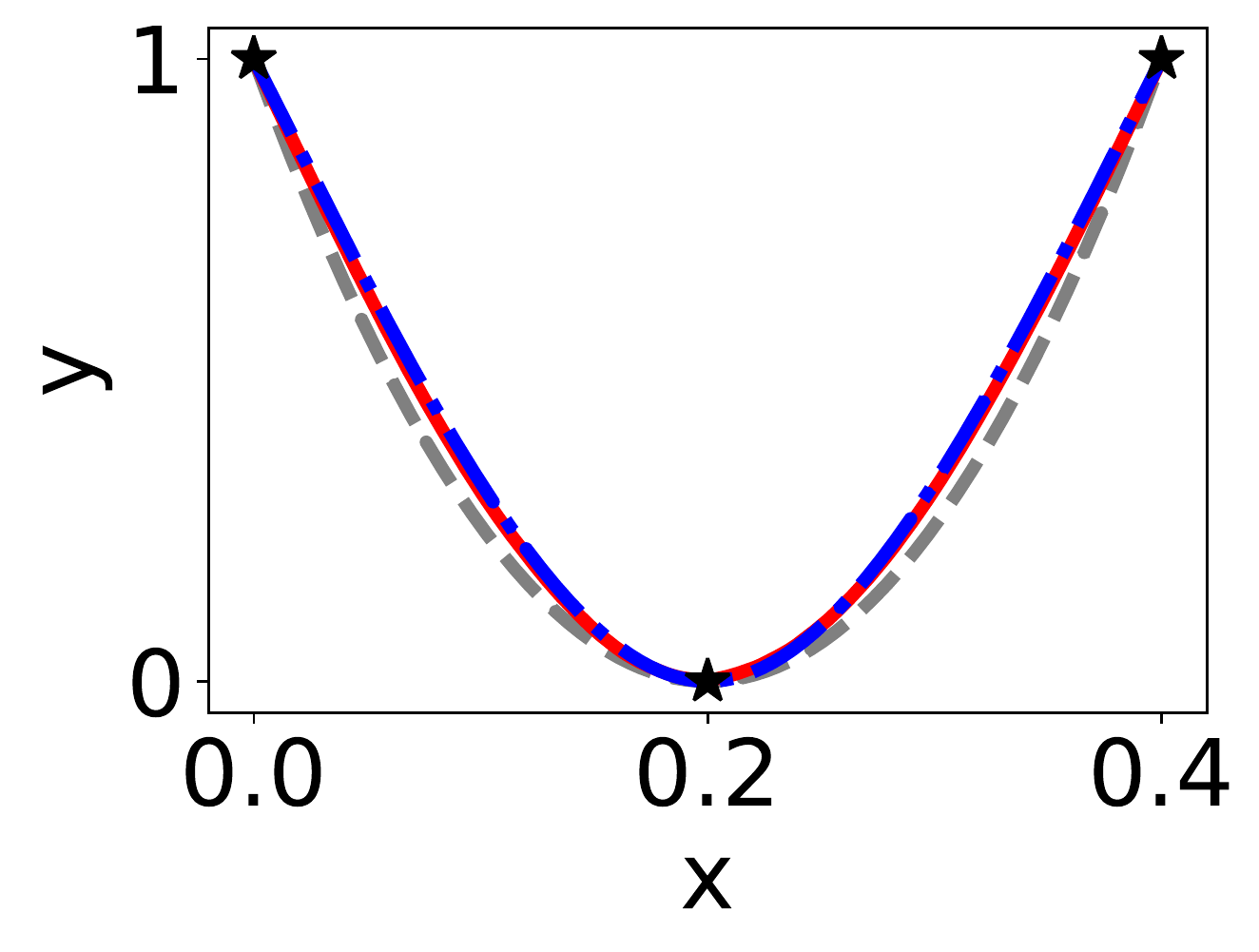}}
\subfloat[]{\includegraphics[scale=0.5]{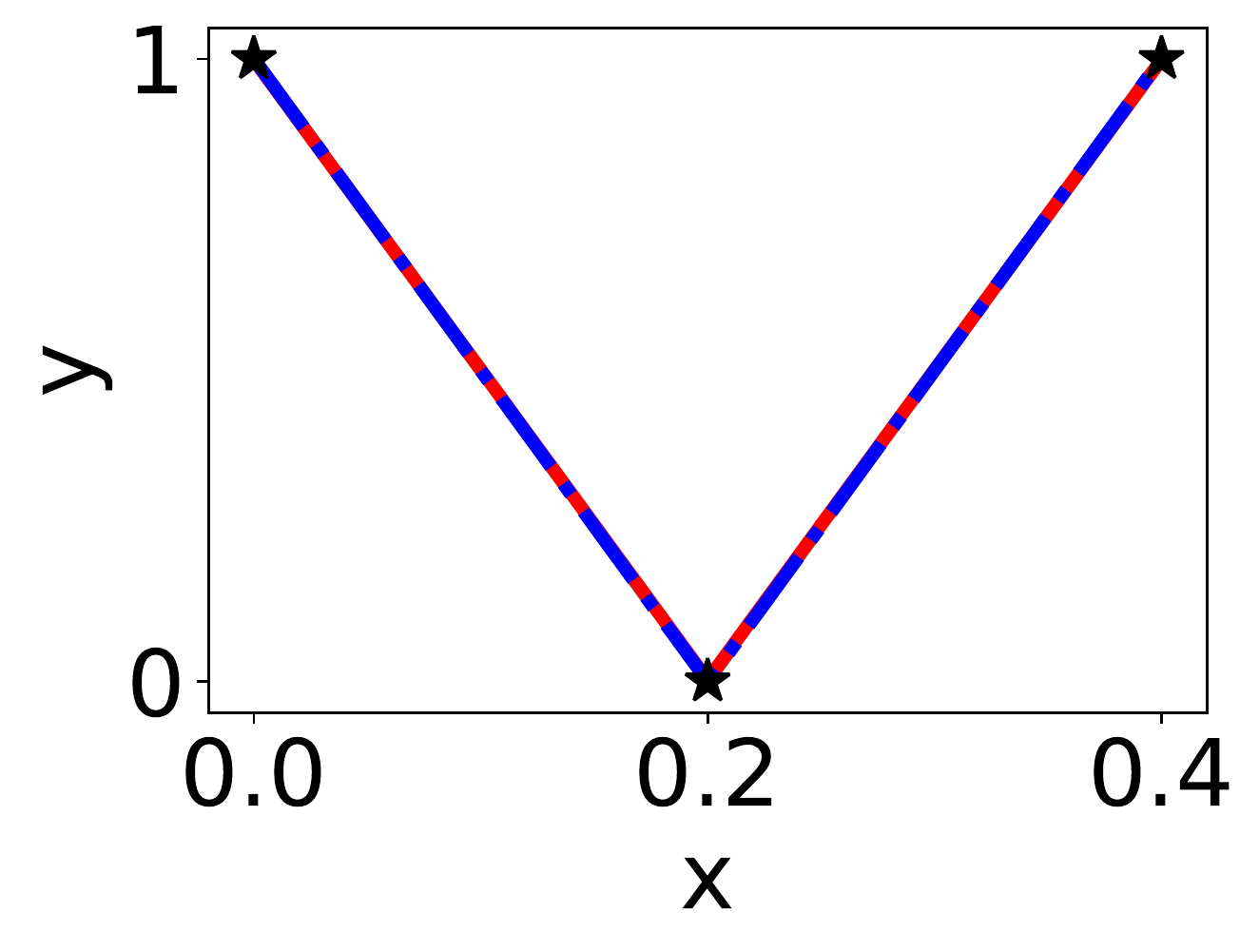}}
\par\end{centering}
\caption{$f_{\rm NN}$ (red solid) vs. $f_{\rm LFP}$ (blue dashed dot) vs. splines (grey dashed, cubic spline for (a) and linear spline for (b)) for a $1$-d problem. All curves nearly overlap with one other. Two-layer ReLU NN of $10000$ hidden neurons is initialized with (a) $\left\langle r^{2}\right\rangle _{r}\gg \left\langle a^{2}\right\rangle _{a}$, and (b) $\left\langle r^{2}\right\rangle _{r}\ll \left\langle a^{2}\right\rangle _{a}$. Black stars indicates training data. \label{fig:1drelu} }
\end{figure}
\par\end{center}

\begin{center}
\begin{figure}
\begin{centering}
\subfloat[]{\includegraphics[scale=0.5]{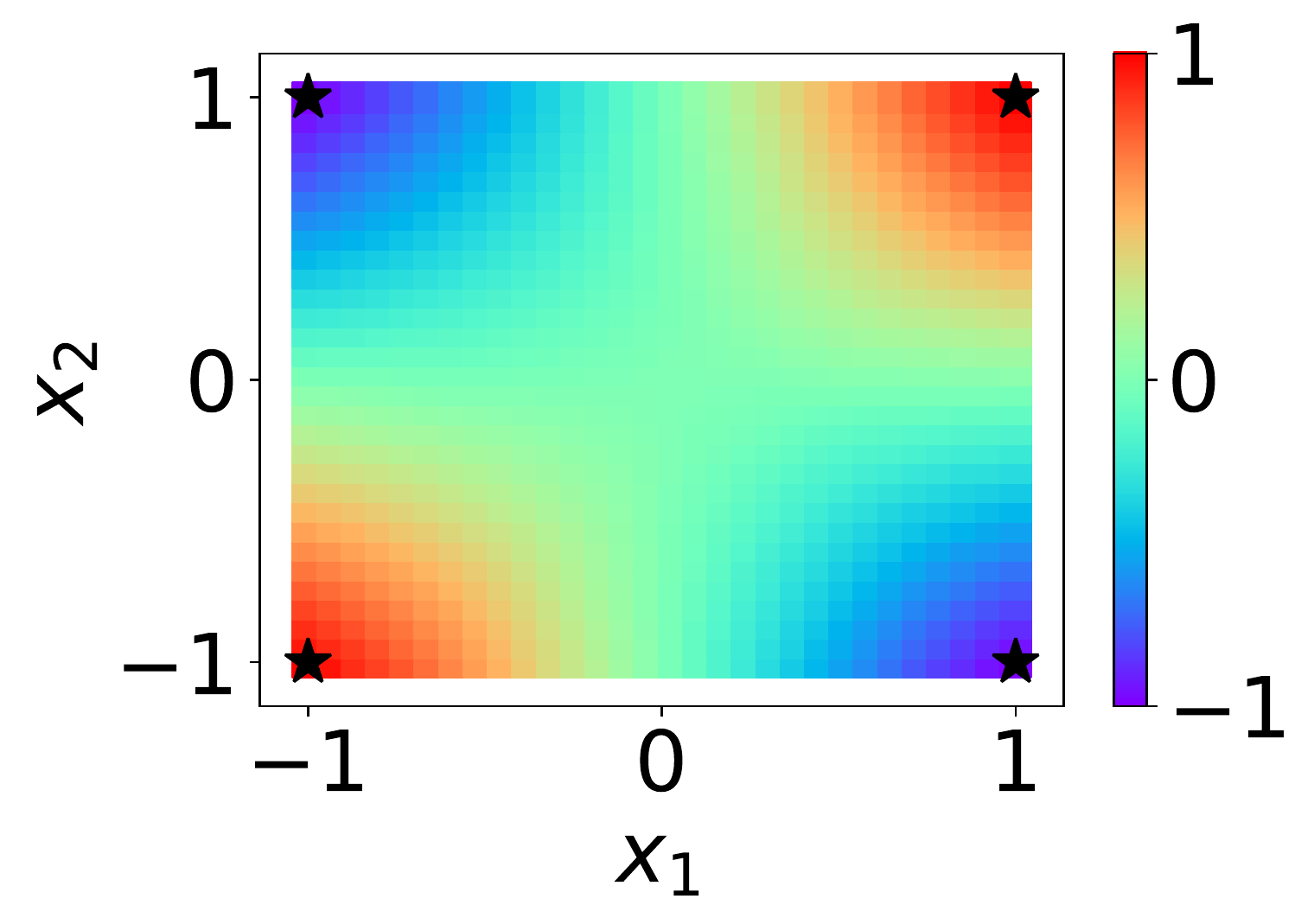}}
\subfloat[]{\includegraphics[scale=0.5]{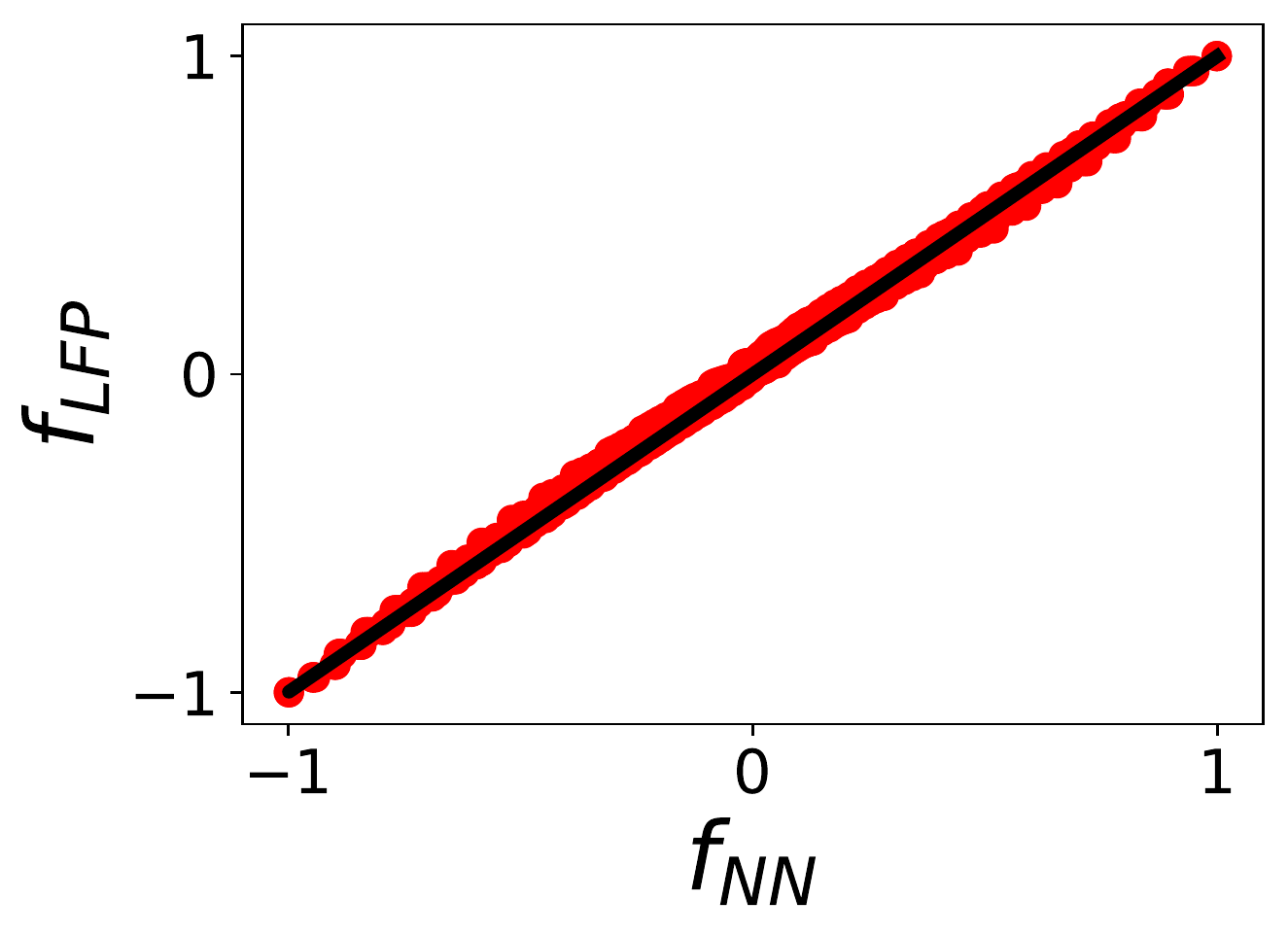}}
\par\end{centering}
\caption{$2$-d XOR problem with four training data indicated by black stars learned by a two-layer ReLU NN of $80000$ hidden neurons. (a) $f_{\rm NN}$ illustrated in color scale. (b) $f_{\rm LFP}$ (ordinate) vs. $f_{\rm NN}$ (abscissa) represented by red dots evaluated over whole input domain $[-1,1]^2$. The black line indicates the identity function.  \label{fig:2drelu} }
\end{figure}
\par\end{center}

\begin{center}
\begin{figure}
\begin{centering}
\subfloat[]{\includegraphics[scale=0.5]{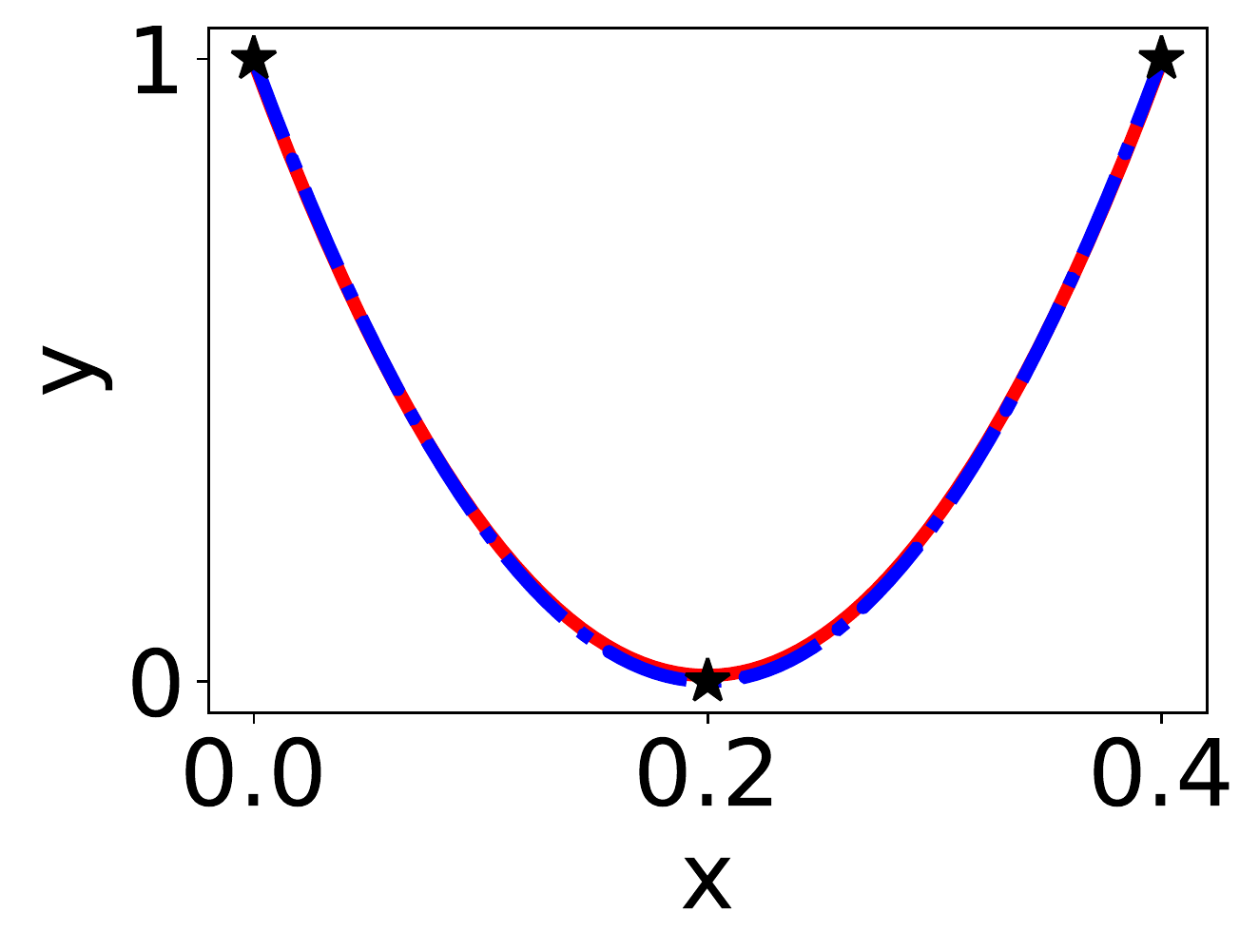}}
\subfloat[]{\includegraphics[scale=0.5]{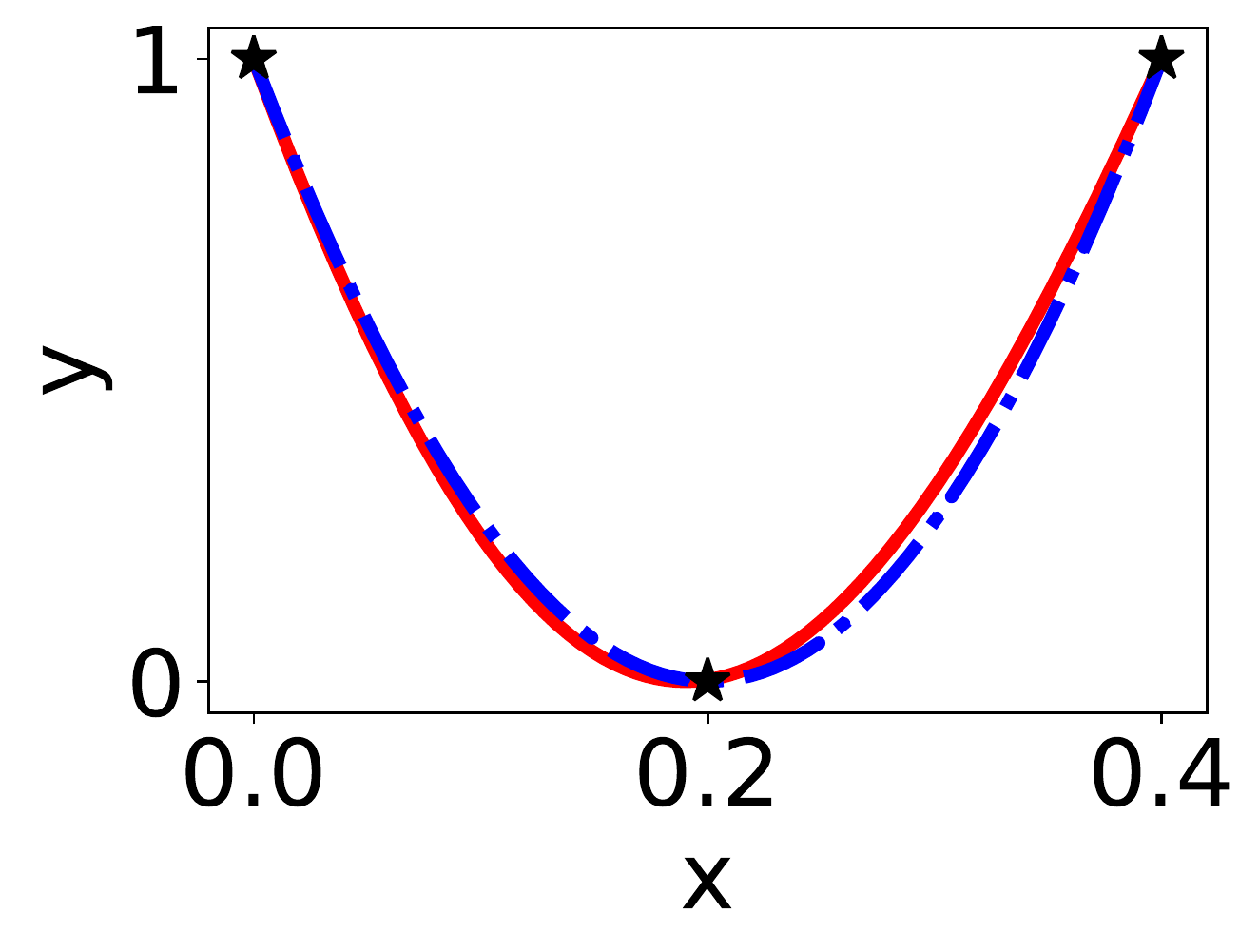}}
\par\end{centering}
\caption{$f_{\rm NN}$ (red solid) vs. $f_{\rm LFP}$ (blue dashed dot) for a $1$-d problem. Two-layer tanh NN of $10000$ hidden neurons is initialized with (a) $\left\langle r^{2}\right\rangle _{r}\gg \left\langle a^{2}\right\rangle _{a}$, and (b) $\left\langle r^{2}\right\rangle _{r}\ll \left\langle a^{2}\right\rangle _{a}$.  Black stars indicates training data. \label{fig:1dtanh} }
\end{figure}
\par\end{center}

\subsection{Generalization error}

In this section, we train a ReLU-NN of width 1-5000-1 to fit 20
uniform samples of $f(x)=\sin(2\pi vx)$ on $[0,1]$ until the training
MSE loss is smaller than $10^{-6}$, where $v$ is the frequency. The number of training sample is sufficient to recover the frequency of the target function by the Nyquist sampling theorem. We then use 500 uniform samples
to test the NN. As the frequency of the target function increases, the FP-norm would increase, thus, leading to a looser bound of the generalization error. As shown in Fig. \ref{fig:fpnorm}, the test error increases as the frequency of the target function increases. 

\begin{center}
\begin{figure}
\begin{centering}
\includegraphics[scale=0.6]{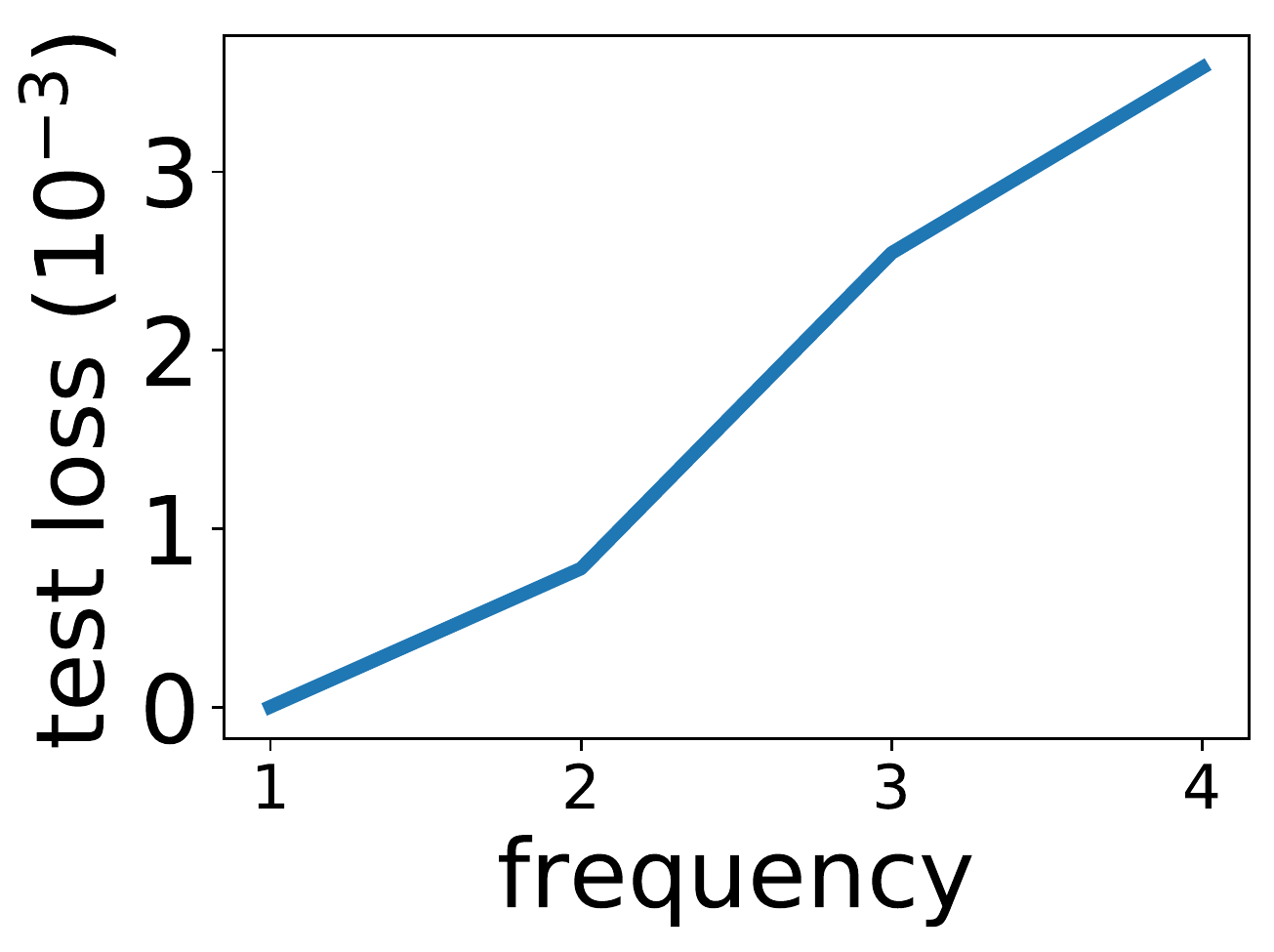} 
\par\end{centering}
\caption{Test loss are plotted as a function of frequency $v$ of the target function $\sin(2\pi vx)$.\label{fig:fpnorm} }
\end{figure}
\par\end{center}

\section{Discussion}\label{sec:discussion}

In this work, inspired by the F-Principle, we derive an
LFP model for two layer wide NNs --- a model quantitatively well predicts the output
of two-layer ReLU and tanh NNs in an extremely over-parameterized regime. We explicitize
the implicit bias of the F-Principle by a constrained optimization
problem equivalent to the LFP model. This explicitization leads to
an \textit{a priori} estimate of the generalization error bound, which
depends on the FP-norm of the target function. Note
that, our LFP model for other transfer functions can also be derived  similarly.

The LFP model advances
our qualitative/empirical understandings of the F-Principle to a quantitative
level. i) With ASI trick \citep{zhang_type_2019} offsetting the initial
DNN output to zero, the LFP model indicates that the F-Principle also
holds for DNNs initialized with large weights. Therefore, ``initialized
with small parameters'' \citep{xu_training_2018,xu2019frequency}
is not a necessary condition for the F-Principle. ii) Based on the training behavior of F-Principle, previous works \citep{xu_training_2018,xu2019frequency,rahaman2018spectral}
speculate that ``DNNs prefer to learn the training data by a low
frequency function''. With an equivalent optimization problem explicitizing
the F-Principle, this speculation is demonstrated theoretically by the LFP model.

Our \textit{a priori} generalization error bound increases as the FP-norm
of the target function increases. This explains several important
phenomena. First, DNNs fail to generalize well for the parity function
\citep{shalev2017failures}. \citet{xu2019frequency} shows that this
is due to the inconsistency between the high frequency dominant property
of the parity function and the low frequency preference of DNNs. In
this work, by our \textit{a priori} generalization error bound, the dominant
high frequency of the parity function quantitatively results in a
large FP-norm, thus, a large generalization error. Second, because
randomly labeled data possesses large high frequency components, which
induces a large FP-norm of any function well matches the training
data and test data, we expect a very large generalization error, e.g., no generalization,
as observed in experiments. Intuitively, our estimate indicates
good generalization of NNs for well-structured low-frequency
dominant real dataset as well as bad generalization of NNs for randomly labeled data, thus providing
insight into the well known puzzle of generalization of DNNs \citep{zhang2016understanding}.

The F-Principle, a widely observed implicit bias of DNNs, is also
a natural bias for human. Empirically, when a human see several
points of training data, without a specific prior, one tends to interpolate
these points by a low frequency dominant function. Therefore, the success of
DNN may partly result from its adoption of a similar interpolation
bias as human's. In general, there could be multiple types of implicit biases underlying the training dynamics of a DNN. Inspired
by the LFP model, discovering and explicitizing these implicit biases
could be a key step towards a thorough quantitative understanding
of deep learning.

% \section{Goal}

% discuss different activation, e.g., sin and cos

% 1.~exact computation of the dynamics

% 2.~explicit solution (with understanding, explanation, mechanism\ldots)

% 3.~quadratic optimization problem with a proper norm

% 4.~estimation error

% 5.* uncertainty

\appendix
\section{Fourier transform table \label{FTtable}}
We list some useful and well-known results for one-dimensional Fourier
transform in Table \ref{1dtable}.
\begin{table}[h]
    \renewcommand{\arraystretch}{1.8}
    \centering
    \begin{tabular}{lc}
        \toprule
        Function of $x$    & Fourier transform with respect to $x$                    \\
        \midrule
        $g(ax)$            & $\frac{1}{\abs{a}}\fF[g](\frac{\xi}{a})$                 \\
        $g(x-c)$           & $\fF[g](\xi)\E^{-2\pi\I c\xi}$                           \\
        $x^k g(x)$         & ${(\frac{\I}{2\pi})}^k \frac{\D^k}{\D \xi^k}\fF[g](\xi)$ \\
        $g^{(k)}(x)$       & ${(2\pi\I\xi)}^k\fF[g](\xi)$                             \\
        $1$                & $\delta(\xi)$                                            \\
        $x^k$              & ${(\frac{\I}{2\pi})}^{k}\delta^{(k)}(\xi)$               \\
        $\delta(x-x_0)$    & $\E^{-2\pi\I x_0 \xi}$                                   \\
        $H(x)$ (Heaviside) & $\frac{1}{\I2\pi\xi}+\frac{1}{2}\delta(\xi)$             \\
        $\ReLU(x)$         & $-\frac{1}{4\pi^2\xi^{2}}+\frac{\I}{4\pi}\delta'(\xi)$   \\
        % $\mathrm{ELU}(x)$         & $-\frac{\alpha}{4\pi^2\xi^{2}(1-2\pi\I\xi)}+\frac{\I}{4\pi}\delta'(\xi)$-$\frac{\alpha}{2}\delta(\xi)$   \\
        $\tanh(x)$         & $-\I\pi\csch(\pi^2\xi)$                                  \\
        $\Sigmoid(x)$      & $-\I\pi\csch(2\pi^2\xi)+\frac{1}{2}\delta(\xi)$          \\
        $\sech^{2}(x)$     & $2\pi^2\xi\csch(\pi^2\xi)$                               \\
        $x\sech^{2}(x)$    & $\I\pi\left(1-\pi^2\xi\coth(\pi^2\xi)\right)
            \csch(\pi^2\xi)$                                                          \\
        \bottomrule
    \end{tabular}
    \caption{Fourier transform for $1$-dimensional functions.\label{1dtable}}
\end{table}
We also list
some useful and well-known results for high-dimensional Fourier transform in Table \ref{2dtable}.
\begin{table}[h]
    \centering
    \renewcommand{\arraystretch}{1.8}
    \begin{tabular}{lc}
        \toprule
        Function of $\vx$                     & Fourier transform with respect to $\vx$               \\
        \midrule
        $g(a\vx)$                             & $\frac{1}{\abs{a}^d}\fF[g](\frac{\vxi}{a})$           \\
        $\delta(\vx-\vx_0)$                   & $\E^{-2\pi\I\vxi^\T\vx_0 }$                           \\
        $g(\vnu^\T\vx)$ (unit vector $\vnu$)  & $\delta_{\vnu}(\vxi)
            \fF[g](\vxi^\T\vnu)$                                                                      \\
        $g(\vw^\T\vx+b)$                      & $\delta_{\vw}(\vxi)\fF[g](
            \frac{\vxi^\T\hat{\vw}}{\norm{\vw}})\E^{2\pi\I\frac{b}{\norm{\vw}}
                \vxi^\T\hat{\vw}}$                                                                    \\
        $g(\vw^\T\vx+\norm{\vw}c)$            & $\delta_{\vw}(\vxi)\fF[g](
            \frac{\vxi^\T\hat{\vw}}{\norm{\vw}})\E^{2\pi\I c\vxi^\T\hat{\vw}}$                        \\
        $\vx g(\vx)$                          & $\frac{\I}{2\pi}\nabla \fF[g](\vxi)$                  \\
        $\vx^{\perp}g(\vw^\T\vx+\norm{\vw}c)$ & $\frac{\I}{2\pi}
            \nabla_{\vxi^{\perp}}\left[\delta_{\vw}(\vxi)\fF[g](
                \frac{\vxi^\T\hat{\vw}}{\norm{\vw}})\E^{2\pi\I c\vxi^\T\hat{\vw}}\right]$             \\
        $\vx g(\vw^\T\vx+b)$                  & $\frac{\I}{2\pi}\nabla_{\vxi}\left[\delta_{\vw}(\vxi)
                \fF[g](\frac{\vxi^\T\hat{\vw}}{\norm{\vw}})\E^{2\pi\I b\vxi^\T\hat{\vw}/
                    \norm{\vw}}\right]$                                                               \\
        \bottomrule
    \end{tabular}
    \caption{Fourier transform for $d$-dimensional functions\label{2dtable}}
\end{table}

\section{Numerically solve the optimization problem}\label{sec:numsolveopt}
Numerically, we solve the following problem 
\begin{equation}
\min_{a_{n},b_{n}}\sum_{i=1}^{M}\left(\sum_{j\in I}\left[a_{j}\sin(2\pi\frac{j}{L'}x_{i})+b_{j}\cos(2\pi\frac{j}{L'}x_{i})\right]-y_{i}\right)^{2}+\epsilon\sum_{j\in I}w(2\pi\frac{j}{L'})^{-1}\left(a_{j}^{2}+b_{j}^{2}\right),
\end{equation}
where we set $I=\{0,\cdots,\frac{L'}{L}K-1\}$, $L'=10L$, $L$ is the range of the training inputs,
$K=200$ which is much larger than the number of training samples,
$\epsilon=10^{-6}$. We can rewrite the above problem into the vector
form 
\begin{equation}
\min_{\boldsymbol{a}}\left(E\boldsymbol{a}-Y\right)^{T}\left(E\boldsymbol{a}-Y\right)+\epsilon\boldsymbol{a}^{T}W^{-1}\boldsymbol{a},
\end{equation}
where 
\[
\boldsymbol{a}=[a_{0},\cdots,a_{\frac{L'}{L}K-1},b_{0},\cdots,b_{\frac{L'}{L}K-1}]^{T},
\]
\[
E=[\sin(2\pi\frac{0}{L'}X),\cdots,\sin(2\pi(\frac{K}{L}-1)X),\cos(2\pi\frac{0}{L'}X),\cdots,\cos(2\pi(\frac{K}{L}-1)X)],
\]
\[
X=[x_{1},\cdots,x_{M}]^{T},\quad Y=[y_{1},\cdots,y_{M}]^{T}.
\]
The
solution of the above problem satisfies 
\begin{equation}
E^{T}\left(E\boldsymbol{a}-Y\right)+\epsilon W^{-1}\boldsymbol{a}=0.
\end{equation}
Then $\boldsymbol{a}$ is solved as 
\begin{equation}
\boldsymbol{a}=\left[E^{T}E+\epsilon W^{-1}\right]^{-1}E^{T}Y.
\end{equation}

\section*{Acknowledgements}
Z.X. is supported by National Key R\&D Program of China (2019YFA0709503), and Shanghai Sailing Program. This work is also partially supported by HPC of School of Mathematical Sciences at Shanghai Jiao Tong University.
\bibliography{DLRef}
\end{document}